\newtheorem{theorem}{Theorem} 
\newtheorem{definition}[theorem]{Definition}
\newtheorem{lemma}[theorem]{Lemma}
\newtheorem{corollary}[theorem]{Corollary}
\newtheorem{remark}[theorem]{Remark}
\newcommand{\MRF}{I}
\newcommand{\CMRF}{\overline{I}}
\newcommand{\MRFMCMC}{\Gamma}
\newcommand{\ApproximateMRFMCMC}{\tilde{\Gamma}}
\newcommand{\HGL}{H}
\newcommand{\M}{M}
\newcommand{\N}{N}
\newcommand{\ignore}[1]{}
\def\shownotes{1}  \ifnum\shownotes=1
\newcommand{\authnote}[2]{[#1: #2]}
\newcommand{\authnote}[2]{}
\newcommand{\note}[1]{\fbox{\parbox{16cm}{#1}}}
\title{Learning to Sample from Censored Markov Random Fields }
\author{Ankur Moitra \thanks{Department of Mathematics, Massachusetts Institute of Technology. Email: {\tt moitra@mit.edu}. This work was supported in part by NSF CAREER Award CCF-1453261, NSF Large CCF-1565235, a David and Lucile Packard Fellowship, an Alfred P. Sloan Fellowship and an ONR Young Investigator Award.} \\ MIT 
\and Elchanan Mossel \thanks{Department of Mathematics, Massachusetts Institute of Technology. Email: {\tt elmos@mit.edu}. This work was supported in part by NSF award DMS-1737944, Simons Investigator in Mathematics award (622132) and Vannevar Bush Faculty Fellowship ONR-N00014-20-1-2826}  \\ MIT \and Colin Sandon \thanks{Department of Mathematics, Massachusetts Institute of Technology. Email: {\tt csandon@mit.edu}. This work was supported in part by NSF award DMS-1737944 and Office of Naval Research Award N00014-17-1-2598}  \\ MIT}
\begin{document}

\maketitle

\begin{abstract}
We study learning Censor Markov Random Fields (abbreviated CMRFs). These are Markov Random Fields where some of the nodes are censored (not observed). We present an algorithm for learning high temperature CMRFs within $o(n)$ transportation distance. 
Crucially our algorithm makes no assumption about the structure of the graph or the number or location of the observed nodes. 
We obtain stronger results for high girth high temperature CMRFs as well as computational lower bounds indicating that our results can not be qualitatively improved. 

\end{abstract}

\setcounter{MaxMatrixCols}{20}

%\begin{document}

%\tableofcontents
%\vspace{3 cm}

\newpage

\section{Introduction}
Graphical models provide a rich framework for describing high-dimensional distributions in terms of their
dependence structure. The problem of learning undirected graphical models from data has attracted much attention 
in statistics, artificial intelligence and theoretical computer science, including work 
on learning graphs of bounded degrees~\cite{AbKoNg:06,BrMoSl:08,Bresler:15,KlivansMeka:17} and under conditions of correlation decay~\cite{WaRaLa:06,BrMoSl:08}. 

It is natural to ask if learning can be also performed when some of the nodes are censored (i.e. we do not observe their values). Arguably, this question is more interesting than the question of learning graphical models with no censored nodes.
Note in particular, that this problem includes problems of learning generative deep networks \cite{hinton2009deep}. 
For models with hidden nodes there are many computational hardness results that are known to hold both in the worst-case~\cite{KlivansSherstov:09} and in average-case sense
~\cite{DanielyShalevShwartz:16,DanielyVardi:20} even for two layer networks. 
Even for tree models with hidden nodes, without additional assumption, learning is as hard as learning parity with noise~\cite{MosselRoch:05}. The results above either show that it is impossible to learn the function that relates the input to the output, or that it is impossible to learn (properly) the structure of the network. 
In a different direction~\cite{BoMoVa:08} considered computational questions related to the distributions generated 
by Markov Random Fields with hidden nodes. In particular, \cite{BoMoVa:08} shows that it is computationally hard to decide given samples coming from one of two MRFs whose distribution is far in total variation, which one they are coming from. 

In the positive direction, much is known for trees, where learning with no hidden nodes goes back to the 60s~\cite{ChowLiu:68}, and where learning tree models with hidden nodes, is a fundamental task in machine learning and Phylogenetic reconstruction.
Under non-degeneracy conditions, algorithms have been developed to learn the structure of 
such models~\cite{ErStSzWa:99a,Mossel:07} as well as their parameters~\cite{Chang:96,MosselRoch:05}. 
There are very few other models for which learning with hidden models has been established. 
These include sparse Ferromagnetic Ising models that are triangle-free and where hidden vertices are separated~\cite{BrMoSl:08}, 
%incoherent Gaussian models~\cite{},  
treelike models with strong correlation decay~\cite{AnandkumarValluvan:13},  and Ferromagnetic Ising Boltzman machines~\cite{BrKoMo:19}.

\subsection{Our Contributions} 
Existing work predominantly considers structure learning, where the goal is to learn the structure of the underlying graph.
When there are hidden nodes, it is clear that some additional assumptions are needed in order to learn the structure of the graph. 
For example, in~\cite{BrMoSl:08,AnandkumarValluvan:13} different strong assumptions are made on the dispersion of hidden nodes 
inside the graph. Indeed, easy examples show that such assumptions are necessary. This can be seen for example where the set of observed nodes is very small (say empty or contains one element).  As a different elementary example, consider that in a censored Ising model, having a censored vertex with two edges has exactly the same effect on the probability distribution of the visible vertices as simply having an edge of the appropriate weight between those two vertices. 

A major contribution of the current paper is introducing a new learning model that can side-step the need to make any such assumption. The inputs to the model are independent samples from a censored Markov Random Field\footnote{ A censored Markov Random Field (CMRF from now on) is a Markov Random Field with some censored vertices whose values are omitted from samples as explained in definition \ref{CMRF}.} and the desired output is an efficient algorithm that approximately samples from the distribution. We call this new learning model, {\em learning to sample}. Not only does this allow us to provide learning algorithms without conditions on the dispersion of hidden nodes, it also provides a new conceptual framework for learning distributions.  

For example, one of the known criticism of structure learning MRFs, is that learning the structure is often not the end goal.
It is well known that sampling from a given MRF, even if it is a bounded degree Ising model is computationally hard unless $NP=RP$, 
see~\cite{Sly:10,SlySun:12}, the references within and follow up work. Thus the description of the distribution we get when we perform structure learning might not actually help us perform the down-stream tasks that we want. 
%In other words, the results of~\cite{Sly:10,SlySun:12} provide examples where even given the MRF, it is computationally hard to learn to sample. 

In our main technical results we show that it is possible to learn to sample CMRF in the high-temperature regime. 
The algorithm requires time $poly_{\epsilon}(n)$ to get within error $\epsilon n$ in earthmover distance. 
We obtain stronger results, where the error is $o(n)$ if the underlying CMRF has girth $\omega(1)$. 
We also provide a number of indications that to some extent, our results are the best one can hope for.
First, in the high temperature regime, we prove that if we can return a MRF whose distribution on the observed nodes is $o(n)$ in earth mover distance from the true distribution, then we could solve the parity with noise problem in polynomial time. 
We further show that for general low temperature models, assuming the existence of one way functions, learning to sample from a CMRF is hard.
Finally we show that the difficulty in learning to sample is indeed computational rather than statistical by providing an exponential time algorithm that learns to sample from a CRMFs using only polynomially many samples and without any restrictions.

%\note{Annoyingly, the negative result based on hardness of learning parities shows that proper learning is hard. It does not rule out the existence of more accurate algorithms that learn to sample from a CMRF.}

% In this paper we study the difficulty of learning a markov random field from samples that omit certain variables. We conclude that it is significantly harder than learning from complete samples, and that it is probably intractible in the general case. We also conclude that in the high temperature case we can efficiently learn a markov random field to a reasonable degree of accuracy, but that we still cannot match the level of accuracy we could attain with complete samples unless we make additional assumptions about the field.

%\subsection{Background}

\ignore{
The Ising model was defined as a model of the equilibrium probability distribution of interacting spins in a grid, and Markov random fields were originally introduced as a generalization of the Ising model to general graphs. The Glauber dynamics was proposed as a model of the process by which the system reaches equilibrium, and it can easily be shown that the probability distribution of the system converges to the corresponding Markov random field. Unfortunately, it can take the Glauber dynamic exponentially long to converge, and in some cases it is NP-hard to get a good approximation of the probability distribution of a given Markov random field. However, when vertices are sufficiently close to being independent in an appropriate sense, the Glauber dynamic converges quickly, allowing to use it to efficiently approximate the distribution of the field, as explained in \cite{DrorThesis}.

\note{Many references are missing in the paragraph above. Also a better narrative is to say that MRF are important and therefore two important tasks are sampling and learning. Then discuss both starting from the general and going towards the particular}

In addition to sampling from a known MRF, there has also been work on the reverse problem, that of learning the parameters of an MRF that one has samples from. If we have an upper bound $d$ on the degrees of the vertices in the graph corresponding to the MRF, then we can easily approximate the parameters to a high degree of accuracy in $O(n^{d+1})$ time by simply checking how well each vertex's value is predicted by each set of $d$ other vertices. \\ \note{I think it is actually $n^{2d+1}$ and a little more complex. Given citation} 

\cite{MRFlearning1} and \cite{MRFlearning2} improve on this by finding algorithms that approximate the parameters in $O(n^2)$ time with constants that depend on the maximum degree of the graph and maximum weight of edges. \cite{LearningWithFailures} improves on this further by proving that one can learn the parameters in $\widetilde{O}(n^2)$ time even if each entry in each sample is independently flipped or omitted with some fixed probability. \cite{NoisyLearning} finds an efficient way to learn the parameters of a related model when some fraction of the samples are corrupted.
}

\subsection{Terminology}
Write $[n]$ for $\{1,...,n\}$. Given $X\in\mathbb{R}^m, i \in [m], S \subset [m]$, we let 
\[
X_S := (X_j : j \in S), \quad, X_{-S} := X_{[m] \setminus S} = (X_j : j \notin S), \quad, X_{-i} := X_{-\{i\}}.
\]

%$X_{-i}\in\mathbb{R}^{m-1}$ be $X$ with its $i$th entry removed, $X_S\in \mathbb{R}^{|S|}$ be the entries of $X$ with indices in $S$, and $X_{-S}\in\mathbb{R}^{m-|S|}$ be the entries of $X$ with indices not in $S$.

A Markov random field is a set of variables associated with the vertices of a graph, such that each vertex is correlated with its neighbors, and independent of the rest of the graph conditioned on the values of its neighbors. In this paper we focus on the case where there are two possible values for each variable and all correlations between variables can be decomposed into pairwise correlations. 
As such, we recall the following definition of the {\em Ising model}.

\begin{definition}
Given $n>0$, an $n\times n$ symmetric matrix $\M$ with diagonal entries equal to $0$, and $\theta\in\mathbb{R}^n$, the {\em Ising model} with edge weights $\M$ and biases $\theta$ is the probability distribution over $\{-1,1\}^n$ such that if $X$ is drawn from this distribution and $x\in\{-1,1\}^n$ then
\[
\mathbb{P}[X=x]=\frac{\exp(\theta\cdot x+ \frac{1}{2} x\cdot \M x )}{\sum_{y \in\{-1,1\}^n} \exp(\theta\cdot y+\frac{1}{2}y \cdot \M y)}.
\]
We denote this probability distribution $\MRF_{(\M,\theta)}$.
% and denote by $Z_{A,\theta} = \sum_{y \in\{-1,1\}^n} \exp(\theta\cdot y+\frac{1}{2}y \cdot Ay)$  its {\em partition function}. For a fixed $X\sim \MRF_{(A,\theta)}$ and a vertex $i$, we say that $i$ has an {\em energy} of $-(\theta_i+ (AX)_i)$.
\end{definition}

The function $Z_{\M,\theta} = \sum_{y \in\{-1,1\}^n} \exp(\theta\cdot y+\frac{1}{2}y \cdot \M y)$ is called the 
{\em partition function}. For a fixed $X\sim \MRF_{(\M,\theta)}$ and a vertex $i$, we say that $i$ has an {\em energy} of $-(\theta_i+ (\M X)_i)$. We call the model {\em $d$-sparse}  if in each row of $M$ has at most $d$ nonzero entries. 

% Better to avoid ' when possible
\begin{remark} Almost all of our results generalize easily to other Markov Fields. To simplify the notation we state and prove the results for Ising models. A notable exception is our result for learning CMRFs in the correlation decay regime stated in Theorem~\ref{thm:high_girth}
\end{remark} 
%note{Remark: If we allowed variables to have more than two possible values, most of the results here would carry over with at most minor modifications. However, the algorithm for learning CMRFs on high girth high temperature graphs would no longer work because it would become possible for a set of vertices that are pairwise independent to have strong overall dependencies. Allowing more complicated correlations on cliques would not change anything important.}

In order to discuss a Markov random field where not all of the variables are observed, we define the following.

\begin{definition}\label{CMRF}
Given $n>0$, an $n\times n$ symmetric matrix $\M$ with diagonal entries equal to $0$, $\theta\in\mathbb{R}^n$, and $S\subseteq [n]$, the {\em censored markov random field} with edge weights $\M$, biases $\theta$, and visible vertices $S$ is the probability distribution of $X_S = (X_i : i \in S)$,  where $X\sim \MRF_{(\M,\theta)}$. We write this distribution $\CMRF_{S,(\M,\theta)}$ and call the vertices of $S^c$ its censored vertices.
\end{definition}

Our main interest is in learning CMRF in the sense that we can efficiently approximately sample from their distribution.

Thus, we introduce a {\em new} definition for learning CMRFs which requires the learning algorithm to be able to efficiently sample from a distribution that is close to the true MRF distribution. This definition is new even for MRFs.
Arguably, both for MRFs and for CMRFs, and in other settings of high dimensional distributions, it is the most natural definition of what it means to efficiently learn a distribution, particularly when learning the hidden dependencies is not in general possible.

%As such, there is no possible way to fully determine the parameters from samples. The best we can hope for is to characterize the probability distribution of the visible vertices in some reasonable way. In order to make that precise, we define the following.

\begin{definition}
Let $P$ be a probability distribution over some set $\Omega$, and $d$ be a distance on distributions. 
Let $A$ be a randomized algorithm that takes a possibly random number of samples in $\Omega$ and returns a value in $\Omega$. Given $x\in \Omega^\infty$, let $P(A,x)$ denote the probability distribution of the output of $A$ if the first sample it receives is $x_1$, the second sample it receives is $x_2$ and so on. We say that $A$ learns $P$ with distortion $\epsilon$ with respect to $d$ if
\[
E_{x\sim P^\infty}[d(P(A,x),P)] \leq \epsilon. 
\]
\end{definition}

\begin{remark}
This definition is subtle, and undergirds the entire paper. First, it is natural to wonder why a simple algorithm, such as outputting the first sample from the sequence $x$, does not accomplish the task above. The key point is the distance between the output of $A$ and $P$ is computed {\em after} we condition on the sequence. Thus an algorithm cannot merely parrot its input. The algorithm does not necessarily need to learn the parameters that define $P$ (such as the interactions and external fields in an Ising model) but it does need to have some other efficient mechanism for representing it so that it can generate genuinely new samples. 
\end{remark}

In other words, an algorithm learns a given probability distribution with low $d$ distortion, if for most random inputs, the probability distribution of the algorithm's output when it is given those samples is similar to the original probability distribution. Throughout this paper, we will be studying the difficulty of learning a CMRF with a low earthmover distortion.

\begin{definition}
Let $P,Q$ be two probability distributions on the same space $\Omega^n$. The {\em earth-mover} distance between $P,Q$, denoted $W(P,Q)$ is given by:
\[
W(P,Q) = \inf \{ E_{\mu}[\sum_{i=1}^n 1(x_i \neq y_i)] : \mu_1 = P, \mu_2 = Q \},
\]
where the infimum is taken over all couplings $\mu$ of $P,Q$. 
If $A$ is a randomized algorithm with output in $\Omega^n$, and $P(A,x)$ denotes the distribution of the output 
of $A$ on input $x$, we write $W_Q(A,Q)$ for $E[W(Q,P(A,X))]$, where $X = X_1,X_2,\ldots$, where 
$X_i$ are i.i.d. samples from $Q$.
\end{definition} 

Our goal in this paper is to find efficient algorithms $A$ for which $W_{\CMRF_{S,(\M,\theta)}}(A,\CMRF_{S,(\M,\theta)})$ is small. 

\subsection{Main Result}

A key insights of decades of research on Markov random field distinguishes between two types of regimes for MRFS. 
In the {\em low temperature} regime, 
\begin{enumerate}
\item 
the parameters ($\M_{v,u}$ in our case) are ``big" 
\item There is long range correlation 
\item Glauber dynamics converges slowly to the stationary distribution.
\end{enumerate} 
Recall also that more recent contribution shows that in the low temperature regime, there are graphs where approximate sampling is hard unless $NP=RP$. 

In the {\em high temperature regime}  
\begin{enumerate}
\item
The parameters ($\M_{v,u}$ in our case) are small 
\item There is no long range correlation 
\item  Glauber dynamics mixes rapidly. 
\end{enumerate}

%The connection between temperature, decay of correlation and rate of converge for Glauber dynamics goes back to the work of Dobrushin and Dobrushin and Shlosman~\cite{Dobrushin:70,Kunsch:82,DobrushinShlosman:85}, see also~\cite{Martinelli:99,Weitz:04} among many other references. 
Given the computational hardness of sampling from given low-temperature MRFs, we can only expect to learn to sample from high-temperature CMRF. This is formalized in Theorem~\ref{thm:1way} where we prove that hardness of learning CMRFs given one way functions. The reason this is not immediate from the hardness of sampling for low temperature models, is that the results proving hardness of sampling for MRFs are given as input the specification of the MRF, while we are given samples from the CMRF. 

In our main positive result we show that it is possible to learn (sufficiently) high temperature Ising models. 
\begin{theorem} \label{thm:high_temp_rec_intro}
For high temperature Ising models, for every $\epsilon > 0$, there is a polynomial time algorithm that learns
to sample CMRFs with earth-mover error $\epsilon n$ and polynomial time. 
\end{theorem}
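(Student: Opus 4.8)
The plan is to simulate an idealized Glauber-type dynamics that lives entirely on the visible coordinates and whose stationary distribution is exactly $\CMRF_{S,(\M,\theta)}$, and then to show this idealized chain can be approximated well enough using only samples. The starting structural observation is that single-site resampling from a \emph{visible conditional} preserves the \emph{visible marginal}: define a Markov chain $\Gamma$ on $\{-1,1\}^S$ that at each step picks $i\in S$ uniformly and replaces the value at coordinate $i$ by a fresh draw from $\mathbb{P}_{\CMRF}[X_i=\cdot\mid X_{S\setminus\{i\}}=x_{S\setminus\{i\}}]$. A one-line detailed-balance computation shows $\Gamma$ is reversible with respect to $\CMRF_{S,(\M,\theta)}$, and it is clearly irreducible, so its stationary distribution is the target. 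I would then show $\Gamma$ mixes in $O(n\log n)$ steps in the high temperature regime by bounding the Dobrushin influence $\rho_{ij}$ of visible coordinate $j$ on the update law of visible coordinate $i$: sum over all paths from $j$ to $i$ whose interior lies in $S^c$ the product of one-step influences $\tanh|\M_{uv}|$ along the path. The high-temperature hypothesis makes these path sums converge geometrically and yields $\max_i\sum_j\rho_{ij}<1$, giving the mixing bound by path coupling; the same estimate shows $\Gamma$ is a contraction, which we use below.

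Next, since we do not know $(\M,\theta)$, we estimate the conditionals used by $\Gamma$ from samples. By correlation decay, for each $i\in S$ the law $\mathbb{P}_{\CMRF}[X_i=\cdot\mid X_{S\setminus\{i\}}=x]$ is, up to total variation $\delta$, a function only of $x$ restricted to the set $N_i$ of visible vertices within graph-distance $r=O(\log(1/\delta))$ of $i$ counting only edges of weight above a threshold $\eta'=\mathrm{poly}(\delta)$; thresholding bounds the degree by $O(1/\eta')$, so $|N_i|$ is a number depending only on $\epsilon$ once $\delta=\mathrm{poly}(\epsilon)$ is fixed. Crucially we do not recover the true interaction structure — the parity-with-noise lower bound forbids this — it is enough to find \emph{some} small coordinate set on which the conditional essentially depends, grown greedily by testing, from the empirical distribution of the samples, which further visible coordinates still exert non-negligible conditional influence. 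Having fixed $N_i$, estimate $\widehat p_i(x_{N_i})\approx\mathbb{P}_{\CMRF}[X_i=1\mid X_{N_i}=x_{N_i}]$ by empirical frequency; a Chernoff-and-union-bound argument shows that with $\mathrm{poly}_\epsilon(1)$ samples $\widehat p_i$ is within $\delta$ of the truth on all $x_{N_i}$ of non-negligible probability, and on the remaining rare configurations we may define $\widehat p_i$ arbitrarily since they are essentially never hit.

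Let $\tilde\Gamma$ be the chain that behaves like $\Gamma$ but uses the learned conditionals $\widehat p_i$. Initialize $\tilde\Gamma$ at an actual sample $x_1$ from $\CMRF_{S,(\M,\theta)}$, run $\tilde\Gamma$ for $T=O(n\log n)$ steps, and output the result. Two things must be checked. First, $T$ is at least the mixing time of $\Gamma$, so the output of $\Gamma$ from $x_1$ would be within $\epsilon n/2$ in earthmover distance of the target even conditioned on the input sequence — in particular the algorithm is not merely ``parroting'' $x_1$. Second, couple $\Gamma$ and $\tilde\Gamma$ from $x_1$ using common site choices and a maximal coupling of each resampling: each step creates an expected extra discrepancy of at most $O(\delta)$ at one coordinate, and because $\Gamma$ contracts in the Dobrushin sense, a discrepancy created at step $t$ has a summable expected footprint at step $T$, so the total expected number of discrepant coordinates at the end is $O(\delta n)$ regardless of $T$. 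Taking $\delta=c\epsilon$ for small constant $c$ makes this $\le\epsilon n/2$, and the triangle inequality for earthmover distance finishes the proof; $|N_i|$, the sample count, $T$, and the per-step work are all $\mathrm{poly}_\epsilon(n)$.

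The step I expect to be the main obstacle is the second one together with its interaction with the third: making precise the sense in which the censored conditional law is ``approximately local,'' certifying that a greedily grown conditioning set $N_i$ really captures it (the effective interactions among visible vertices induced by chains of censored vertices can be dense, so one must argue via weighted path sums rather than combinatorial neighborhoods), and — most delicately — showing that the per-step estimation errors do not accumulate over the $\Theta(n\log n)$ steps of the simulation, which is precisely where the contraction property from the first step is needed.
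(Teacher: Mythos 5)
Your overall architecture is the same as the paper's: run an approximate Glauber dynamics on the visible coordinates using learned local conditionals, and use the Dobrushin contraction both to bound the mixing time of the idealized chain and to prevent the per-step estimation errors from accumulating (this is exactly Theorem~\ref{robustSampling}, whose proof couples three copies of $\ApproximateMRFMCMC$ and shows the discrepancy stays at $O(\epsilon n/(1-\beta))$ for all $t$). The error-accumulation worry you flag at the end is therefore handled, and your choice to initialize at a fresh sample rather than at $\{1\}^{|S|}$ is an inessential variant.

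The genuine gap is in how you find the conditioning sets $N_i$. You propose to grow $N_i$ greedily by testing ``which further visible coordinates still exert non-negligible conditional influence,'' i.e.\ a single-coordinate stopping criterion. This criterion does not certify that $\mathbb{P}[X_i=1\mid X_{N_i}]$ is within $\epsilon$ of $\mathbb{P}[X_i=1\mid X_{S\setminus\{i\}}]$: censoring induces genuinely higher-order dependencies among visible vertices, so a set of coordinates can have large joint influence while each individual coordinate has influence below any fixed threshold $\tau$ (the paper's own gadget $H_{b[k]}$ is the extreme case, where the focus vertices are $(2k+1)$-wise independent yet jointly dependent). Telescoping over the $n$ coordinates outside $N_i$ only gives a bound of $n\tau$, which is useless, and the geometric path-sum bound of Corollary~\ref{cor2} that would rescue this requires knowing which vertices are close in the \emph{weighted hidden graph}, which the algorithm does not. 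The paper's Lemma~\ref{lem:2neighborhoods} resolves this differently: it brute-forces over all candidate sets $S^\star_1$ of size at most $d^r$ and accepts $S^\star_1$ only if $D(S^\star_1,S^\star_1\cup S^\star_2)\le\epsilon/2$ for \emph{every} small set $S^\star_2$ (tested on empirical conditionals). Correctness then follows because Corollary~\ref{cor1} guarantees a small witness set $S''$ (the radius-$r$ visible neighborhood) with $D(S'',S\setminus\{v\})\le\epsilon/4$; this $S''$ both passes the test itself and, via the triangle inequality, disqualifies any candidate with $D(S^\star_1,S\setminus\{v\})>\epsilon$. To repair your argument you should replace the greedy single-coordinate test with this set-against-set saturation test (or some equivalent certification against an existentially guaranteed small witness set).
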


See Theorem~\ref{thm:high_temp_rec} for the formal statement of the theorem. 
Interestingly, in Theorem~\ref{thm:girth} we improve the result above by obtaining a distribution whose 
 earth-mover distance $o(n)$ for CMRFs whose girth is $\omega(1)$ 

It is natural to ask if it is possible to learn with $o(n)$ error in earth-mover distance for general high-temperature 
CMRFs.  
Theorem~\ref{thm:parity} provides an indication that learning to sample with error $o(n)$ is computationally hard by showing that if there is an algorithms that returns a CMRFs whose output is within $o(n)$ of the given high temperature CMRF, then it is possible to learn sparse parities with noise. 

Finally, we show that the obstacle for learning to sample MRFs is computational and not information-theoretical as 
we show in Theorem~\ref{thm:exp_time} that given exponential time it is possible to learn to sample all CMRFs 
within total variation and earth-mover distance $o(1)$.

\section{Approximate Glauber Dynamics at High Temperatures}

The connection between temperature, decay of correlation and rate of converge for Glauber dynamics goes back to the work of Dobrushin and Dobrushin and Shlosman~\cite{Dobrushin:70,Kunsch:82,DobrushinShlosman:85}, see also~\cite{Martinelli:99,Weitz:04} among many other references. In this section we extend the results of Dobrushin and Dobrushin and Shlosman to the context of learning CMRFs. The main observation is that the arguments of Dobrushin and Dobrushin and Shlosman can be extended to much more general scenarios.
Since the proofs are similar to classical arguments in this theory we defer them to the appendix.

We will consider a natural generalization of Glauber dynamics as given in Algorithm~\ref{alg:general}. Usually Glauber dynamics is run in the setting where the parameters of the MRF are known, in which case we can set $$f_v = \mathbb{P}_{X\sim \MRF_{(\M,\theta)}}[X_{v}=1|X_{-v}=x_{-v}]$$
and after sufficiently many steps it would generate a sample from $I_{(M, \theta)}$. The key idea behind our approach is that while we will not be able to learn $M$ and $\theta$ in any reasonable sense, we can still approximate the above marginal distribution with another function $f_v$ that we can learn from samples, even if some of the variables are censored. In this section we show that a good enough approximation $f_v$ to the true marginal distribution is sufficient to guarantee that Algorithm~\ref{alg:general} outputs a sample from a distribution that is close to $I_{(M, \theta)}$ after a polynomial number of steps, provided that we are in a high-temperature setting.

%Natural choices of $f_v$ include $f_v = \mathbb{P}_{X\sim \MRF_{(\M,\theta)}}[X_{v}=1|X_{-v}=x_{-v}]$ and $f_v$ being the constant $0$ or $1$. 
%In particular, with such setting we can obtain the following special case, of standard Glauber dynamics on $W$ with ``boundary conditions" in $W^c$: 

\begin{algorithm}
%\SetKwInOut{Input}{Input}
%\SetKwInOut{Output}{Output}

\caption{$\ApproximateMRFMCMC$ - Approximate MRF MCMC} 
\label{alg:general}

\begin{algorithmic}
\STATE Input: A positive integer $T$, $x^{(0)}\in\{-1,1\}^n$, and a function $f$.

\STATE Output: An attempt at a sample from the desired distribution.

\STATE Set $x=x^{(0)}$.

\FOR{$0\le t< T$}
\STATE Randomly select $v \in [n]$.
\STATE Randomly select $p$ from the uniform distribution on $[0,1]$.
\IF{$p< f_v(x_{-v})$} 
\STATE set $x_{v}=1$ 
\ELSE 
\STATE set $x_{v}=-1$ 
\ENDIF
\ENDFOR

\RETURN $x$.

\end{algorithmic}
\end{algorithm}

%At this point, it looks like any of the computations we might want to perform on a CMRF are intractible in the worst case. A reasonable next step is to look for conditions under which the behavior of a CMRF is easier to understand. One option would be to limit the degrees of the vertices in the CMRF's graph. If every vertex has degree $0$ then it is simply a set of independent random variables, and if every vertex has degree one then each vertex is correlated with at most one other vertex. With maximum degree two, each visible vertex is independent of all other visible vertices conditioned on the values of the closest visible vertices on either side of it, so we can simply search for two vertices with that property in order to learn enough of the CMRF's structure to sample from it. However, none of these cases are particularly interesting, and we can reexpress any CMRF as a CMRF with maximum degree $3$ by replacing each vertex with a ring of vertices that are equal to each other with very high probability and attaching each of the original vertex's edges to a different element of the ring.

%Another option is to limit the edge weights. Given sufficiently tight restrictions on the edge weights, we can sample from an MRF with known parameters by starting with an arbitrary possible value, and then repeatedly picking a single vertex and drawing a new value for it from its probability distribution given its neighbors' values. This is known as Glauber dynamics, and can be carried out by the following algorithm.

\begin{algorithm}

\caption{$\MRFMCMC$ - Glauber Dynamics with Boundary Conditions} 
\label{alg:boundary}

\begin{algorithmic}

\STATE Input: The parameters $M,\theta$ of the desired Ising model, an initial value $x^{(0)}\in\{-1,1\}^{n}$, a positive integer $\tau$, and a set $W\subseteq [n]$.
\STATE Output: An attempt at a sample from the Ising model conditioned on the given values of the vertices not in $W$.

\STATE Set $x=x^{(0)}$.

\FOR{$0\le t< \tau$} 
\STATE Randomly select $v \in W$.
\STATE Randomly select $p$ from the uniform distribution on $[0,1]$.
\IF{$p< \mathbb{P}_{X\sim \MRF_{(\M,\theta)}}[X_{v}=-1|X_{-v}=x_{-v}]$} 
\STATE set $x_{v}=-1$. 
\ELSE
\STATE set $x_{v}=1$.
\ENDIF
\ENDFOR
\RETURN $x$.
\end{algorithmic}
\end{algorithm}

%\note{Colin: We might want to rename $S$ here as well, since it has nothing to do with the set of visible vertices.}

We can also specialize Algorithm~\ref{alg:general} in a different way to obtain a standard variant of Glauber dynamics on $W$ with ``boundary conditions" in $W^c$. See Algorithm~\ref{alg:boundary}. 
The fact that $v$ will always be in $W$ ensures that variables that are not in $W$ will maintain their original value. However, as $T$ goes to infinity, the probability distribution of the output of $\MRFMCMC(\M,\theta, x^{(0)},T, W)$ will get arbitrarily close to the probability distribution of $X\sim \MRF_{(\M,\theta)}$ conditioned on $X_{-W}=x^{(0)}_{-W}$. 
As mentioned earlier, in low temperatures it could take an exponentially large number of time steps for the probability distribution of $\MRFMCMC(\M,\theta, x^{(0)},T, W)$ to give a reasonable approximation of the desired probability distribution.
However, in high temperatures a polynomially large $\tau$ is sufficient. 
%This rate of convergence of Glauber dynamics is related to some of the most fundamental work in statistical physics
%({\em cite Dobrushin, DS}) see also ({\em cite Martinelli Lecture Notes, Dror's thesis, MWW}). 
In order to formalize this, we will use a slight variant of the definitions used in \cite{Weitz:04}:

\begin{definition}
Given a vertex $v$, we define the {\em total influence {\bf on}} $v$ as:
\[
 \sum_{u\ne v} \max_{x\in\{-1,1\}^n} \left|\mathbb{P}[X_{v}=1|X_{u}=1,X_{-\{v,u\}}=x_{-\{v,u\}}]-\mathbb{P}[X_{v}=1|X_{u}=-1,X_{-\{v,u\}}=x_{-\{v,u\}}]\right|\]
We also define the {\em total influence {\bf of} $v$} as 
\[
\max_{x\in\{-1,1\}^n}
\sum_{u\ne v}  \left|\mathbb{P}[X_{u}=1|X_{v}=1,X_{-\{v,u\}}=x_{-\{v,u\}}]-\mathbb{P}[X_{u}=1|X_{v}=-1,X_{-\{v,u\}}=x_{-\{v,u\}}]\right|\]
\end{definition}

By the classical results of ~\cite{Dobrushin:70,Kunsch:82,DobrushinShlosman:85},
if either the total influence of every vertex is less than $1$ or the total influence on every vertex is less than $1$ then Algorithm~\ref{alg:boundary} attains a probability distribution similar to the true probability distribution of the MRF efficiently.

%\note{Colin: We usually just use a high temperature assumption instead of referencing these, so I do not think it is worth naming them.}
%{\em Elchanan: Should we give the quantities above names, e.g. $\iota_{\leftarrow}$ and $\iota_{\rightarrow}$? Are we using them much?} 
%The requirement that the total influence on every vertex is less than $1$ is known as the Dobrushin condition, while the requirement that the influence of every vertex is less than $1$ is a variant of the Dobrushin-Shlosman condition. 
We note that for any vertex $v$, the total influence of $v$ and the total influence on $v$ are both less than or equal to
\[\sum_{u\ne v} \tanh(|\M_{v,u}|)\le \sum_{u\ne v} |\M_{v,u}| 
\]
Thus a natural measure of the inverse temperature is the quantity 
$\max_{v} \sum_{u} |\M_{v,u}|$. 

Our main result in this section is the following:

\begin{theorem}\label{robustSampling}
Let $0<\epsilon,\beta<1$ be constants. Let $n$ be a positive integer and $X\in \{-1,1\}^n$ be a random vector such that for all $v\in [n]$ and $x\in\{-1,1\}^n$,
\[\sum_{u\in [n]\backslash\{v\}} \left|\mathbb{P}[X_{u}=1|X_{-\{v,u\}}=x_{-\{v,u\}},X_v=1]-\mathbb{P}[X_{u}=1|X_{-\{v,u\}}=x_{-\{v,u\}},X_v=-1]\right|\le \beta\]
For each $v\in [n]$, let $f_v:\{-1,1\}^{n-1}\rightarrow [0,1]$ be a function such that
\[\sum_{v=1}^n \left|f_v(x_{-v})-\mathbb{P}[X_v=1|X_{-v}=x_{-v}]\right|\le \epsilon n\]
for all $x\in \{-1,1\}^{n}$. Then the probability distribution of the output of $\ApproximateMRFMCMC(f, x^{(0)},n\ln(n))$ is within an earthmover distance of $(2\epsilon/(1-\beta)+o(1))n$ of the probability distribution of $X$.
\end{theorem}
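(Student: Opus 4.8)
The plan is to compare two Glauber-type dynamics by a coupling argument, in the spirit of the path-coupling / Dobrushin-uniqueness proof of rapid mixing, but tracking the additional error introduced by replacing the true conditional marginals by the approximations $f_v$. Let $\ApproximateMRFMCMC$ be the dynamics driven by $f$, and let $\MRFMCMC$ be the ``true'' Glauber dynamics driven by $g_v(x_{-v}) := \mathbb{P}[X_v=1 \mid X_{-v}=x_{-v}]$; the latter has stationary distribution exactly that of $X$, and by the classical Dobrushin argument (the hypothesis $\beta<1$ is exactly the total-influence-of-each-vertex condition) it mixes in $O(n\ln n)$ steps to within $o(n)$ earthmover distance of $X$. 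So it suffices to bound the earthmover distance between the output of $\ApproximateMRFMCMC(f,x^{(0)},T)$ and $\MRFMCMC(g,x^{(0)},T)$ for $T=n\ln n$. I would run both chains with the \emph{same} randomness: at each step pick the same vertex $v$ and the same uniform $p\in[0,1]$, and update $x_v$ (resp. $y_v$) by thresholding $p$ against $f_v(x_{-v})$ (resp. $g_v(y_{-v})$). Let $D_t = \sum_{i} 1(x_i^{(t)} \neq y_i^{(t)})$ be the Hamming distance between the two coupled configurations; $D_0 = 0$, and I want to show $E[D_T] \le (2\epsilon/(1-\beta)+o(1))n$.

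The one-step drift analysis is the heart of it. Condition on the current pair $(x,y)$ with Hamming distance $D$. We pick $v$ uniformly. Conditioned on $v$, the new values disagree iff $p$ falls between $f_v(x_{-v})$ and $g_v(y_{-v})$, which happens with probability $|f_v(x_{-v}) - g_v(y_{-v})|$. Split this by the triangle inequality into $|f_v(x_{-v}) - g_v(x_{-v})| + |g_v(x_{-v}) - g_v(y_{-v})|$. Summing the first term over $v$ and using the hypothesis on $f$ gives a contribution of at most $\epsilon n$ to $\sum_v \Pr[\text{new disagreement at }v]$; this is the ``bias'' term that never goes away. The second term is controlled by Dobrushin influences: $\sum_v |g_v(x_{-v}) - g_v(y_{-v})|$ is bounded, via a telescoping sum over the coordinates where $x$ and $y$ differ, by $\sum_{u : x_u \neq y_u} (\text{total influence of } u) \le \beta D$ — here I need the influence-\emph{of}-a-vertex version, which is precisely the quantity bounded by $\beta$ in the hypothesis. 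So in expectation over $v$, the probability that $v$ is a site of disagreement after the step is at most $(\epsilon n + \beta D)/n$; meanwhile, with probability $D/n$ we pick a $v$ that currently disagrees and (whatever happens) it is resampled, so the ``expected decrease from agreement of the chosen site'' accounting gives the recursion
\[
E[D_{t+1} \mid D_t = D] \le D - \frac{D}{n} + \frac{\beta D + \epsilon n}{n} = \Big(1 - \frac{1-\beta}{n}\Big) D + \epsilon.
\]
Iterating from $D_0=0$ yields $E[D_T] \le \epsilon \cdot \frac{n}{1-\beta}(1 - (1-\frac{1-\beta}{n})^T)$, which for any $T$ is at most $\epsilon n/(1-\beta)$; a more careful bookkeeping of the two coupled-but-not-identical update rules (the factor of $2$ in the statement) tightens the constant to $2\epsilon/(1-\beta)$.

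Finally I combine the two pieces: $W(\text{output of }\ApproximateMRFMCMC, \,\mathcal{L}(X)) \le W(\text{output of }\ApproximateMRFMCMC,\, \text{output of }\MRFMCMC) + W(\text{output of }\MRFMCMC,\, \mathcal{L}(X)) \le 2\epsilon n/(1-\beta) + o(n)$, where the $o(n)$ comes from choosing $T=n\ln n$ large enough that the true Glauber dynamics has relaxed — concretely, running the same coupling between two copies of the \emph{true} chain started from $x^{(0)}$ and from stationarity gives contraction factor $(1-(1-\beta)/n)^{n\ln n} = e^{-(1-\beta)\ln n(1+o(1))} = n^{-(1-\beta)+o(1)}$ times the initial distance $n$, which is $o(n)$. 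The main obstacle, and the step I would be most careful about, is getting the one-step drift constants exactly right: cleanly separating the ``approximation bias'' term (which accumulates additively and survives to give the $\epsilon/(1-\beta)$) from the ``Dobrushin contraction'' term, handling the subtlety that the chosen vertex $v$ may itself be a disagreement site whose marginals are being compared at \emph{different} neighboring configurations, and tracking where the factor of $2$ enters. Everything else is routine geometric-series summation and an appeal to the classical high-temperature mixing bound already invoked in the section.
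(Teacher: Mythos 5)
Your proposal is correct and follows essentially the same route as the paper's proof: a shared-randomness coupling of the approximate chain, the true Glauber chain from the same start, and the true chain from stationarity, with the same one-step drift recursion combining the Dobrushin contraction term $\beta D/n$ and the additive bias $\epsilon$, then a triangle inequality. The only cosmetic difference is that the paper tracks $\ell_1$ distance on $\{-1,1\}^n$ (which is twice the Hamming distance, accounting for the factor of $2$ in the stated bound), whereas your Hamming-distance bookkeeping in fact yields the slightly sharper constant $\epsilon/(1-\beta)$.
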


%{\em Is it possible to state the main results of this section here?} 
%or some condition on $A$ implying this rather than worrying about degrees of influence directly.

%\subsection{Correlation decay in the high temperature MRF}

%In this section we extend the results of Dobrushin and Dobrushin and Shlosman in to the context of learning CMRFs. 
%The main observation is that the arguments of Dobrushin and Dobrushin and Shlosman can be extended to a much more scenarios.
%Since the proofs are similar to classical arguments in this theory we defer them to the appendix. 
%In particular, we show that Glauber dynamics is robust in terms of the estimates of conditional probabilities of a vertex given all other vertices. 
%This is a crucial building block in our algorithms for learning to sample. 

We will also use the following theorem and corollaries whose proofs are very similar to the proofs in~\cite{Dobrushin:70,Kunsch:82,DobrushinShlosman:85}  . 

\begin{theorem}\label{correlationDecay}
Let $b>0$ and $d$ and $n$ be positive integers such that $bd<1$. Let $\theta\in\mathbb{R}^n$ and $\M$ be an $n\times n$ symmetric matrix such that $\M_{i,i}=0$ for all $i$, $|\M_{i,j}|\le b$ for all $i$ and $j$, and for each $i$ there are at most $d$ values of $j$ for which $\M_{i,j}\ne 0$. Let $S\subseteq [n]$, $x\in\{-1,1\}^n$, $v, u\not\in S$. Also, let $\N \in\mathbb{R}^{n\times n}$ such that $\N_{i,j}=|\M_{i,j}|$ if $i,j\not\in S$ and $0$ otherwise. Then if $X\sim \MRF_{(A,\theta)}$, 
\[|\mathbb{P}\left[x_v=1|X_{S}=x_{S}, X_{u}=1\right]-\mathbb{P}\left[x_v=1|X_{S}=x_{S}, X_{u}=-1\right]|\le \sum_{k=0}^\infty \N^k_{v,u}\]
where we consider $ \N^0 = I$. 
\end{theorem}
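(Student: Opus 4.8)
The plan is to bound the influence of $X_u$ on $X_v$ (conditioned on $X_S = x_S$) by a path-counting argument on the matrix $\N$, exactly in the spirit of Dobrushin's uniqueness / Weitz's self-avoiding-walk tree bounds. The quantity $\N^k_{v,u}$ is a sum over walks of length $k$ from $v$ to $u$ that avoid $S$, weighted by the products $\prod |\M_{\cdot,\cdot}|$ along the walk; so the right-hand side $\sum_{k\ge 0}\N^k_{v,u}$ is the total weight of all such walks, and this series converges because $\|\N\|_\infty \le bd < 1$ (each row of $\N$ has at most $d$ nonzero entries each bounded by $b$). The conditioning on $X_S = x_S$ is what makes $\N$ rather than $|\M|$ the relevant matrix: once the vertices of $S$ are pinned, they transmit no further influence, so effectively we are working in the induced subgraph on $[n]\setminus S$.

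First I would set up the "one-step" influence bound. Fix the conditioning $X_S = x_S$, and think of the remaining coordinates as an Ising model on $[n]\setminus S$ with modified external fields (absorbing the pinned boundary). For a single vertex $w \notin S$, changing the value of one neighbor $w'$ changes the conditional log-odds of $X_w$ by at most $2|\M_{w,w'}|$, hence changes $\mathbb{P}[X_w = 1 \mid \cdots]$ by at most $\tanh(|\M_{w,w'}|) \le |\M_{w,w'}| = \N_{w,w'}$; for non-neighbors the change is $0$. This is the base estimate. Then I would run the standard coupling / recursive argument: to compare the law of $X_v$ under $X_u = 1$ versus $X_u = -1$, construct an optimal coupling of the two conditional distributions (say via a coupling of Glauber dynamics on $[n]\setminus(S\cup\{u\})$ started from the two boundary conditions, using Algorithm~\ref{alg:boundary}), and track the probability that the chains disagree at $v$. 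The disagreement propagates from $u$ outward, and each propagation step contributes a factor of $\N_{\cdot,\cdot}$; summing over the number of steps $k$ and over all length-$k$ avoiding walks from $v$ to $u$ gives precisely $\sum_{k\ge 0}\N^k_{v,u}$ (the $k=0$ term $\N^0 = I$ covers the degenerate case $v = u$, though here $v \ne u$ so it contributes $0$). Since this is asserted to follow the classical proofs of \cite{Dobrushin:70,Kunsch:82,DobrushinShlosman:85}, I would cite the Dobrushin comparison theorem: the matrix of pairwise influences is entrywise dominated by $\N$, so the total influence from $u$ to $v$ is dominated by $(\sum_k \N^k)_{v,u}$.

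The main obstacle — really the only subtlety beyond bookkeeping — is handling the conditioning $X_S = x_S$ cleanly, i.e.\ verifying that pinning $S$ genuinely kills all influence paths through $S$ and reduces everything to the induced model on $[n]\setminus S$ with the same edge weights off $S$. This is intuitively obvious from the Markov property (a pinned vertex $d$-separates its neighbors), but it must be stated so that the Dobrushin influence matrix of the conditioned model is exactly $\N$ restricted appropriately, rather than $|\M|$. Once that reduction is in place, convergence of $\sum_k \N^k_{v,u}$ follows from $bd < 1 \Rightarrow \rho(\N) \le \|\N\|_\infty \le bd < 1$, and the bound is exactly the geometric-series tail of path weights. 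Because the paper explicitly defers these arguments to the appendix as "very similar to" the classical references, I would present the above as the skeleton and relegate the routine coupling estimates and the induction on walk length to the appendix.
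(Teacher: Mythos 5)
Your proposal matches the paper's proof essentially step for step: the paper also couples two runs of Glauber dynamics with boundary conditions (Algorithm~\ref{alg:boundary}) on $[n]\setminus(S\cup\{u\})$ started from configurations differing only at $u$, uses the one-step influence bound $\tanh(|\M_{w,w'}|)\le|\M_{w,w'}|$ to derive the recursion $P^{(t)}_{w}\le\frac{|S'|-1}{|S'|}P^{(t-1)}_{w}+\frac{1}{|S'|}\sum_{w'}|\M_{w,w'}|P^{(t-1)}_{w'}$ for the disagreement probabilities, and iterates to obtain $\limsup_t P^{(t)}_v\le\sum_{k\ge 1}\N^k_{v,u}$, with the pinned set $S$ contributing zero disagreement exactly as you describe. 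The skeleton you give is correct and no different in substance from the paper's argument.
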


\begin{corollary}\label{cor1}
Let $b>0$ and $d$ and $n$ be positive integers such that $bd<1$. Let $\theta\in\mathbb{R}^n$ and $\M$ be an $n\times n$ symmetric matrix such that $\M_{i,i}=0$ for all $i$, $|\M_{i,j}|\le b$ for all $i$ and $j$, and for each $i$ there are at most $d$ values of $j$ for which $\M_{i,j}\ne 0$. Let $S\subseteq [n]$, $x\in\{-1,1\}^n$, $v, u\not\in S$. Then
\[\sum_{u\in S: u\ne v} \left|\mathbb{P}[X_{u}=1|X_{S\backslash \{v,u\}}=x_{S\backslash \{v,u\}},X_v=1]-\mathbb{P}[X_{u}=1|X_{S\backslash \{v,u\}}=x_{S\backslash \{v,u\}},X_v=-1]\right|\le bd/(1-bd)\]
\end{corollary}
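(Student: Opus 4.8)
The plan is to derive the bound one term at a time from Theorem~\ref{correlationDecay} and then to collapse the resulting walk expansions with a single uniform estimate. (I read the summation as being over $u\in S\setminus\{v\}$; the argument below is insensitive to whether $v\in S$, so the slight tension with the displayed hypothesis $v,u\notin S$ is immaterial.)

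\textbf{Reducing to a single $u$.} Fix $u\in S$ with $u\ne v$ and set $S'=S\setminus\{u,v\}$. The summand conditions on $X_{S\backslash\{v,u\}}=X_{S'}$ together with $X_v$ and measures the marginal of $X_u$; that is, it is the influence of the conditioned vertex $v$ on the free vertex $u$. This is exactly what Theorem~\ref{correlationDecay} controls, after swapping the roles of its two distinguished vertices: apply the theorem with conditioning set $S'$, with $v$ in the role of the flipped vertex and $u$ in the role of the measured vertex. Both $u,v\notin S'$ by construction, so the hypotheses hold, and since $\M$ is symmetric the bound reads
\[
\Bigl|\mathbb{P}[X_{u}=1\mid X_{S'}=x_{S'},X_v=1]-\mathbb{P}[X_{u}=1\mid X_{S'}=x_{S'},X_v=-1]\Bigr|\le \sum_{k=0}^\infty (\N^{(u)})^k_{u,v},
\]
where $\N^{(u)}_{i,j}=|\M_{i,j}|$ if $i,j\notin S'$ and $0$ otherwise. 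The $k=0$ term vanishes because $u\ne v$.

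\textbf{Summing over $u$.} The obstacle is that $\N^{(u)}$ depends on $u$ through $S'=S\setminus\{u,v\}$, so I cannot commute the sum over $u$ with the walk expansion directly. I sidestep this by uniform domination: let $B$ be the nonnegative matrix with $B_{i,j}=|\M_{i,j}|$ for all $i,j$. Since $\N^{(u)}$ is obtained from $B$ by zeroing out some entries, $0\le \N^{(u)}\le B$ entrywise, hence $(\N^{(u)})^k_{u,v}\le B^k_{u,v}$ for every $k$. Therefore
\[
\sum_{\substack{u\in S\\ u\ne v}}\sum_{k=0}^\infty (\N^{(u)})^k_{u,v}\le \sum_{u\ne v}\sum_{k=1}^\infty B^k_{u,v}=\sum_{k=1}^\infty \Bigl(\sum_{u\ne v} B^k_{u,v}\Bigr).
\]
Now $\sum_{u} B^k_{u,v}$ is the $v$-th column sum of $B^k$. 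Because $\M$ is $d$-sparse with $|\M_{i,j}|\le b$, every row and column sum of $B$ is at most $bd$, so $B$ has max-column-sum norm $\|B\|_1\le bd$; by submultiplicativity $\sum_{u}B^k_{u,v}\le \|B^k\|_1\le \|B\|_1^k\le (bd)^k$. Summing the geometric series, which converges since $bd<1$, gives $\sum_{k\ge1}(bd)^k=bd/(1-bd)$, the claimed bound.

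\textbf{Main difficulty.} The only genuine subtlety is the $u$-dependence of the per-term matrix $\N^{(u)}$; once it is removed by passing to the fixed matrix $B$, the rest is a one-line column-sum estimate. It is worth noting that this domination is not wasteful for the stated target, since the right-hand side $bd/(1-bd)$ is exactly $\sum_{k\ge1}(bd)^k$, so no constant factor is lost. I would also double-check the role-swap in the first step against the precise wording of Theorem~\ref{correlationDecay} (which conditions on $X_S,X_u$ and measures $X_v$); the symmetry of $\M$ makes $\N^{(u)}_{u,v}=\N^{(u)}_{v,u}$, so either orientation of the distinguished pair yields the same walk bound.
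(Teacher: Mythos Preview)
Your proof is correct and follows essentially the same approach as the paper: apply Theorem~\ref{correlationDecay} to each term with conditioning set $S\setminus\{v,u\}$, dominate the resulting $u$-dependent matrix by the full entrywise absolute value matrix $B$ (the paper calls it $\N$), then swap the sums and bound the column sums of $B^k$ by $(bd)^k$ to obtain the geometric series. Your remarks about the role-swap and the symmetry of $\M$ are exactly the right sanity checks, and the paper's proof implicitly relies on the same symmetry.
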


\begin{corollary}\label{cor2}
Let $b,r,\delta\ge 0$ and $d$ and $n$ be positive integers such that $bd<1$. Next, let $\theta\in\mathbb{R}^n$ and $\M$ be an $n\times n$ symmetric matrix such that $\M_{i,i}=0$ for all $i$, $|\M_{i,j}|\le b$ for all $i$ and $j$, and for each $i$ there are at most $d$ values of $j$ for which $\M_{i,j}\ne 0$. Next, let $S\subseteq [n]$, $v\in S$, $x\in \{-1,1\}^n$, and $G$ be the graph with adjacency matrix $\M$. Now, let $S'$ be a subset of $S$ containing all vertices that are connected to $v$ in $G$ by a path of length less than $r$ in which every edge has a weight with an absolute value of at least $\delta$, except for $v$ itself. Also, let $X\sim \CMRF_{S(\M,\theta)}$. Then
\[\left|\mathbb{P}[X_v=1|X_{S\backslash \{v\}}=x_{S\backslash \{v\}}]-\mathbb{P}[X_v=1|X_{S'}=x_{S'}]\right|\le [d\delta+(bd)^r]/(1-bd)\]
\end{corollary}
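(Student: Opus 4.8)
The plan is to reduce the claim to an estimate on weighted walks and then feed it into Theorem~\ref{correlationDecay}. Write $T=(S\setminus\{v\})\setminus S'$ for the ``far'' observed vertices and fix an enumeration $T=\{u_1,\ldots,u_m\}$. Telescoping, set
\[
g_j=\mathbb{P}\bigl[X_v=1\mid X_{S'}=x_{S'},\,X_{u_1}=x_{u_1},\ldots,X_{u_j}=x_{u_j}\bigr],
\]
so that $g_0=\mathbb{P}[X_v=1\mid X_{S'}=x_{S'}]$ and $g_m=\mathbb{P}[X_v=1\mid X_{S\setminus\{v\}}=x_{S\setminus\{v\}}]$, and the quantity to bound is $|g_m-g_0|\le\sum_{j=1}^m|g_j-g_{j-1}|$. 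Since $g_{j-1}$ is a convex combination of $\mathbb{P}[X_v=1\mid\cdots,X_{u_j}=1]$ and $\mathbb{P}[X_v=1\mid\cdots,X_{u_j}=-1]$, while $g_j$ equals one of these two numbers, we get $|g_j-g_{j-1}|\le\bigl|\mathbb{P}[X_v=1\mid\cdots,X_{u_j}=1]-\mathbb{P}[X_v=1\mid\cdots,X_{u_j}=-1]\bigr|$.

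Next I would apply Theorem~\ref{correlationDecay} with the set ``$S$'' of that theorem taken to be $S'\cup\{u_1,\ldots,u_{j-1}\}$ and with the two distinguished vertices $v$ and $u_j$ (both of which lie outside this set). This bounds the $j$-th term by $\sum_{k\ge0}(\N^{(j)})^k_{v,u_j}$, where $\N^{(j)}$ is $|\M|$ with all rows and columns indexed by $S'\cup\{u_1,\ldots,u_{j-1}\}$ zeroed out; since $\N^{(j)}$ is entrywise dominated by the matrix $\N^{(0)}$ obtained by zeroing out only $S'$, summing over $j$ gives
\[
\Bigl|\mathbb{P}[X_v=1\mid X_{S\setminus\{v\}}=x_{S\setminus\{v\}}]-\mathbb{P}[X_v=1\mid X_{S'}=x_{S'}]\Bigr|\le\sum_{u\in T}\sum_{k\ge0}(\N^{(0)})^k_{v,u}.
\]

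The heart of the argument is bounding this sum of walk weights. The key combinatorial observation is that, because $S'$ contains every vertex joined to $v$ by a path of length $<r$ all of whose edges have absolute weight at least $\delta$, and because each $u\in T$ lies in $S\setminus S'$, every walk from $v$ to $u$ that avoids $S'$ must either have length $\ge r$ or traverse at least one edge of absolute weight $<\delta$ (a ``light'' edge): otherwise, deleting cycles, it would yield a heavy path of length $<r$ from $v$ to $u$, forcing $u\in S'$. I would then split the double sum accordingly. The walks of length $\ge r$ contribute at most $\sum_{k\ge r}(bd)^k=(bd)^r/(1-bd)$, since every row sum of $\N^{(0)}$ is at most $bd$. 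For the walks that use a light edge, I would decompose at the first light edge into a heavy prefix out of $v$, the light edge itself, and an arbitrary suffix; here one uses that a heavy walk leaving $v$ and avoiding $S'$ cannot take even a single step (its first step would land in $S'$), so the prefix is trivial, the light edge contributes a factor at most $d\delta$ (at most $d$ neighbors, each of weight $<\delta$), and the suffix contributes at most $\sum_{k\ge0}(bd)^k=1/(1-bd)$. Adding the two contributions yields the claimed bound $[d\delta+(bd)^r]/(1-bd)$.

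The step I expect to be the main obstacle is the light-edge bookkeeping: obtaining the clean factor $d\delta/(1-bd)$ rather than $d\delta/(1-bd)^2$ hinges on exploiting that the heavy prefix out of $v$ is forced to be trivial by the definition of $S'$, and one must check that the ``first light edge'' decomposition neither over- nor under-counts walks. The degenerate cases $r\le1$ need a separate remark, but there the asserted bound already follows from the trivial inequality $|\mathbb{P}-\mathbb{P}|\le1$ together with the length-$\ge r$ estimate $\sum_{k\ge r}(bd)^k$.
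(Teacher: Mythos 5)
Your setup is sound and close in spirit to the paper's: the paper also telescopes (it changes the values on $U=S\setminus(S'\cup\{v\})$ one coordinate at a time while conditioning on all of $S\setminus\{v\}$, then averages over the unconditioned coordinates at the end, whereas you grow the conditioning set one vertex at a time), and both routes correctly reduce the problem to bounding $\sum_{u\in T}\sum_{k\ge 0}\N^k_{v,u}$ via Theorem~\ref{correlationDecay}. The gap is in your light-edge bookkeeping, and it is exactly at the step you flagged as the main obstacle. Your claim that ``a heavy walk leaving $v$ and avoiding $S'$ cannot take even a single step'' is false: $S'$ is a subset of the \emph{visible} set $S$, so a censored vertex $w\notin S$ joined to $v$ by a heavy edge is not in $S'$ and is not zeroed out in $\N^{(0)}$. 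A walk can therefore leave $v$ along a heavy edge into the censored part of the graph, wander there along heavy edges, and only then use its first light edge. With a nontrivial heavy prefix, your first-light-edge decomposition yields $\sum_{\ell\ge 0}(bd)^{\ell}\cdot d\delta\cdot\frac{1}{1-bd}=d\delta/(1-bd)^2$, not the claimed $d\delta/(1-bd)$.

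The fix is to abandon the prefix/suffix decomposition and count by total walk length, as the paper does. For each fixed $k<r$, there are at most $d^k$ walks of length $k$ out of $v$, and every such walk terminating at some $u\in T$ must contain at least one edge of absolute weight below $\delta$ (otherwise, deleting cycles, $u$ would lie on a heavy path of length $<r$ from $v$ and hence in $S'$ --- this part of your argument is correct). Hence each such walk has weight at most $\delta b^{k-1}$, so
\[
\sum_{u\in T}\N^k_{v,u}\;\le\; d^k\delta b^{k-1}\;=\;d\delta\,(bd)^{k-1},
\]
and summing over $1\le k\le r-1$ gives $d\delta/(1-bd)$ directly, with no need for the prefix to be trivial. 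Combined with your (correct) tail estimate $\sum_{k\ge r}(bd)^k\le (bd)^r/(1-bd)$, this recovers the stated bound. Everything else in your proposal --- the telescoping, the application of Theorem~\ref{correlationDecay}, the entrywise domination $\N^{(j)}\le\N^{(0)}$, and the treatment of small $r$ --- goes through.
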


\section{Learning high temperature CMRFs}

\subsection{Probability approximation in the high temperature CMRF}

Now, consider trying to learn a high temperature censored markov random field. The main result of this section is that for every $\epsilon>0$ there is a polynomial time algorithm that uses samples drawn from a CMRF to learn a probability distribution that is within an earth mover distance of $\epsilon n$ from the original. More formally we prove: 

\begin{theorem} \label{thm:high_temp_rec}
Let $\epsilon,b,d>0$ such that $bd<1/2$. There exists $c,C>0$ and an algorithm $A'$ such that the following holds.  Let $n>0$ and $S\subseteq [n]$. Next, let $\theta\in[-b,b]^n$ and $\M$ be an $n\times n$ symmetric matrix such that $\M_{i,i}=0$ for all $i$, $|\M_{i,j}|\le b$ for all $i$ and $j$, and for each $i$ there are at most $d$ values of $j$ for which $\M_{i,j}\ne 0$. Then $A'$ runs in $O(n^c)$ time and satisfies 
\[
W_{\CMRF_{S(\M,\theta)}}(A',\CMRF_{S(\M,\theta)}) \leq \epsilon n+C.
\]

%Given the value of $n$ and the ability to draw independent samples from $\CMRF_{S(\M,\theta)}$, $A'$ runs in $O(n^c)$ time and returns $X\in \{0,1\}^n$ such that the probability distribution of $X$ conditioned on the samples the algorithm received is within an expected earthmover distance of $\epsilon n+C$ of $\CMRF_{S(\M,\theta)}$.
\end{theorem}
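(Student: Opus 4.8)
The plan is to combine the ``approximate Glauber dynamics'' machinery of Theorem~\ref{robustSampling} with a learning step that produces, for each visible vertex $v \in S$, a function $f_v$ that approximates $\mathbb{P}[X_v = 1 \mid X_{S \setminus \{v\}} = x_{S \setminus \{v\}}]$ well on average. The output algorithm $A'$ will be: run $\ApproximateMRFMCMC(f, x^{(0)}, |S|\ln|S|)$ on the visible coordinates only (treating $S$ as the vertex set, so vertices of $S^c$ never appear). To apply Theorem~\ref{robustSampling} to $\CMRF_{S,(\M,\theta)}$ we need two ingredients. First, a Dobrushin-type condition for the \emph{censored} field: the total influence on each visible vertex (in the conditional distribution of $X_S$) must be bounded by some $\beta < 1$. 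This is exactly what Corollary~\ref{cor1} gives --- the influence bound $bd/(1-bd)$, which is $< 1$ precisely because $bd < 1/2$; this is why the hypothesis is $bd<1/2$ rather than $bd<1$. Second, we need the learned functions $f_v$ to satisfy $\sum_{v \in S}|f_v(x_{-v}) - \mathbb{P}[X_v=1\mid X_{S\setminus\{v\}} = x_{S\setminus\{v\}}]| \le \epsilon' |S|$ for every $x$, for a suitable $\epsilon'$ depending on $\epsilon$ and $\beta$.

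The heart of the argument is the learning step, and here Corollary~\ref{cor2} is the key tool: it says the true conditional marginal of $X_v$ given all other visible vertices is $[d\delta + (bd)^r]/(1-bd)$-close to the conditional marginal given only $X_{S'}$, where $S'$ is the set of visible vertices reachable from $v$ by a short ($<r$) path all of whose edges have weight at least $\delta$. By choosing $\delta$ small and $r$ a large constant (both depending only on $\epsilon, b, d$), this error is made $\le \epsilon'/2$, say. Now the point is that $S'$, while not known, has \emph{bounded size}: at most $d^r$ vertices (a constant). So it suffices to learn, for each $v$, a good predictor of $X_v$ from some constant-size subset of the other visible coordinates. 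The natural approach is: for each $v$, and each candidate subset $T$ of $S \setminus \{v\}$ of size at most $d^r$, estimate the empirical conditional probabilities $\widehat{\mathbb{P}}[X_v = 1 \mid X_T = x_T]$ from samples; then select, for each $v$, a subset $T_v$ whose induced predictor best matches the data (e.g.\ minimizes empirical conditional log-loss / prediction error), and set $f_v(x_{-v}) = \widehat{\mathbb{P}}[X_v=1 \mid X_{T_v} = x_{T_v}]$. A standard uniform-convergence / union-bound argument over the $n^{O(d^r)}$ candidate subsets and the $2^{O(d^r)}$ conditioning patterns shows $\mathrm{poly}(n)$ samples suffice to make all these estimates accurate to within $\epsilon'/4$ simultaneously, except with small probability; and since the ``true'' subset $S'$ (with its true conditionals) is among the candidates and achieves small loss, the selected $T_v$ also does. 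Assembling: $|f_v(x_{-v}) - \mathbb{P}[X_v = 1 \mid X_{S\setminus\{v\}} = \cdot]|$ is small \emph{on average over $x$ drawn from the field}; upgrading this to the per-$x$ bound required by Theorem~\ref{robustSampling} needs a bit of care (one either notes the theorem's proof only uses an averaged bound, or absorbs rare bad $x$'s into the additive constant $C$ and the $o(n)$ slack).

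The running time is $n^{O(d^r)} \cdot \mathrm{poly}(n) = n^c$ for a constant $c = c(\epsilon,b,d)$ (since $r$ depends only on $\epsilon, b, d$), the dynamics runs for $|S|\ln|S| = O(n\log n)$ steps, and each step evaluates one $f_v$, which is a table lookup --- so the whole thing is $\mathrm{poly}(n)$. Feeding $\beta = bd/(1-bd) < 1$ and the learning accuracy $\epsilon' = \epsilon(1-\beta)/4$ into Theorem~\ref{robustSampling} yields output within earthmover distance $(2\epsilon'/(1-\beta) + o(1))|S| \le (\epsilon/2 + o(1))n \le \epsilon n + C$ of $\CMRF_{S,(\M,\theta)}$, as desired.

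The step I expect to be the main obstacle is controlling the learning error \emph{uniformly} enough to plug into Theorem~\ref{robustSampling}: the dynamics argument wants $\sum_v |f_v - \text{truth}| \le \epsilon' n$ pointwise in $x$, whereas empirical estimation of conditional probabilities $\mathbb{P}[X_v = 1 \mid X_{T_v} = x_{T_v}]$ is only reliable for conditioning patterns $x_{T_v}$ that occur with non-negligible probability under the field. Handling rare conditioning events --- bounding their total contribution, and arguing that the Glauber dynamics essentially never visits configurations that are ``bad'' in aggregate --- is the delicate part, and is presumably where the additive constant $C$ (and the reliance on high temperature, via correlation decay, to keep the effective conditioning sets small) earns its keep.
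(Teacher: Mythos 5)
Your overall architecture matches the paper's: use Corollary~\ref{cor1} to get the Dobrushin coefficient $\beta=bd/(1-bd)<1$ for the censored field (and yes, this is exactly why the hypothesis is $bd<1/2$), use Corollary~\ref{cor2} to argue that each visible vertex's conditional marginal given all other visible vertices is $\epsilon'$-determined by a constant-size set $S'_v$ of nearby visible vertices, learn a table $f_v$ indexed by that set, and feed everything into $\ApproximateMRFMCMC$ and Theorem~\ref{robustSampling}. The running-time and error bookkeeping at the end is also essentially the paper's.

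The gap is in the selection step, and it is the one you yourself flag. Theorem~\ref{robustSampling} requires the bound $\sum_v |f_v(x_{-v})-\mathbb{P}[X_v=1\mid X_{S\setminus\{v\}}=x_{S\setminus\{v\}}]|\le\epsilon' n$ \emph{for every} $x$, but selecting $T_v$ by minimizing empirical prediction loss only certifies closeness to the full conditional \emph{on average over $x$ drawn from the field}; since the full conditional depends on all of $x_{S\setminus\{v\}}$ and many of those configurations are exponentially rare, the average bound does not upgrade to a pointwise one, and the ``absorb bad $x$'s into $C$'' escape hatch does not obviously work because the dynamics is not run under the true distribution. The paper closes this gap (Lemma~\ref{lem:2neighborhoods}) with a different selection device, in the style of Bresler--Mossel--Sly: define $D(S^\star_1,S^\star_2)=\max_x|\mathbb{P}[X_v=1\mid X_{S^\star_1}=x_{S^\star_1}]-\mathbb{P}[X_v=1\mid X_{S^\star_2}=x_{S^\star_2}]|$ and accept a candidate $S^\star_1$ only if $D(S^\star_1,S^\star_1\cup S^\star_2)\le 3\epsilon/8$ for \emph{every other} small candidate $S^\star_2$. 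Two observations make this work: (i) every conditioning pattern on a set of constant size has probability bounded below by a constant (because $\mathbb{P}[X_u=x_u\mid X_{-u}=x_{-u}]\ge e^{-2b-2bd}/2$), so $D$ restricted to pairs of small sets can be estimated uniformly over all patterns and all $n^{O(d^r)}$ pairs from $\mathrm{polylog}(n)$ samples --- there are no rare conditioning events at this stage, contrary to your worry; and (ii) a sandwiching argument shows the true neighborhood $S''$ passes the test, while any $S^\star_1$ that is $\epsilon$-far from the full conditional in the $\max_x$ sense fails it against $S^\star_2=S''$, by the triangle inequality. This pairwise test is exactly what converts an estimable quantity into a certified pointwise guarantee against the (inaccessible) full conditional, and it is the missing idea in your proposal.
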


%\note{Elchanan: We should state more explicitly the how $c$ depends on all the other parameters. Should we call a model $d$-sparse if 
%each row has at most $d$ nonzero entries. Does the algorithm really need to know $n$? Also: is there any reason not to state theorem with 
%bound $\epsilon n$ instead of $\epsilon n + o(n)$?}

%\note{Colin: I think that $c$ is something along the lines of $[\epsilon(1-bd)(1-2bd)/4]^{\log(d)/\log(bd)}$. Defining $d$-sparse that way sounds good, and the algorithm does not need to know $n$. Right now the algorithm might perform badly for small $n$, although we could always hardcode appropriate behavior for small $n$ or something, in which case it would be $\epsilon n$.}

It is natural to ask if it is possible to obtain stronger results. For example:  can we get a $o(n)$ approximation in earth mover distance in polynomial time, 
or can we get a good approximation in total variation distance. The following result indicates that this might not be computationally possible. 
The result shows that it is computationally hard to find an CMRF whose distribution is close to the desired distribution.
Note that finding a CMRF whose distribution is close to the desired distribution is in principle harder than learning to sample. This is similar to the distinction between proper and improper learning. 
The hardness reduction is to the problem of learning sparse parities with noise, see~\cite{GrReVe:11}. 
 %In principle, it is possible that a different algorithm, not based on MRFs can learn to sample. 
%{\em Elchanan: Need to either add a Theorem about total variation or discuss it}

%{\em  This is a good place to discuss sparse parity with noise. I only found this reference~\cite{GrReVe:11}}. 

\begin{theorem} \label{thm:parity}
Let $b,c,d,k>0$, such that $d\ge 8$ and $b(d+1)\le 1/2$, and let $\epsilon=2^{-(2k+8)}\delta_b^k/(k+1)$. Also, for $n>0$ and $S\subseteq[n]$, let $P_{S,n}$ be the probability distribution on $\{-1,1\}^n$ such that if $X\sim P_{S,n}$ and $x\in\{-1,1\}^n$ then $\mathbb{P}[X=x]=2^{-n}\left[1+\delta_b^k \prod_{i\in S} x_i\right]$,
where $\delta_b=\frac{\sinh^4(2b)}{2\cosh^4(2b)-\sinh^4(2b)}$.  

 Now, assume that there is an algorithm A such that for any CMRF with $n$ vertices, max degree at most $d$, and all edge weights and biases at most $b$, $A$ runs in $O(n^c)$ time, takes samples drawn from the CMRF and returns the parameters of a CMRF that satisfies the same criteria and is within an earth mover distance of $\epsilon n$ of the original CMRF with probability at least $1/2$. Then there is an algorithm $A'$ that runs in $O(n^c\log(n)+n^3)$ time such that for any $S\subseteq[n]$ with $|S|=2k+2$, this algorithm takes samples drawn from $P_{S,n}$ and returns $S$ with probability $1-o(1)$.
\end{theorem}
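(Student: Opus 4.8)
The plan is to show that the CMRF-learning algorithm $A$ is powerful enough to recover the support $S$ of a sparse parity, by encoding the parity-with-noise instance $P_{S,n}$ as (the visible marginal of) a censored Ising model with small parameters. The first step is a \emph{gadget construction}: given the parity support $S$ with $|S| = 2k+2$, we build an Ising model on $n' = n + O(k)$ vertices whose visible distribution on the original $n$ coordinates is exactly $P_{S,n}$ up to negligible error. The natural gadget is a small tree (or path) of censored vertices whose leaves attach to the $2k+2$ coordinates of $S$: a length-$2$ censored path between two visible vertices induces an effective edge of weight $\approx$ a function of $b$ between them (this is exactly the ``censored vertex of degree two = effective edge'' phenomenon noted in the introduction), and by chaining such gadgets in a binary tree one produces an effective degree-$(2k+2)$ interaction $\delta_b^k \prod_{i \in S} x_i$ among the visible vertices while keeping every actual edge weight $\le b$ and every actual degree $\le d$. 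The precise constant $\delta_b = \frac{\sinh^4(2b)}{2\cosh^4(2b) - \sinh^4(2b)}$ should fall out of computing the marginal of one such degree-two gadget and taking the $k$-fold product over the $k$ internal nodes of the tree. Because the rest of the graph has no edges, the visible marginal is a product-of-characters distribution, i.e. $2^{-n}(1 + \delta_b^k \prod_{i\in S} x_i)$, matching $P_{S,n}$.

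The second step is to feed samples from $P_{S,n}$ — equivalently, from the visible distribution of this censored gadget model — to $A$. Since $b(d+1) \le 1/2$ guarantees we are in the high-temperature regime covered by the hypothesis on $A$, the algorithm returns, in $O((n')^c) = O(n^c)$ time, the parameters of \emph{some} CMRF $\overline I_{S',(\M',\theta')}$ with the same degree/weight bounds whose visible distribution $Q$ satisfies $W(Q, P_{S,n}) \le \epsilon n$ with probability $\ge 1/2$. The third step is to \emph{read off $S$ from the returned parameters}. Here we use that the earth-mover guarantee, together with the very specific product form of $P_{S,n}$, pins down the single nontrivial Fourier coefficient: $\widehat{P_{S,n}}(T) = \delta_b^k$ if $T = S$ and $0$ otherwise for $T \neq \emptyset$. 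An $\epsilon n$ earth-mover bound translates into a bound on $|\widehat Q(S) - \delta_b^k|$ and on $|\widehat Q(T)|$ for other $T$; with $\epsilon = 2^{-(2k+8)} \delta_b^k/(k+1)$ chosen small enough, this forces $\widehat Q(S)$ to be the unique Fourier coefficient of $Q$ of magnitude $> \delta_b^k/2$ on any set of size $2k+2$. But $Q$ is itself the visible marginal of a \emph{low-temperature-free}, i.e. high-temperature, CMRF with bounded degree $d$ and correlation decay (Corollary~\ref{cor2} / Theorem~\ref{correlationDecay}); this lets us \emph{compute} the relevant Fourier coefficients of $Q$ from the returned parameters in polynomial time — or, more simply, estimate them directly from fresh samples of $Q$ generated by running $\ApproximateMRFMCMC$ (Theorem~\ref{robustSampling}) on the returned parameters — and take the level-$(2k+2)$ set achieving the largest coefficient. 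That set is $S$ with probability $\ge 1/2$, and the success probability is boosted to $1 - o(1)$ by $O(\log n)$ independent repetitions, giving total running time $O(n^c \log n + n^3)$, where the $n^3$ absorbs the brute-force search/estimation over the relevant correlations.

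The main obstacle I expect is the \emph{third step} — extracting $S$ robustly from a mere earth-mover-close CMRF. Earth-mover distance $\epsilon n$ is a very weak guarantee: it is an $\ell_1$-type bound on coordinatewise disagreement under the best coupling, not on total variation, so two distributions within $\epsilon n$ can differ wildly in their global correlation structure if $\epsilon$ is not tiny. The quantitative heart of the argument is therefore the inequality bounding $|\widehat Q(S) - \widehat{P_{S,n}}(S)|$ (and the off-support coefficients) in terms of $W(Q,P_{S,n})$ — presumably via a chain like $W \ge$ (something like) $\frac{1}{2}\sum_i \mathbb{P}[x_i \ne y_i] \ge$ a bound on the discrepancy of each $2k+2$-wise correlation, losing a factor exponential in $k$, which is exactly why $\epsilon$ carries the $2^{-(2k+8)}$ and $\delta_b^k/(k+1)$ factors. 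Getting these constants to line up so that the true $S$ strictly dominates every other level-$(2k+2)$ set is the delicate part; the gadget construction and the high-temperature bookkeeping are comparatively routine, relying directly on the Dobrushin-type machinery already set up in Section~2.
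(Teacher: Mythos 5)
There is a genuine gap at the heart of your reduction: you feed samples of $P_{S,n}$ directly to $A$ and hope that the earth-mover guarantee ``pins down the single nontrivial Fourier coefficient.'' It cannot. The distribution $P_{S,n}$ differs from the uniform distribution on $\{-1,1\}^n$ only on the $2k+2$ coordinates of $S$, and only by total variation $\delta_b^k/2$, so $W(P_{S,n},\mathrm{Unif})\le (2k+2)\delta_b^k = O(1)$. An adversarial $A$ may therefore return the trivial CMRF with no edges, which is within earth-mover distance $O(1)\ll \epsilon n$ of $P_{S,n}$ and reveals nothing about $S$. You correctly flag this as ``the main obstacle I expect,'' but you do not resolve it, and no choice of $\epsilon$ depending only on $b,k$ can resolve it, because the parity's contribution to the earth-mover budget does not grow with $n$. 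The paper's proof closes this hole with an amplification step you are missing: the reduction itself plants $m-1=\Theta(n/k)$ additional disjoint decoy parities (by overwriting randomly chosen disjoint blocks of coordinates with fresh samples of $P_{[2k+2],2k+2}$), producing a distribution $P'$ that is exactly the visible marginal of a CMRF built from $m$ copies of the gadget $H_{b[k]}$. Now the parity structure accounts for $\Theta(n)$ earth-mover mass, so any $2\epsilon n$-close CMRF must approximately reproduce the $(2k+2)$-wise marginal on at least half of the planted blocks, and by the symmetry between the true block $S=S_1$ and the decoys, it reproduces the marginal on $S$ with probability at least $1/2$.

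A second, smaller gap is in your extraction step. Taking ``the level-$(2k+2)$ set achieving the largest coefficient'' requires examining $\binom{n}{2k+2}=n^{\Theta(k)}$ candidate sets, which is not compatible with the claimed $O(n^c\log n + n^3)$ running time, and estimating a coefficient of magnitude $\delta_b^k$ from samples is a separate issue. The paper instead exploits the \emph{properness} of $A$: the returned CMRF must itself have max degree $d$ and weights at most $b$, so by the correlation-decay bounds (Theorem~\ref{correlationDecay}) any two vertices of $S$ exhibiting conditional correlation at least $\delta_b^k/2$ must lie within graph distance $2-k\log_2(\delta_b)$ of each other in the returned graph. This localizes $S$ to one of only $O(n)$ sets of pairwise-nearby vertices, which can be enumerated in $O(n^2)$ time and then validated against $\ln^2(n)$ fresh samples of $P_{S,n}$ by checking which candidate's coordinate product is most biased toward $1$. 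Your gadget sketch is also under-specified (chained degree-two censored paths give effective pairwise edges, not a pure $(2k+2)$-wise character free of lower-order terms; the paper's $H_b$ gadget with four visible and four hidden vertices is engineered precisely so that the visible marginal has a single nonzero nontrivial Fourier coefficient), but that part is repairable; the amplification step and the structure-based localization are the essential missing ideas.
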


We note that~\cite{MosselRoch:05} reduced the problem of learning directed hidden Markov Models to the problem of learning parities with noise. Their result is stronger in that the parities are not spare, but it is weaker as they 
1. consider a directed model and 2. this model does not have correlation decay. 

However, it turns out that if we additionally assume that the graph has high girth, then it is possible to get within $o(n)$ earthmover distance:

\begin{theorem} \label{thm:girth}
Let $b$ and $d$ be positive constants such that $bd<1/2, n$ be a positive integer, $r=\omega(1)$, and $\theta\in[-b,b]^n$. Also, let $\M$ be an $n\times n$ symmetric matrix such that $\M_{i,i}=0$ for all $i$, $|\M_{i,j}|\le b$ for all $i$ and $j$, for each $i$ there are at most $d$ values of $j$ for which $\M_{i,j}\ne 0$, and every cycle in the weighted graph with adjacency matrix $\M$ has length greater than $r$. Finally, let $S\subseteq [n]$. 
Then there exists an algorithm $A$ that runs in polynomial time and such that
\[
W_{\CMRF_{S(\M,\theta)}}(A,\CMRF_{S(\M,\theta)}) \leq o(n). 
\]
%it is possible to learn to sample $\CMRF_{S,(\M,\theta)}$ within error $o(n)$ in time 
%{\em more explicit bound?}
%Then the probability distribution of $high\_girth\_CMRF\_learning\_algorithm$ \\$\left(n, n , S, 1/\ln(\ln(n)), CMRF_{S,(A,\theta)}\right)$ is within an earthmover distance of $o(n)$ of $CMRF_{S,(A,\theta)}$.
\end{theorem}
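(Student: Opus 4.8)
\emph{Proof plan.} The plan is to combine the approximate-Glauber framework of Theorem~\ref{robustSampling} with a \emph{local} reconstruction step that leverages the girth hypothesis. I would run the approximate Glauber chain $\ApproximateMRFMCMC$ on the visible coordinates $S$ (for $|S|\ln|S|$ steps), using functions $f_v$, $v\in S$, designed to approximate the conditional marginals $\mathbb{P}[X_v=1\mid X_{S\setminus\{v\}}=x_{S\setminus\{v\}}]$ of $\CMRF_{S,(\M,\theta)}$. By Corollary~\ref{cor1} the total influence among visible coordinates is at most $\beta:=bd/(1-bd)<1$ (here $bd<1/2$ is used), so Theorem~\ref{robustSampling} applies with this $\beta$ (and with its $n$ taken to be $|S|$), and it suffices to produce, \emph{uniformly in} $x$, functions $f_v$ with $\sum_{v\in S}\bigl|f_v(x_{S\setminus\{v\}})-\mathbb{P}[X_v=1\mid X_{S\setminus\{v\}}=x_{S\setminus\{v\}}]\bigr|=o(|S|)$, i.e. average per-coordinate error $o(1)$. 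Fix $\delta=o(1)$ and $r'=\omega(1)$ with $r'<r/2$ and $d^{r'}\le\sqrt n$, so that $[d\delta+(bd)^{r'}]/(1-bd)=o(1)$. By Corollary~\ref{cor2}, $\mathbb{P}[X_v=1\mid X_{S\setminus\{v\}}=x_{S\setminus\{v\}}]$ is within $o(1)$ of $\mathbb{P}[X_v=1\mid X_{S'_v}=x_{S'_v}]$, where $S'_v\subseteq S$ is the set of visible vertices joined to $v$ by a path of length $<r'$ all of whose edge weights have absolute value at least $\delta$. By the girth bound the ball of radius $r'$ around $v$ induces a tree, so the heavy-path neighborhood together with $v$ induces a tree $T_v$ of depth $<r'$, maximum degree $\le d$, hence with $n^{o(1)}$ vertices, whose internal vertices may be censored.

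The heart of the argument is to compute $\mathbb{P}[X_v=1\mid X_{S'_v}=x_{S'_v}]$ from samples, for all $x$. Since $|S'_v|$ is superconstant we cannot tabulate this conditional, so instead I would \emph{learn the local model}: the marginal of $X$ on $V(T_v)$ is, up to $o(1)$ error, a latent tree model on $T_v$ with the censored vertices of $T_v$ as hidden variables — correlations realized through paths that leave $T_v$, and residual interactions from censored vertices outside $T_v$, have magnitude $(bd)^{\Omega(r')}=o(1)$ by the path-sum bound of Theorem~\ref{correlationDecay} together with the girth, while censored vertices of $T_v$ retained as latents keep the model Markov on $T_v$. I would then reconstruct this latent tree — topology, placement of hidden vertices, and transition kernels $\mathbb{P}[X_{\mathrm{child}}\mid X_{\mathrm{parent}}]$ — by the standard phylogenetic route: estimate pairwise (and, where needed, triple) correlations among the visible vertices of $T_v$, convert pairwise correlations into an additive tree metric to recover the topology, and use three observed vertices in distinct directions off each hidden vertex to identify its kernels (up to a global sign flip of each hidden label, fixed consistently). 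It is essential here that $v$ is visible and lies in $T_v$, so every hidden vertex has at least one observed vertex in the ``towards $v$'' direction; after contracting each hidden vertex of observed-degree two (absorbing the fields of all-hidden subtrees that hang off it), every remaining hidden vertex has at least three observed directions and is identifiable. Finally I set $f_v(x_{S\setminus\{v\}})$ to be $\mathbb{P}[X_v=1\mid X_{S'_v}=x_{S'_v}]$ computed by belief propagation on the \emph{learned} tree; because belief propagation is exact given the parameters, and parameters (unlike conditional tables) can be estimated accurately no matter how atypical $x$ is, the resulting $f_v$ is accurate uniformly in $x$.

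Non-degeneracy is what makes the reconstruction succeed in this regime. High temperature ($|\M_{i,j}|\le b$) forces every edge correlation into $[-\tanh b,\tanh b]$, bounded away from $\pm1$; the $\delta$-heavy-path truncation discards edges whose correlation would fall below the sampling-noise floor, and taking $\mathrm{poly}(n)$ samples makes that floor $n^{-\Omega(1)}$, below the smallest correlation we must estimate, which is $\ge(\tanh\delta)^{O(r')}$ and which we keep above $1/\mathrm{poly}(n)$ by choosing $\delta,r'$ suitably. The usual phylogenetic error analysis then gives parameter estimates accurate to any fixed inverse polynomial with probability $1-o(1)$, and — exactly as in the proof of Theorem~\ref{correlationDecay}, where $bd<1$ makes the belief-propagation update a contraction — these errors are not amplified through the depth-$r'$ recursion, so $f_v$ has error $o(1)$ uniformly in $x$. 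Adding the three $o(1)$ contributions (localization via Corollary~\ref{cor2}, the latent-tree modeling error, and the propagated estimation error) over the at most $n$ coordinates $v\in S$ yields $\sum_{v\in S}\bigl|f_v(x_{S\setminus\{v\}})-\mathbb{P}[X_v=1\mid X_{S\setminus\{v\}}=x_{S\setminus\{v\}}]\bigr|=o(n)$, whence Theorem~\ref{robustSampling} gives $W_{\CMRF_{S,(\M,\theta)}}(A,\CMRF_{S,(\M,\theta)})=o(n)$. Everything runs in $\mathrm{poly}(n)$ time: each $T_v$ has $n^{o(1)}$ vertices and there are at most $n$ of them, the reconstructions are precomputed, and each Glauber step is one belief-propagation pass on a precomputed tree.

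\textbf{Main obstacle.} I expect the crux to be making the local latent-tree reconstruction quantitatively robust in this regime — the useful correlations sit just above the sampling-noise floor, the hidden vertices are of unknown location and carry effective boundary fields inherited from the rest of $G$, and the reconstructed trees have depth $\omega(1)$ — and, in particular, showing that neither the reconstruction error nor the tree-versus-CMRF modeling error is amplified by the super-constant-depth message passing; this is where the Dobrushin/high-temperature contraction does the real work. It is also where the girth hypothesis is indispensable: without it the relevant local structure is a general graph with hidden nodes, for which Theorem~\ref{thm:parity} indicates that no such polynomial-time reconstruction can exist.
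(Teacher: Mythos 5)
Your overall architecture matches the paper's: localize each visible vertex $v$ to a small set $S_v'$ of nearby visible vertices via Corollaries~\ref{cor1} and~\ref{cor2}, build approximate conditional marginals $f_v$, and feed them to $\ApproximateMRFMCMC$ via Theorem~\ref{robustSampling}. You also correctly identify the real role of the girth hypothesis: it yields a \emph{lower} bound on the correlation between vertices joined by a short heavy path (the paper's Lemma~\ref{highGirthCorrelation}, proved by belief propagation on the tree ball), which is what lets one recognize the relevant neighborhood from pairwise correlations alone. Up to this point you are reconstructing the paper's proof.

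The gap is in how you compute $f_v$. Your premise that ``since $|S'_v|$ is superconstant we cannot tabulate this conditional'' is false in the relevant parameter regime, and it sends you down an unnecessary and unproven detour. In the paper's algorithm $\HGL$, the correlation threshold is $\rho_0=1/\ln\ln n$; combined with the correlation-decay upper bound, every vertex passing the threshold lies within distance $O(\log\log\log n)$ of $v$, so $|S_v|=o(\log n)$, the number of configurations of $X_{S_v}$ is $n^{o(1)}$, each configuration has probability $n^{-o(1)}$ (by the high-temperature lower bound on conditional marginals), and $n$ samples suffice to tabulate $\mathbb{P}[X_v=1\mid X_{S_v}=x_{S_v}]$ directly to accuracy $1/\ln n$ for \emph{every} $x$. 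No latent-tree reconstruction, no topology recovery, no identification of hidden-vertex kernels, and no analysis of error propagation through depth-$\omega(1)$ message passing is needed. By contrast, the component you yourself flag as the ``main obstacle''---robust phylogenetic reconstruction of a latent tree with hidden vertices at unknown locations, effective boundary fields inherited from the rest of the graph, correlations near the sampling-noise floor, and superconstant depth---is precisely the hardest part of your plan, it is asserted rather than proved, and it is the kind of machinery (\`a la Mossel--Roch) whose quantitative error analysis in this setting is far from routine. As written, the proposal does not constitute a proof; replacing the latent-tree step with direct empirical estimation of the conditional table (after choosing the correlation threshold so that $|S_v|=o(\log n)$) closes the gap and recovers the paper's argument.
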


\subsection{High Temperature Learning to Sample}

In order to prove Theorem~\ref{thm:high_temp_rec} we will first recall that for every $\epsilon'$, there exists $r$ such that the probability that a visible vertex is $1$ given an assignment of values to the {\em other visible} vertices is always within $\epsilon'$ of its probability of being $1$ given the values of the visible vertices that are within $r$ edges of it by corollary~\ref{cor1}. So, for every visible vertex in a CMRF, there is a small set of other visible vertices that can be used to predict its value with an accuracy that is nearly as good as the best that can be attained given the values of all other visible vertices. We next show that we can efficiently find such a set without knowing the parameters of the CMRF. Our plan for this is to simply try every small subset of visible vertices. Similarly to~\cite{BrMoSl:08}, we do this by checking every small subset against every other small subset. 
More formally, we prove the following:

\begin{lemma} \label{lem:2neighborhoods}
Let $\epsilon, b,d>0$ such that $bd<1$. There exists $c>0$ and an algorithm $L$ such that the following holds. Let $n>0$ and $S\subseteq [n]$. Next, let $\theta\in[-b,b]^n$ and $\M$ be an $n\times n$ symmetric matrix such that $\M_{i,i}=0$ for all $i$, $|\M_{i,j}|\le b$ for all $i$ and $j$, and for each $i$ there are at most $d$ values of $j$ for which $\M_{i,j}\ne 0$. Given the value of $n$ and the ability to query samples from $\CMRF_{S(\M,\theta)}$, $L$ runs in $O(n^c)$ time and with probability $1-o(1)$ it returns a collection of sets $S'_1,...,S'_n\subseteq S$ such that for every $v\in S$, $v\not\in S'_v$, $|S'_v|\le c$, and if $X\sim \CMRF_{S(\M,\theta)}$ then 
\[\left|\mathbb{P}[X_v=1|X_{S'_v}=x_{S'_v}]-\mathbb{P}[X_v=1|X_{S\backslash \{v\}}=x_{S\backslash \{v\}}]\right|\le \epsilon\]
for every $x$.
\end{lemma}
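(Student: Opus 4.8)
The plan is to implement the "test every small set against every other small set" strategy, using Corollary~\ref{cor1} as the structural input that guarantees a good small predictor set exists. Fix $\epsilon' \ll \epsilon$ and let $r = r(\epsilon',b,d)$ be the radius from Corollary~\ref{cor1}, so that for every $v \in S$ there is a set $S^\star_v \subseteq S$ with $|S^\star_v| \le d^r =: c_0$ (the visible vertices within graph-distance $r$ of $v$, excluding $v$) such that $\mathbb{P}[X_v=1 \mid X_{S^\star_v} = x_{S^\star_v}]$ is within $\epsilon'$ of $\mathbb{P}[X_v=1 \mid X_{S\setminus\{v\}} = x_{S\setminus\{v\}}]$ for all $x$. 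The algorithm $L$ does not know this set, so for each $v$ it will search over all candidate sets $T \subseteq S \setminus \{v\}$ with $|T| \le c_0$: there are $O(n^{c_0})$ of them. The quantity we want to estimate for each candidate is the conditional law of $X_v$ given $X_T$, but since $\CMRF$ conditional probabilities conditioned on a large set are expensive to certify directly, we instead compare two candidate sets against each other, as in~\cite{BrMoSl:08}: say $T$ is "good" if for every other candidate set $T'$ with $|T'| \le c_0$, the predictor based on $T$ and the predictor based on $T \cup T'$ agree (in the appropriate averaged sense) to within $\epsilon'$. The key observation is that $S^\star_v$ itself is good — adding any extra coordinates $T'$ to $S^\star_v$ cannot change the prediction by more than $2\epsilon'$ because both $\mathbb{P}[X_v=1\mid X_{S^\star_v}]$ and $\mathbb{P}[X_v=1\mid X_{S^\star_v \cup T'}]$ are sandwiched near $\mathbb{P}[X_v=1\mid X_{S\setminus\{v\}}]$ — and, conversely, any good $T$ must, when unioned with $S^\star_v$, give essentially the $S\setminus\{v\}$-prediction, hence $T$ alone already predicts nearly that well (this last implication is where one has to be a little careful; see the obstacle paragraph).

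Concretely, the steps are: (1) State and invoke Corollary~\ref{cor1} to fix $r$, $c_0$, and the existence of $S^\star_v$ with the $\epsilon'$ guarantee. (2) Describe the empirical estimator: draw $N = \mathrm{poly}(n,1/\epsilon')$ i.i.d. samples from $\CMRF_{S(\M,\theta)}$; for a set $T$ and a configuration $y$ on $T$, estimate $\hat p_T(y) := \widehat{\mathbb{P}}[X_v = 1 \mid X_T = y]$ by the empirical frequency among samples whose $T$-restriction equals $y$. For the comparison we actually need to estimate the expected discrepancy $\mathbb{E}_{X}\big[\,|\mathbb{P}[X_v=1\mid X_T] - \mathbb{P}[X_v=1\mid X_{T\cup T'}]|\,\big]$, or rather a quantity that upper-bounds the max discrepancy after discarding low-probability configurations; a standard trick is to weight by probability and use a Markov-type argument to control the contribution of rare configurations, whose total mass is small and contributes at most, say, $\epsilon'$ to any averaged quantity. (3) Show concentration: with $N$ polynomial in $n$ and $2^{c_0}$, a union bound over the $O(n^{2c_0})$ pairs $(T,T')$ and the $\le 2^{2c_0}$ configurations gives that all empirical estimates are within $\epsilon'$ of their true values simultaneously, with probability $1-o(1)$; set the failure probability to $1/n^{10}$, say. (4) Define $L$: for each $v$, let $S'_v$ be (any) empirically good $T$ — one exists because $S^\star_v$ passes the empirical test up to slack $O(\epsilon')$. (5) Verify the conclusion: chain the inequalities — empirical goodness of $S'_v$, plus the $\epsilon'$ concentration, plus the Corollary~\ref{cor1} bound applied to $S'_v \cup S^\star_v$ (which contains all of $S^\star_v$, so its prediction is $\epsilon'$-close to the $S\setminus\{v\}$ prediction), plus goodness of $S'_v$ against $S^\star_v$ — to conclude $|\mathbb{P}[X_v=1\mid X_{S'_v}=x_{S'_v}] - \mathbb{P}[X_v=1\mid X_{S\setminus\{v\}}=x_{S\setminus\{v\}}]| \le \epsilon$ for all $x$ (absorbing all the $O(\epsilon')$ terms into $\epsilon$ by choosing $\epsilon'$ a small enough constant multiple of $\epsilon$). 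Finally, set $c = \max(c_0, \text{exponent from } N \text{ and the enumeration})$ and note the running time is $O(n^c)$.

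The main obstacle is the "goodness of $T$ forces near-optimal prediction" direction, and in particular handling the pointwise ("for every $x$") nature of the conclusion. Goodness is naturally an averaged statement (we can only reliably estimate probability-weighted discrepancies from samples), but the lemma asks for a bound that holds for every configuration $x$ on $S'_v$. The resolution is exactly the content of Corollary~\ref{cor1}: the bound $|\mathbb{P}[X_v=1\mid X_{S'_v\cup S^\star_v} = \cdot] - \mathbb{P}[X_v=1 \mid X_{S\setminus\{v\}} = \cdot]| \le \epsilon'$ is itself pointwise (it comes from correlation decay, not from averaging), so as long as $S'_v \cup S^\star_v \supseteq S^\star_v$, the $(S'_v \cup S^\star_v)$-predictor is pointwise-good; we then only need $S'_v$'s predictor to be close to the $(S'_v\cup S^\star_v)$-predictor, and here one must argue that an averaged closeness, combined with the fact that $\mathbb{P}[X_v = 1 \mid X_{S'_v} = y] = \mathbb{E}[\mathbb{P}[X_v=1\mid X_{S'_v \cup S^\star_v}] \mid X_{S'_v} = y]$ (tower property), upgrades appropriately — this works because the difference of the two predictors, averaged against the conditional law given $X_{S'_v}=y$, is what controls the gap, and a configuration $y$ of non-negligible probability cannot have large averaged gap if the global average is small. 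Configurations $y$ of negligible probability (probability $\le$ some $1/\mathrm{poly}$) have to be handled separately — but for those we can afford to discard them, since we only need the final predictor to be used inside the Glauber-dynamics analysis of Theorem~\ref{robustSampling} where an $\epsilon n$-total error budget is available, so redefining $S'_v$'s predictor arbitrarily on a rare set costs only $o(1)$ (this point is where the present lemma interfaces with the enclosing proof of Theorem~\ref{thm:high_temp_rec}). Being careful about whether the lemma statement as written truly needs "for every $x$" or "for all but a negligible-probability set of $x$" — and making sure the downstream use can tolerate the latter — is the one place that requires genuine attention rather than routine estimates.
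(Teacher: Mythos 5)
Your overall architecture matches the paper's: invoke Corollary~\ref{cor1} to guarantee that the radius-$r$ visible neighborhood $S''$ of $v$ is a good predictor, observe that $S''$ remains good when unioned with anything (the sandwiching/tower argument you sketch is exactly the paper's), use the triangle inequality to show that any set passing the pairwise test against all other small sets must itself be $\epsilon$-good, and brute-force over the $O(n^{2d^r})$ pairs. So far, so good.

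The genuine gap is in your treatment of the estimation step, precisely at the place you flag as requiring attention. You assume the discrepancy between two predictors can only be estimated in a probability-weighted, averaged sense, and you therefore propose to discard low-probability configurations, weaken the conclusion to ``for all but a negligible-probability set of $x$,'' and absorb the loss into the downstream Glauber-dynamics budget. This proves a weaker statement than the lemma claims, and it is unnecessary. The key observation you are missing is that in this regime there \emph{are no} low-probability configurations of a constant-size set: since $|\theta_i|\le b$ and each row of $\M$ has at most $d$ entries of magnitude at most $b$, every visible vertex satisfies $\mathbb{P}[X_u=x_u\mid X_{-u}=x_{-u}]\ge e^{-2b-2bd}/2$, hence every configuration of a set $S^\star$ with $|S^\star|\le 2d^r+1$ has probability at least $e^{-2b(1+d)|S^\star|}2^{-|S^\star|}$, a constant depending only on $b$, $d$, $\epsilon$. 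Consequently $\lceil\ln^2 n\rceil$ samples suffice for every empirical cell probability to be within a small multiplicative factor of the truth simultaneously (union bound over the polynomially many cells), every empirical conditional probability $\widetilde{\mathbb{P}}[X_v=1\mid X_{S^\star}=x_{S^\star}]$ is within $\epsilon/16$ of the true one for \emph{every} $x$, and the max-over-$x$ discrepancy $D(S^\star_1,S^\star_2)$ is itself estimable to within $\epsilon/8$. The brute-force search then returns a set satisfying the pointwise guarantee exactly as stated, with no interface to the enclosing theorem needed. You should replace the averaging-and-discarding machinery with this lower bound on cell probabilities; as written, your argument does not establish the lemma.
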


\begin{proof}
%If $\epsilon>1$ then this is uninterestingly true, so assume that $\epsilon\le 1$. Also, 
Given any $S^\star_1,S^\star_2\subseteq S$, let
\[D(S^\star_1,S^\star_2)=\max_{x\in\{-1,1\}^n} \left|\mathbb{P}[X_v=1|X_{S^\star_1}=x_{S^\star_1}]-\mathbb{P}[X_v=1|X_{S^\star_2}=x_{S^\star_2}]\right|\]

Now, let $r=\lceil \log((1-bd)\epsilon/4)/\log(bd)\rceil $, and pick $v\in S$. Next, let $S''$ be the set of all vertices in $S$ that are fewer than $r$ edges away from $v$ in the graph with adjacency matrix $\M$, except $v$ itself. $|S''|\le d^r$, and by corollary~\ref{cor1} we know that $D(S'',S\backslash\{v\})\le \epsilon/4$. In particular, that implies that $D(S'',S''\cup S^\star)\le \epsilon/4$ for any $S^\star\subseteq S\backslash\{v\}$
 because
\begin{align*}
&\min_{x': x'_{S''\cup S^\star}=x_{S''\cup S^\star}} \mathbb{P}[X_v=1|X_{S\backslash\{v\}}=x'_{S\backslash\{v\}}]\\
&\le \mathbb{P}[X_v=1|X_{S''\cup S^\star}=x_{S''\cup S^\star}]\\
&\le \max_{x': x'_{S''\cup S^\star}=x'_{S''\cup S^\star}} \mathbb{P}[X_v=1|X_{S\backslash\{v\}}=x'_{S\backslash\{v\}}]
\end{align*}

That in turn means that given $S^\star\subseteq S\backslash \{v\}$ such that $D(S^\star,S\backslash\{v\})> \epsilon$, it must be the case that 
\[D(S^\star,S''\cup S^\star)\ge D(S^\star,S\backslash\{v\})-D(S\backslash\{v\},S'')-D(S'',S''\cup S^\star)> \epsilon/2\]
 by the triangle inequality. So, in order to find a suitable value for $S'_v$, it suffices to find $S^\star_1\subseteq S\backslash \{v\}$ such that $|S^\star_1|\le d^r$ and $D(S^\star_1,S^\star_2\cup S^\star_1)\le \epsilon/2$ for all $S^\star_2\subseteq S\backslash\{v\}$ with $|S^\star_2|\le d^r$. We know that $D(S'',S''\cup S^\star)\le \epsilon/4$ for all $S^\star\subseteq S\backslash \{v\}$, so if we can find a way to estimate $D$ sufficiently accurately, we can find such a set by means of a brute force search.

Now, let $X_1,...,X_m\sim \CMRF_{S(\M,\theta)}$ where $m=\lceil \ln^2(n)\rceil$. Next, let $\widetilde{P}$ be the empirical probability distribution given by these samples and let $\widetilde{D}$ be the analogue of $D$ using $\widetilde{P}$ instead of $P$. For any $u\in S$ and $x\in\{-1,1\}^n$, it will be the case that $\mathbb{P}[X_{u}=x_{u}|X_{-u}=x_{-u}]\ge e^{-2b-2bd}/2$. So, for every $S^\star\subseteq S$, it will always be the case that $\mathbb{P}[X_{S^\star}=x_{S^\star}]\ge e^{-2b(1+d)|S^\star|}2^{-|S^\star|}$. That means that $|\widetilde{\mathbb{P}}[X_{S^\star}=x_{S^\star}]-\mathbb{P}[X_{S^\star}=x_{S^\star}]|\le \frac{\epsilon}{16} \mathbb{P}[X_{S^\star}=x_{S^\star}]$ for every $S^\star$ with $|S^\star|\le 2d^r+1$ with probability $1-o(1)$. If this holds, then for every $S^\star\subseteq S$ with cardinality at most $2d^r$, it will be the case that 
\begin{align*}
&|\widetilde{\mathbb{P}}[X_v=1|X_{S^\star}=x_{S^\star}]-\mathbb{P}[X_v=1|X_{S^\star}=x_{S^\star}]|\\
&=\frac{\left|\widetilde{\mathbb{P}}[X_v=1,X_{S^\star}=x_{S^\star}]\cdot \mathbb{P}[X_v=-1,X_{S^\star}=x_{S^\star}]-\widetilde{\mathbb{P}}[X_v=-1,X_{S^\star}=x_{S^\star}]\cdot \mathbb{P}[X_v=1,X_{S^\star}=x_{S^\star}]\right|}{\widetilde{\mathbb{P}}[X_{S^\star}=x_{S^\star}]\cdot \mathbb{P}[X_{S^\star}=x_{S^\star}]}\\
&\le \frac{\epsilon \mathbb{P}[X_v=1,X_{S^\star}=x_{S^\star}]\cdot \mathbb{P}[X_v=-1,X_{S^\star}=x_{S^\star}]}{8\widetilde{\mathbb{P}}[X_{S^\star}=x_{S^\star}]\cdot \mathbb{P}[X_{S^\star}=x_{S^\star}]}\\
&\le \frac{\epsilon \mathbb{P}[X_v=1,X_{S^\star}=x_{S^\star}]\cdot \mathbb{P}[X_v=-1,X_{S^\star}=x_{S^\star}]}{4\mathbb{P}^2[X_{S^\star}=x_{S^\star}]}\\
&\le \epsilon/16
\end{align*}
That in turn would imply that $|\widetilde{D}(S^\star_1,S^\star_2)-D(S^\star_1,S^\star_2)|\le \epsilon/8$ whenever $|S^\star_1|, |S^\star_2|\le d^r$. So, if this holds we can find a suitable value of $S'_v$ by using a brute force search to find $S^\star_1\subseteq S\backslash\{v\}$ with $|S^\star_1|\le d^r$ such that  $\widetilde{D}(S^\star_1,S^\star_1\cup S^\star_2)\le 3\epsilon/8$ for all $S^\star_2\subseteq S$ with $|S^\star_2|\le d^r$. For a given value of $(v, S^\star_1,S^\star_2)$ we can compute $\widetilde{D}(S^\star_1,S^\star_1\cup S^\star_2)$ in $O(\log^2(n))$ time, there are $n$ possible values of $v$, and for each $v$ there are only $O(n^{2d^r})$ pairs of sets we will need to check, so this runs in polynomial time.
\end{proof}

%That finally allows us to prove that we can learn a sufficiently high temperature CMRF to any fixed degree of accuracy in the following sense.

%\begin{theorem}
%Let $\epsilon,b,d>0$ such that $bd<1/2$. There exists $c>0$ and an algorithm $L'$ such that the following holds.  Let $n>0$ and $S\subseteq [n]$. Next, let $\theta\in[-b,b]^n$ and $A$ be an $n\times n$ symmetric matrix such that $A_{i,i}=0$ for all $i$, $|A_{i,j}|\le b$ for all $i$ and $j$, and for each $i$ there are at most $d$ values of $j$ for which $A_{i,j}\ne 0$. Given the value of $n$ and the ability to query samples from $CMRF_{S(A,\theta)}$, $L'$ runs in $O(n^c)$ time and returns $X\in \{0,1\}^n$ such that the probability distribution of $X$ conditioned on the samples the algorithm received is within an expected earthmover distance of $\epsilon n+o(n)$ of $CMRF_{S(A,\theta)}$.
%\end{theorem}

We can now prove Theorem~\ref{thm:high_temp_rec}. 

\begin{proof}
First of all, let $\epsilon'=\frac{\epsilon(1-2bd)}{4(1-bd)}$ and $X\sim \CMRF_{S(\M,\theta)}$. By the previous lemma, there exists a constant $c'$ and an algorithm $L$ that runs in $O(n^{c'})$ time that finds $S'_1,...,S'_n$ such that with probability $1-o(1)$, $v\not\in S'_v$, $|S'_v|\le c'$, and $|\mathbb{P}[X_v=1|X_{S'_v}=x_{S'_v}]-|\mathbb{P}[X_v=1|X_{S\backslash \{v\}}=x_{S\backslash \{v\}}]|\le\epsilon'$ for every $v\in S$ and $x\in\{-1,1\}^n$. Now, recall that if $|S'_v|\le c'$ and $x\in\{-1,1\}^n$, then $\mathbb{P}[X_v=x_v,X_{S'_v}=x_{S'_v}]\ge 2^{-c'-1}e^{-2b(d+1)(c'+1)}$. So, if we run $L$ and it outputs $S'$ of appropriate sizes, we only need a polynomial amount of additional time and samples to find functions $f_v:\{-1,1\}^{|S'_v|}\rightarrow [0,1]$ such that $|f_v(x_{S'_v})-\mathbb{P}[X_v=1|X_{S'_v}=x_{S'_v}]|\le\epsilon'$ for all $v\in S$ and $x\in\{-1,1\}^n$ with probability $1-o(1)$. In particular, if $L$ succeeds and this holds then
\[|f_v(x_{S'_v})-\mathbb{P}[X_v=1|X_{S\backslash\{v\}}=x_{S\backslash\{v\}}]|\le 2\epsilon'\]
for all $v\in S$ and $x\in\{-1,1\}^n$ by the triangle inequality. So, $L'$ will run $L$, attempt to find such $f$, and then run $\ApproximateMRFMCMC\left(f', \{1\}^{|S|},n\ln(n)\right)$ and return the result, where $f'_v(x)=f_v(x_{S'_v})$ for all $v$ and $x$. By theorem \ref{robustSampling} and corollary \ref{cor1}, the probability distribution of the output of $L'$ will be within an earthmover distance of $\left(4\epsilon'/\left(1-\frac{bd}{1-bd}\right)+o(1)\right)n=\epsilon n+o(n)$ of $\CMRF_{S(\M,\theta)}$, as desired.
\end{proof}

\subsection{Parity gadgets and the high temperature CMRF}

We now prove Theorem~\ref{thm:parity} which we restate. 
%Now we know that for any given $\epsilon>0$ there is a polynomial time algorithm that learns a high temperature CMRF to within an earthmover distance of $\epsilon n$. The obvious next question is whether or not any polynomial time algorithm can learn to generate samples within an earthmover distance of $o(n)$ of the CMRF. We believe that there is not. We will show this by reduction to the problem of learning parities with noise.

\begin{theorem}
Let $b,c,d,k>0$, such that $d\ge 8$ and $b(d+1)\le 1/2$, and let $\epsilon=2^{-(2k+8)}\delta_b^k/(k+1)$. Also, for $n>0$ and $S\subseteq[n]$, let $P_{S,n}$ be the probability distribution on $\{-1,1\}^n$ such that if $X\sim P_{S,n}$ and $x\in\{-1,1\}^n$ then $\mathbb{P}[X=x]=2^{-n}\left[1+\delta_b^k \prod_{i\in S} x_i\right]$,
where $\delta_b=\frac{\sinh^4(2b)}{2\cosh^4(2b)-\sinh^4(2b)}$.  

 Now, assume that there is an algorithm A such that for any CMRF with $n$ vertices, max degree at most $d$, and all edge weights and biases at most $b$, $A$ runs in $O(n^c)$ time, takes samples drawn from the CMRF and returns the parameters of a CMRF that satisfies the same criteria and is within an earth mover distance of $\epsilon n$ of the original CMRF with probability at least $1/2$. Then there is an algorithm $A'$ that runs in $O(n^c\log(n)+n^3)$ time such that for any $S\subseteq[n]$ with $|S|=2k+2$, this algorithm takes samples drawn from $P_{S,n}$ and returns $S$ with probability $1-o(1)$.
\end{theorem}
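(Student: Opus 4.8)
The plan is to build, for each candidate support $S$, a censored Markov random field whose visible marginal is (close to) $P_{S,n}$, feed it to $A$, and then read off $S$ from the output CMRF's parameters. The key gadget is a classical parity gadget: for a subset $T$ of even size, one can construct a small Ising model on $T$ together with a single extra censored vertex (or a short censored path/tree), with all edge weights and biases bounded by $b$ and degree bounded by $d$, whose distribution on $T$ is exactly of the form $2^{-|T|}(1+\gamma \prod_{i\in T} x_i)$ for some $\gamma=\gamma(b,|T|)>0$; the constant $\delta_b=\frac{\sinh^4(2b)}{2\cosh^4(2b)-\sinh^4(2b)}$ is precisely what a length-one parity gadget on a pair of vertices contributes, so chaining $k$ such gadgets (which is where the $\delta_b^k$ comes from) realizes the desired correlation on a set of size $2k+2$. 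First I would fix this gadget and verify it satisfies the degree-$d$, weight-$b$, bias-$b$ constraints: since $d\ge 8$ there is enough room to route the $2k+2$ visible vertices into the auxiliary censored structure, and $b(d+1)\le 1/2$ is exactly the high-temperature condition needed so that the CMRF is in the regime where $A$ is guaranteed to work and so that the induced correlation is well-controlled.

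Next I would set up the reduction to learning sparse parities with noise. Given samples from $P_{S,n}$ with $|S|=2k+2$ unknown, observe $P_{S,n}$ is literally the uniform distribution on $\{-1,1\}^n$ planted with a degree-$(2k+2)$ parity of bias $\delta_b^k$; equivalently, each sample is a noisy parity example. The point is that $P_{S,n}$ is (up to the identification above) the visible marginal of the parity-gadget CMRF with visible set $S$, so feeding the $P_{S,n}$-samples to $A$ is a legal input. By hypothesis $A$ returns, with probability $\ge 1/2$, the parameters of a CMRF on $n$ vertices, degree $\le d$, weights/biases $\le b$, whose visible distribution is within earth-mover distance $\epsilon n$ of $P_{S,n}$. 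The crux of the argument is a stability/identifiability step: I must show that any such CMRF must "essentially" encode the parity on $S$, so that $S$ is recoverable from its parameters. Concretely, if $Q$ is the visible distribution of the returned CMRF and $W(Q,P_{S,n})\le \epsilon n$, then for the specific test function $\prod_{i\in S}x_i$ the expectations under $Q$ and $P_{S,n}$ differ by at most (something like) $2\epsilon \cdot |S|$ in a suitable sense — here the precise choice $\epsilon=2^{-(2k+8)}\delta_b^k/(k+1)$ is calibrated so that this slack is a small fraction (say $<1/2$) of the signal $\delta_b^k$. Then a degree-$\le (2k+2)$ Fourier/correlation argument on the returned bounded-degree, high-temperature CMRF (using the correlation-decay Corollary~\ref{cor2}, which bounds how far influence propagates) pins down that the set of coordinates carrying nontrivial degree-$(2k+2)$ correlation in $Q$ must coincide with $S$; in a bounded-degree high-temperature Ising model the coordinates that have a large joint cumulant of order $2k+2$ are exactly those lying in a bounded-radius ball, so this is a finite computation on the returned parameters. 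The $n^3$ term in the running time is the cost of scanning the returned graph for this structure and confirming $S$; the $n^c\log n$ term absorbs running $A$ together with $O(\log n)$ independent repetitions to boost the success probability from $\ge 1/2$ to $1-o(1)$ by majority vote.

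Finally I would assemble the algorithm $A'$: draw enough $P_{S,n}$-samples (polynomially many suffice, since empirical degree-$(2k+2)$ correlations concentrate at scale $\delta_b^k$), run $A$ on them $O(\log n)$ times, for each run extract the candidate set $\hat S$ of size $2k+2$ from the returned CMRF's parameters by the correlation-localization procedure above, and output the $\hat S$ that appears in a majority of runs; a final direct check — estimate $\mathbb{E}[\prod_{i\in \hat S}X_i]$ from fresh samples and confirm it is $\approx \delta_b^k$ rather than $0$ — certifies correctness with probability $1-o(1)$. I expect the main obstacle to be the identifiability/stability step: translating the earth-mover guarantee into a statement that isolates $S$ from the returned parameters requires knowing that a bounded-degree high-temperature CMRF cannot "fake" a spread-out parity correlation using a geometrically different interaction pattern, and making the constants line up (this is exactly what the hypotheses $d\ge 8$, $b(d+1)\le 1/2$, and the particular formula for $\epsilon$ are doing). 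The gadget construction and the noisy-parity bookkeeping are routine by comparison.
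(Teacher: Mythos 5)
There is a genuine gap, and it sits exactly where you flagged your main worry: the step that translates the earth-mover guarantee into identifiability of $S$. If you feed $A$ samples from $P_{S,n}$ itself (a single planted parity on a constant-size set $S$, with $n-(2k+2)$ isolated visible vertices), the hypothesis on $A$ gives you essentially nothing. The reason is that $P_{S,n}$ is within earth-mover distance $O(1)$ of the uniform distribution on $\{-1,1\}^n$: the marginal of $P_{S,n}$ off $S$ is exactly uniform, and the marginal on $S$ is within total variation $\delta_b^k/2$ of uniform, so one can couple the two distributions to agree off $S$ always and to agree on $S$ with probability $1-\delta_b^k/2$, giving $W(P_{S,n},\mathrm{Unif})\le (k+1)\delta_b^k$. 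Since $\epsilon n \to\infty$, the edgeless CMRF (which satisfies all the degree/weight constraints) is a legitimate output of $A$, meets the $\epsilon n$ earth-mover guarantee, and encodes no information about $S$. Your proposed bound of roughly $2\epsilon|S|$ on the discrepancy of $\mathbb{E}[\prod_{i\in S}x_i]$ is not what the earth-mover hypothesis gives: the transport cost $\epsilon n$ can be spent entirely on the $2k+2$ coordinates of $S$, so the correct bound on that test function is the vacuous $\min(2,\,2\epsilon n)$. No calibration of $\epsilon$ fixes this, because the problem is that a single constant-size parity contributes only $O(1)$ to the earth-mover distance while the allowed error grows linearly in $n$.

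The paper's proof repairs this by amplification rather than by a stability argument. The algorithm itself plants $m-1=\Theta(n)$ additional disjoint parities: it draws $X\sim P_{S,n}$ and overwrites the coordinates in self-chosen disjoint sets $S_2,\dots,S_m$ (each of size $2k+2$, $m=\lfloor n/(4k+4)\rfloor$) with independent draws from the parity distribution, producing a distribution $P'$ that is exactly the visible marginal of a legal CMRF built from $m$ copies of the gadget $H_{b[k]}$ with background vertices censored. Now each planted set whose marginal the returned CMRF fails to approximate costs a fixed constant in earth-mover distance, so a total budget of $\epsilon n$ (with the stated choice of $\epsilon$) forces $A$ to capture at least half of the $m$ parities, and by exchangeability it captures the unknown one, $S=S_1$, with probability at least $1/2$. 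From there the argument proceeds much as you sketched: correlation decay in a bounded-degree, high-temperature model forces the vertices of $S$ to lie within bounded distance in the returned graph, one enumerates the $O(n)$ candidate sets, repeats $O(\log n)$ times, and certifies the right candidate by estimating $\mathbb{E}[\prod_{i\in\hat S}X_i]$ on fresh samples. Your gadget description is also looser than the paper's (the paper uses an $8$-vertex gadget with $4$ visible and $4$ censored vertices yielding bias $\delta_b$ on a $4$-set, chained $k$ times to get bias $\delta_b^k$ on $2k+2$ vertices), but that is a routine fix; the missing many-parities reduction is the essential idea.
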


To prove the theorem, we will construct gadgets that force a set of vertices to have a given parity with probability nontrivially greater than $1/2$. That will allow us to set parameters for a CMRF such that the visible vertices are divided into small sets each of which takes on a specific parity with nontrivial probability and are otherwise independent. That will allow us to convert any proper learning algorithm for the high-temperature CMRF to an algorithm for learning $k$-parities with noise. That means that no efficient proper learning algorithm for the high temperature CMRF gets within distance $o(n)$ unless there is an algorithm that learns $k$-parities with noise in time $f(k)n^{O(1)}$. Our first step to proving this is to demonstrate a parity gadget. As such, we define the following.

\begin{definition}
For any $b>0$, $H_b$ is the weighted graph defined as follows. $H_b$ has $8$ vertices, $v_1$, $v_2$, $v_3$, $v_4$, $u_1$, $u_2$, $u_3$, and $u_4$. For each $i$, there is an edge of weight $b$ between $v_i$ and $u_i$, and for each $j\ne i$, there is an edge of weight $-b$ between $v_i$ and $u_j$.
\end{definition}

This functions as a parity gadget in the following sense
\begin{lemma}
Let $b>0$ and $X$ be drawn from an MRF corresponding to $H_b$, with all biases set to $0$. Then for each $x\in\{-1,1\}^4$,
%\[\mathbb{P}[X_{v_1,v_2,v_3,v_4}=x]=\frac{1}{16}+\frac{\sinh^4(2b))}{32\cosh^4(2b)-16\sinh^4(2b)}\prod_{i=1}^4 x_i\]
\[
\mathbb{P}[X_{v_1,v_2,v_3,v_4}=x]=\frac{1}{16}(1+\delta_b)\prod_{i=1}^4 x_i
\]
\end{lemma}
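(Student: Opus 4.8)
The plan is a direct computation of the marginal law of $(X_{v_1},\dots,X_{v_4})$ by summing the Ising weight over the four censored coordinates $X_{u_1},\dots,X_{u_4}$ (reading the right-hand side of the claim as $\tfrac1{16}\bigl(1+\delta_b\prod_{i=1}^4 x_i\bigr)$, the form used for $P_{S,n}$). Write $s_i=X_{v_i}$ and $t_j=X_{u_j}$. Since the only nonzero edges of $H_b$ run between a $v$ and a $u$, and all biases vanish, the exponent $\tfrac12 x\cdot\M x$ equals
\[
b\sum_i s_i t_i-b\sum_{i\neq j}s_i t_j=b\Bigl(2\sum_i s_i t_i-\bigl(\textstyle\sum_i s_i\bigr)\bigl(\textstyle\sum_j t_j\bigr)\Bigr),
\]
using $\sum_{i\neq j}s_i t_j=(\sum_i s_i)(\sum_j t_j)-\sum_i s_i t_i$. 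The key observation is that once $\sigma:=\sum_i s_i$ is fixed, this is a sum of terms each involving a single $t_j$, so summing over $t\in\{-1,1\}^4$ factorizes:
\[
\sum_{t}\exp\Bigl(\sum_j(2bs_j-b\sigma)t_j\Bigr)=\prod_{j=1}^4 2\cosh\!\bigl(b(2s_j-\sigma)\bigr).
\]

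Next I would evaluate this product by cases. If $k$ of the $s_i$ equal $+1$ then $\sigma=2k-4$, so $2s_j-\sigma$ equals $6-2k$ for the $k$ ``plus'' coordinates and $2-2k$ for the $4-k$ ``minus'' coordinates, giving unnormalized marginal weight $\bigl(2\cosh(b(6-2k))\bigr)^k\bigl(2\cosh(b(2-2k))\bigr)^{4-k}$. Running through $k=0,1,2,3,4$ and using that $\cosh$ is even, this weight is $16\cosh^4(2b)$ for even $k$ and $16\cosh(4b)$ for odd $k$ — that is, it depends only on $\prod_i s_i=(-1)^k$. There are $8$ configurations with $\prod_i s_i=1$ ($k\in\{0,2,4\}$) and $8$ with $\prod_i s_i=-1$ ($k\in\{1,3\}$), so the partition function (which coincides with the full partition function of the model, marginalization being just the total sum) is $128\bigl(\cosh^4(2b)+\cosh(4b)\bigr)$.

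Finally I would normalize and match constants. Dividing, $\mathbb{P}[X_{v_1,v_2,v_3,v_4}=x]$ equals $\cosh^4(2b)\big/\bigl(8(\cosh^4(2b)+\cosh(4b))\bigr)$ when $\prod_i x_i=1$ and $\cosh(4b)\big/\bigl(8(\cosh^4(2b)+\cosh(4b))\bigr)$ when $\prod_i x_i=-1$; writing this as $\tfrac1{16}\bigl(1+\delta_b\prod_i x_i\bigr)$ forces $\delta_b=\bigl(\cosh^4(2b)-\cosh(4b)\bigr)\big/\bigl(\cosh^4(2b)+\cosh(4b)\bigr)$. Setting $c=\cosh(2b)$, $s=\sinh(2b)$ and using $\cosh(4b)=2c^2-1$ gives $\cosh^4(2b)-\cosh(4b)=(c^2-1)^2=s^4$ and $\cosh^4(2b)+\cosh(4b)=c^4+2c^2-1=2c^4-s^4=2\cosh^4(2b)-\sinh^4(2b)$, which is exactly the definition of $\delta_b$. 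There is no real obstacle: the only mildly clever step is noticing that the $-b$ edges turn the $t$-sum into a product once the mean-field term $-b\sigma\sum_j t_j$ is isolated, and the rest is the five-case evaluation, the parity bookkeeping of the $8/8$ split, and one hyperbolic identity; the main thing to be careful about is keeping the case analysis straight.
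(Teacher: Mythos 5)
Your computation is correct (including the sensible reading of the statement as $\tfrac1{16}\bigl(1+\delta_b\prod_i x_i\bigr)$, which is what the later lemmas use), and it is essentially the paper's proof: both marginalize the $u_j$'s via the product of $2\cosh$ of their local fields, split into cases by the parity of the number of $+1$'s among the $v_i$, obtain $Z=128\bigl(\cosh^4(2b)+\cosh(4b)\bigr)$, and finish with the same hyperbolic identity $\cosh(4b)=2\cosh^4(2b)-\sinh^4(2b)-\cosh^4(2b)+2\sinh^2(2b)\cosh^2(2b)-\sinh^4(2b)+\dots$, i.e.\ $\cosh^4(2b)+\cosh(4b)=2\cosh^4(2b)-\sinh^4(2b)$. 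The only cosmetic difference is that you derive the local field $b(2s_j-\sigma)$ by rearranging the quadratic form, where the paper reads off each $u_j$'s ``energy'' directly from its neighbors.
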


\begin{proof}
First, let $Z$ be the partition function for this MRF. There are three cases we need to consider

Case 1: All entries in $x$ are equal. If $v_i$ takes on the same value for every $i$, then $u_j$ can either have energy $2b$ or energy $-2b$ for each $j$. So, $\mathbb{P}[X_{v_1,v_2,v_3,v_4}=x]=(e^{2b}+e^{-2b})^4/Z$.

Case 2: One entry in $x$ is different from the others. Assume without loss of generality that $x_1=1$ and $x_2=x_3=x_4=-1$. If the $v_i$ take on these values, $u_1$ can have energy $4b$ or $-4b$ and $u_j$ has energy $0$ for $j\ne 1$ regardless of its value. So, $\mathbb{P}[X_{v_1,v_2,v_3,v_4}=x]=8(e^{4b}+e^{-4b})/Z$.

Case 3: Two entries in $x$ are $1$ and the other two are $-1$. If the $v_i$ take on these values, then $u_j$ can have energy $2b$ or $-2b$ for each $j$. So, $\mathbb{P}[X_{v_1,v_2,v_3,v_4}=x]=(e^{2b}+e^{-2b})^4/Z$.
That means that 
\begin{eqnarray*}
Z&=&8(e^{2b}+e^{-2b})^4+8\cdot 8(e^{4b}+e^{-4b})\\
&=&128[\cosh^4(2b)+\cosh(4b)]\\
&=&128[\cosh^4(2b)+\sinh^2(2b)+\cosh^2(2b)]\\
&=&128[\cosh^4(2b)+\cosh^4(2b)-\sinh^4(2b)]\\
&=&128[2\cosh^4(2b)-\sinh^4(2b)]
\end{eqnarray*}
For $x$ with an even number of $1$'s, $\mathbb{P}[X_{v_1,v_2,v_3,v_4}=x]=16 \cosh^4(2b)/Z=\frac{1}{16}+8\sinh^4(2b)/Z$, and for $x$ with an odd number of $1$'s, $\mathbb{P}[X_{v_1,v_2,v_3,v_4}=x]=16 \cosh(4b)/Z=16(\cosh^4(2b)-\sinh^4(2b))/Z=\frac{1}{16}-8\sinh^4(2b)/Z$.
\end{proof}

%From now on, let $\delta_b=\frac{\sinh^4(2b))}{2\cosh^4(2b)-\sinh^4(2b)}$ for each $b$. 
The gadget above is a start, but it only works on $4$ vertices. We want ones that can effect arbitrarily large numbers. To do that, we take multiple copies of $H_b$ and combine them as follows.

\begin{definition}
For any $b>0$ and positive integer $k$, $H_{b[k]}$ is the weighted graph formed by taking $k$ copies of $H_b$ and then identifying the $v_4$ from the $i$th copy of $H_b$ with $v_1$ from the $(i+1)$th copy of $H_b$ for $1\le i<k$. We will refer to the copies of $v_1$, $v_2$, $v_3$, and $v_4$ that were not identified with other vertices as the focus vertices, and the other vertices in $H_{b[k]}$ as the background vertices.
\end{definition}

This functions as a parity gadget in the following sense.

\begin{lemma}
Let $b>0$, $k$ be a positive integer,  $X$ be drawn from an MRF corresponding to $H_{b[k]}$ with all biases set to $0$, and $x\in\{-1,1\}^{2k+2}$. The probability that the focus vertices take on the values given by $x$ is $2^{-(2k+2)}\left[1+\delta_b^k\prod x_i\right]$.
\end{lemma}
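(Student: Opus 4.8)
The plan is to compute the marginal law of the focus vertices directly, in two marginalization steps: first integrate out the $u$-type (internal) vertices of each gadget copy, then integrate out the $k-1$ shared $v$-vertices.

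\textbf{Step 1: factorize the partition function over copies.} The vertex set of $H_{b[k]}$ consists of the $v$-type vertices (with $v_4^{(i)}$ identified with $v_1^{(i+1)}$ for $1\le i<k$) together with the $4k$ internal vertices $u_1^{(i)},\dots,u_4^{(i)}$, $i=1,\dots,k$. Every edge of $H_{b[k]}$ joins a $v$-type vertex to a $u$-type vertex of the same copy, and all biases are $0$, so the Hamiltonian splits as $\sum_{i=1}^k h_i$, where $h_i$ depends only on $(v_1^{(i)},\dots,v_4^{(i)})$ and $(u_1^{(i)},\dots,u_4^{(i)})$ and is exactly the Hamiltonian of a single copy of $H_b$. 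Therefore
\[
\mathbb{P}[\text{all }v\text{-vertices}=v]=\frac{1}{Z}\prod_{i=1}^k \Big(\sum_{u^{(i)}\in\{-1,1\}^4}e^{h_i(v^{(i)},u^{(i)})}\Big)=\frac{1}{Z}\prod_{i=1}^k g(v^{(i)}),
\]
where $g$ is the reduced weight of one $H_b$. By the previous lemma (for a single $H_b$), $g(y)$ is proportional to $1+\delta_b\prod_{j=1}^4 y_j$, so absorbing the per-copy constant into the normalization there is a constant $C$ with
\[
\mathbb{P}[\text{all }v\text{-vertices}=v]=C\prod_{i=1}^k\Big(1+\delta_b\, v_1^{(i)}v_2^{(i)}v_3^{(i)}v_4^{(i)}\Big).
\]

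\textbf{Step 2: integrate out the shared vertices.} Relabel $a_0=v_1^{(1)}$, $a_i=v_4^{(i)}=v_1^{(i+1)}$ for $1\le i\le k-1$, $a_k=v_4^{(k)}$, $b_i=v_2^{(i)}$, $c_i=v_3^{(i)}$, so the focus vertices are $a_0,a_k,b_1,c_1,\dots,b_k,c_k$ and $a_1,\dots,a_{k-1}$ are background. Fixing the focus values $x$ and summing over $a_1,\dots,a_{k-1}\in\{-1,1\}$, expand the product over subsets:
\[
\prod_{i=1}^k(1+\delta_b a_{i-1}a_i b_i c_i)=\sum_{T\subseteq[k]}\delta_b^{|T|}\prod_{i\in T}a_{i-1}a_i b_i c_i .
\]
For each $1\le j\le k-1$ the spin $a_j$ appears only in the $i=j$ and $i=j+1$ factors, so summing over $a_j$ annihilates every $T$ with $|T\cap\{j,j+1\}|$ odd; the only surviving subsets are $T=\emptyset$ and $T=[k]$. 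The term $T=\emptyset$ contributes $2^{k-1}$ after the sum, and for $T=[k]$ the telescoping $\prod_{i=1}^k a_{i-1}a_i=a_0a_k$ makes that term equal to $\delta_b^k\, a_0a_k\prod_i b_i\prod_i c_i=\delta_b^k\prod_i x_i$ (the product over all $2k+2$ focus coordinates), independent of the summed spins, hence contributing $2^{k-1}\delta_b^k\prod_i x_i$. Thus $\mathbb{P}[\text{focus}=x]=C\,2^{k-1}\bigl(1+\delta_b^k\prod_i x_i\bigr)$.

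\textbf{Step 3: fix the constant by normalization.} Summing over all $x\in\{-1,1\}^{2k+2}$ and using $\sum_x\prod_i x_i=0$ gives $C\,2^{k-1}\cdot 2^{2k+2}=1$, i.e. $C\,2^{k-1}=2^{-(2k+2)}$, which is exactly the claimed value $2^{-(2k+2)}\bigl[1+\delta_b^k\prod x_i\bigr]$. (One could instead evaluate $C=(Z_1/16)^k/Z$ together with the partition function $Z$ of $H_{b[k]}$ by hand, but the normalization shortcut avoids that computation.)

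\textbf{Main obstacle.} There is no analytic difficulty here; the only real content is the subset/parity bookkeeping in Step 2, together with keeping straight which $v$-type vertices are shared (background) versus focus — Step 1 is just the standard fact that marginalizing the pendant vertices of a series composition of identical gadgets factorizes the partition function. An equivalent route is induction on $k$: adjoining one more copy of $H_b$ multiplies the $v$-marginal by $1+\delta_b a_{k-1}a_k b_k c_k$ and converts the old focus vertex $a_{k-1}$ into a shared vertex, after which a one-line sum over $a_{k-1}$ promotes the exponent of $\delta_b$ from $k-1$ to $k$ and rescales the normalization by $1/4$.
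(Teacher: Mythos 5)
Your proof is correct, and it takes a genuinely different route from the paper's. The paper proves this lemma by induction on $k$: it views $H_{b[k]}$ as $H_{b[k-1]}$ and $H_b$ glued at a single vertex $v$, invokes conditional independence of the two halves given $X_v$ together with the global spin-flip symmetry (so $\mathbb{P}[X_v=1]=1/2$), and then evaluates a two-term sum over $X_v=\pm1$ to promote $\delta_b^{k-1}$ to $\delta_b^k$. You instead marginalize globally: first integrate out all internal $u$-vertices at once (valid since every edge is a $v$--$u$ edge within a single copy, so the reduced weight factorizes into copies of $\tfrac{Z_1}{16}(1+\delta_b\prod_j y_j)$ by the $k=1$ lemma), then expand $\prod_i(1+\delta_b a_{i-1}a_ib_ic_i)$ over subsets $T\subseteq[k]$ and observe that summing the shared spins $a_1,\dots,a_{k-1}$ kills every $T$ except $\emptyset$ and $[k]$, with the full subset telescoping to $\delta_b^k\prod x_i$. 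The two arguments are closely related --- the paper's induction is essentially an unrolled version of your telescoping --- but yours makes the ``only the empty and full interaction sets survive'' structure explicit and, via the normalization shortcut in Step 3, avoids both the explicit use of $\mathbb{P}[X_v=1]=1/2$ and any computation of the partition function of $H_{b[k]}$; the paper's induction is shorter per step and reuses the $k=1$ probability statement directly rather than the unnormalized weight. Your closing remark correctly identifies the inductive alternative, which is in fact the paper's chosen proof.
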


\begin{proof}
We induct on $k$. The previous lemma is the $k=1$ case. Now assume that this holds for $k-1$, and let $F$ be the set of focus vertices of $H_{b[k]}$. One can view $H_{b[k]}$ as $H_{b[k-1]}\cup H_b$ with two of the vertices identified; let $H'$ be the copy of $H_{b[k-1]}$, $H''$ be the copy of $H_b$, and $v$ be the vertex where they overlap. The restriction of $X$ to $H'$ and the restriction of $X$ to $H''$ are independent conditioned on $X_v$. Also, all of the biases are $0$, so the probability distribution of $X$ is symmetric over flipping all of the variables. So, $\mathbb{P}[X_v=1]=1/2$. That means that
\begin{eqnarray*}
\mathbb{P}[X_F=x] &=&\mathbb{P}[X_F=x,X_v=1]+\mathbb{P}[X_F=x,X_v=-1]\\
&=&\frac{\mathbb{P}[X_F=x|X_v=1]+\mathbb{P}[X_F=x|X_v=-1]}{2}\\
&=&\sum_{x'\in\{-1,1\}}\mathbb{P}[X_{F\cap H'}=x_{H'}|X_v=x']\cdot \mathbb{P}[X_{F\cap H''}=x_{H''}|X_v=x']/2\\
&=&\sum_{x'\in\{-1,1\}}2 \mathbb{P}[X_{F\cap H'}=x_{H'},X_v=x']\cdot \mathbb{P}[X_{F\cap H''}=x_{H''},X_v=x']\\
&=&\sum_{x'\in\{-1,1\}} 2^{-2k}\left[1+\delta_b^{k-1} x'\prod_{u\in F\cap H'} x_{u}\right]\cdot 2^{-4}\left[1+\delta_bx'\prod_{u\in F\cap H''} x_{u}\right]\cdot 2\\
&=&2^{-2k-3} \left[2+2\delta_b^k\prod_{u\in F} x_{u}\right]\\
&=&2^{-(2k+2)} \left[1+\delta_b^k\prod_{u\in F} x_{u}\right]\\
\end{eqnarray*}
This completes the proof. 
\end{proof}

This means that if we make a CMRF by taking some copies of $H_{b[k]}$, censoring their background vertices, and then adding in some extra vertices with no edges, we will get a CMRF that looks like a uniform distribution, but has noisy parities hidden in it. Unless we can find these parities, no CMRF that we construct in an attempt to duplicate this one will be quite right. More formally, we have the following.

%\begin{theorem}
%Let $b,c,d,k>0$, such that $d\ge 8$ and $b(d+1)\le 1/2$, and let $\epsilon=2^{-(2k+8)}\delta_b^k/(k+1)$. Also, for $n>0$ and $S\subseteq[n]$, let $P_{S,n}$ be the probability distribution on $\{-1,1\}^n$ such that if $X\sim P_{S,n}$ and $x\in\{-1,1\}^n$ then $\mathbb{P}[X=x]=2^{-n}\left[1+\delta_b^k \prod_{i\in S} x_i\right]$. Now, assume that there is an algorithm A such that for any CMRF with $n$ vertices, max degree at most $d$, and all edge weights and biases at most $b$, $A$ runs in $O(n^c)$ time, takes samples drawn from the CMRF and returns a CMRF that satisfies the same criteria and is within an earth mover distance of $\epsilon n$ of the original CMRF with probability at least $1/2$. Then there is an algorithm $A'$ that runs in $O(n^c\log(n)+n^3)$ time such that for any $S\subseteq[n]$ with $|S|=2k+2$, this algorithm takes samples drawn from $P_{S,n}$ and returns $S$ with probability $1-o(1)$.
%\end{theorem}

\begin{proof}
In order to prove this, we will show that we can convert $P_{S,n}$ into an CMRF, and then extract $S$ from the field's edge weights. We can assume without loss of generality that $S$ is a random subset of $[n]$ with cardinality $2k+2$ because randomly permuting the indices converts the worst case version of the problem of determining $S$ given samples drawn from $P_{S,n}$ to the average case version of this problem. We will also refer to $S$ as $S_1$. Our algorithm will start by randomly selecting $m-1$ disjoint sets of size $2k+2$ in $[n]$, $S_2,...,S_m$ where $m=\lfloor n/(4k+4)\rfloor$. Now, let $P'$ be the probability distribution of the sequences produced by drawing $X\sim P_{S,n}$ and then replacing the elements indexed by $S_i$ with a tuple drawn from $P_{[2k+2],2k+2}$ for each $i>1$. Our algorithm can easily generate a sample from $P'$ by drawing a sample from $P_{S,n}$ and then making the necessary replacements. Also, with probability at least $2^{-(2k+2)}$, $S$ will be disjoint from $S_i$ for all $i>1$. If this holds, $X\sim P'$, and $x\in\{-1,1\}^n$ then $\mathbb{P}[X=x]=2^{-n}\prod_{i=1}^m \left[1+\delta_b^k\prod_{j\in S_i} x_j\right]$. 

In particular, $P'$ is the same as the CMRF where we have $n-m(2k+2)$ visible vertices with no edges and bias $0$, and $m$ copies of $H_{b[k]}$ with the $S_i$ as their sets of target vertices and their background vertices censored. So, we can run $A'$ on $P'$ to get a CMRF $M$ with degrees at most $d$, weights and biases at most $b$, and an earth mover distance of at most $2\epsilon n$ from $P'$ with probability at least $1/2$.  For large $n$, if this occurs then it must be the case that for at least half of $1\le i\le m$ it is the case that for all $x\in\{-1,1\}^{2k+2}$, and $X'\sim M$,
\[\Big|\mathbb{P}[X'_{S_i}=x]-2^{-(2k+2)}[1+\delta_b^k\prod x_j]\Big|\le 2^{-(2k+2)}\delta_b^k/2\]
In particular, by symmetry between the $S_i$, this must then hold for $S$ with probability at least $1/2$.

If this holds for $S$, then that means that for any $v,u\in S$, and conditioned on any fixed values of the entries of $X'$ corresponding to the other elements of $S$, the correlation between $X'_v$ and $X'_{u}$ has absolute value at least $\delta_b^k/2$. That implies that the distance between $v$ and $u$ on the graph given by $M$ is at most $2-k \log_2( \delta_b)$ due to the high temperature and low degree properties of $M$. There are at most $d^{4k+2-k(2k+1)\log_2(\delta_b)} n$ sets of $2k+2$ vertices in $M$ such that at least one of the vertices is visible and all of the vertices are within distance $2-k \log_2( \delta_b)$ of each other. So, it only takes $O(n^2)$ time to find them all. This collection of sets contains $S$ with a probability of at least $2^{-2k-4}$. So, if we carry out this entire procedure $\ln(n)$ times, then we will get a collection of $O(n\log(n))$ sets such that $S$ is in the collection with probability $1-o(1)$. Then, it only takes $O(n\log^3(n))$ time to draw $\ln^2(n)$ more samples from $P_{S,n}$ and check which of these set's variables have product $1$ most often. That will be $S$ with probability $1-o(1)$. This algorithm makes $O(\log(n))$ calls to $A$, and the rest of the algorithm runs in $O(n^3)$ time. So, this runs in $O(n^c\log(n)+n^3)$ time, as desired.
\end{proof}

\begin{remark}
If we removed the requirement that the CMRF output by A had to satisfy the same high temperature and low degree properties as the original graph, this argument would not work. The problem is that it would be possible to encode the samples in the censored portion of the graph, and then build gadgets to recover the $S_i$ from the samples. That would allow us to make the probability distribution of the visible vertices come out right without there being any obvious efficient way to determine the $S_i$ from the parameters of the CMRF. If A were required to learn to compute the probability distribution of a vertex given an assignement of values to the other vertices instead of giving the parameters of a CMRF then this argument would be mostly unchanged.
\end{remark}

\begin{remark}
This does not carry over to the low temperature case. With sufficiently loose restrictions on the edge weights it is possible to encode samples in the censored part of the graph, build gadgets that force a designated set of vertices to encode $S$, and then use more gadgets to impose the correct probability distribution on the visible vertices. Admittedly, this would probably require $\omega(n)$ censored vertices, so you would need a bigger graph.
\end{remark}

\subsection{High girth high temperature CMRFs}

At this point, it appears that we have the question of how accurately a polynomial-time algorithm can learn a high temperature CMRF essentially figured out. One possible follow up question is whether there is any additional restriction that we could put on the CMRF to make learning it easier. Our conclusion is that if the CMRF has high girth then there is a polynomial time algorithm that learns a probability distribution within an earthmover distance of $o(n)$ from it. The main advantage of the high-girth setting is that is much easier to identify observed nodes that are close to any fixed observed node. Thus, we can use a much more efficient procedure the one in Lemma~\ref{lem:2neighborhoods} to identify the vertices in $S$ that are in a neighborhood of the vertex. 

%In order to prove this, we will argue that in a high girth high temperature CMRF, the set of vertices that have high correlation with a given vertex, $v$, will include all nearby vertices, except those that have edges of low weight on their paths to it. As such, the values of the visible vertices that have high correlation with $v$ will allow us to predict $v$ with an accuracy only $o(1)$ worse than what we could attain given the values of all visible vertices. So, we will be able to efficiently define functions predicting the values of the vertices and then use $\ApproximateMRFMCMC$ to sample from a probability distribution only $o(n)$ away from the true CMRF. 
Our first step towards proving this will be to establish that in a high girth high temperature CMRF, any pair of nearby vertices with high-weight edges on the short path connecting them have a high correlation. More formally, we have the following.

\begin{lemma}\label{highGirthCorrelation}
Let $b\ge \delta>0$ and $d$, $r$, and $n$ be positive integers such that $bd<1$. Next, let $\theta\in[-b,b]^n$ and $\M$ be an $n\times n$ symmetric matrix such that $\M_{i,i}=0$ for all $i$, $|\M_{i,j}|\le b$ for all $i$ and $j$, for each $i$ there are at most $d$ values of $j$ for which $\M_{i,j}\ne 0$, and every cycle in the weighted graph with adjacency matrix $\M$ has length greater than $r$. Also, call this graph $G$. Next, let $v\ne u\in G$ such that there is a path of length $r'$ between them for some $r'<r/2$, and every edge in that path has a weight with an absolute value of at least $\delta$. After that, let $X\sim \MRF_{(\M,\theta)}$. Then
\[\left|\mathbb{P}[X_v=1|X_{u}=1]-\mathbb{P}[X_v=1|X_{u}=-1]\right|\ge e^{-4r'}\tanh^{r'}(\delta)-\frac{2(bd)^{r/2}}{1-bd}\]
\end{lemma}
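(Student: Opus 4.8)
The plan is a correlation-decay argument that propagates the influence of $X_u$ on $X_v$ along the given path $w_0=u,w_1,\dots,w_{r'}=v$, using the girth hypothesis to make the whole neighborhood of this path look like a tree out to radius about $r/2$ (since $r'<r/2$, the path is the unique short route between $u$ and $v$, and, by girth, any other route between two path vertices $w_i,w_j$ has length greater than $r-|i-j|>r/2$).

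I would start with two preliminary reductions. First, by a gauge transformation --- flipping $X_w\mapsto -X_w$ (and correspondingly negating $\M_{w,\cdot}$ and $\theta_w$) for a suitable subset of the path vertices $w$ --- I may assume every edge on the path has weight at least $\delta>0$; this only flips the overall sign of $\mathbb{P}[X_v=1\mid X_u=1]-\mathbb{P}[X_v=1\mid X_u=-1]$, which is harmless since we bound its absolute value. Second, writing $\sigma(t)=(1+e^{-t})^{-1}$ for the logistic function, I will use the elementary bound that for $J\ge\delta$ and any $c\in\mathbb{R}$, $\sigma(2J+c)-\sigma(-2J+c)\ge e^{-|c|}\tanh(J)\ge e^{-|c|}\tanh(\delta)$; this holds because the quantity equals $\tanh(J)$ at $c=0$ and its logarithmic derivative in $c$ is at most $1$ in absolute value.

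The heart of the argument is the claim that, in this high-girth regime, for each $s\in\{-1,1\}$ the true marginal $\mathbb{P}[X_v=1\mid X_u=s]$ equals, up to an additive error of at most $\sum_{k>r/2}(bd)^k$, the corresponding marginal in the tree $T$ that $G$ is locally isomorphic to in a ball of radius of order $r/2$ around the path --- exactly the type of estimate supplied by Theorem~\ref{correlationDecay} and its corollaries (applied after conditioning on, or deleting, the path interior, which separates $u$ from $v$ except through routes of length $>r/2$). On $T$ the computation is exact: conditioning on $X_u=s$ makes $w_{i-1}$ a cut vertex separating $w_i$'s side from $u$, so $\mathbb{P}_T[X_{w_i}=1\mid X_u=s]$ is an affine function of $\mathbb{P}_T[X_{w_{i-1}}=1\mid X_u=s]$ with slope $\kappa_i:=\mathbb{P}_T[X_{w_i}=1\mid X_{w_{i-1}}=1]-\mathbb{P}_T[X_{w_i}=1\mid X_{w_{i-1}}=-1]$; iterating down from $w_0$ (where $u$'s influence on itself is $1$) gives $\mathbb{P}_T[X_v=1\mid X_u=1]-\mathbb{P}_T[X_v=1\mid X_u=-1]=\prod_{i=1}^{r'}\kappa_i$. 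Each $\kappa_i$ has the form $\sigma(2\M_{w_{i-1},w_i}+c_i)-\sigma(-2\M_{w_{i-1},w_i}+c_i)$ with $c_i$ twice the effective external field at $w_i$ (its bias $\theta_{w_i}$, the cavity messages of the off-path subtrees at $w_i$, and the cavity message of the remainder of the path); each such contribution is at most $b$ in magnitude except the subtree contribution, which is at most $(d-1)b$, so $|c_i|\le 2(d+1)b<4$ because $bd<1$. Hence each $\kappa_i\ge e^{-4}\tanh(\delta)>0$, and after the gauge fixing all $\kappa_i$ are positive.

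Putting the pieces together,
\[
\bigl|\mathbb{P}[X_v=1\mid X_u=1]-\mathbb{P}[X_v=1\mid X_u=-1]\bigr|\ \ge\ \prod_{i=1}^{r'}\kappa_i\ -\ 2\sum_{k>r/2}(bd)^k\ \ge\ e^{-4r'}\tanh^{r'}(\delta)\ -\ \frac{2(bd)^{r/2}}{1-bd},
\]
where the factor $2$ in the error comes from invoking the tree approximation once for $s=1$ and once for $s=-1$, and $\sum_{k>r/2}(bd)^k\le (bd)^{r/2}/(1-bd)$. The step I expect to be the main obstacle is making the ``true marginal $\approx$ tree marginal'' comparison fully rigorous with precisely this error: correctly identifying $T$ (a truncated self-avoiding-walk / computation tree of $v$ in which $u$ sits at depth $r'$), checking that conditioning on $X_u=s$ in $G$ translates to the analogous conditioning on $T$, and verifying that the discrepancy between $G$ and $T$ is supported on walks of length $>r/2$ --- so that it is controlled by the single geometric sum rather than, say, $r'$ copies of it. The remaining ingredients (the gauge fixing, the sigmoid estimate, and the bound $|c_i|<4$) are routine.
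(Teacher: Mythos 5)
Your plan is essentially the paper's proof: both use the girth hypothesis to reduce to a tree computation, propagate the influence of $X_u$ to $X_v$ along the path as a product of $r'$ per-edge factors each bounded below by $e^{-4}\tanh(\delta)$ (your logistic-derivative bound $\sigma(2J+c)-\sigma(-2J+c)\ge e^{-|c|}\tanh(J)$ with $|c|\le 2b+2bd<4$ gives exactly the paper's constant $\tanh(\delta)/(e^{2b}e^{2bd})$), and pay a $2(bd)^{r/2}/(1-bd)$ error for the reduction; your gauge transformation is unnecessary since only $|\kappa_i|$ matters in the product. The one step you flag as the main obstacle---rigorously comparing the marginal in $G$ to the marginal in an abstract local tree $T$---is exactly where the paper takes a slightly cleaner route: rather than comparing measures on two different graphs, it conditions on $X_{S_{\ge r/2}}$, the configuration on all vertices at distance at least $r/2$ from $v$; the girth assumption makes the induced subgraph on $S_{<r/2}$ a genuine tree, so the conditional marginal of $X_v$ is computed exactly by belief propagation for \emph{every} boundary configuration, the product-of-slopes lower bound is established there, and the conditioning is then removed at the very end via Theorem~\ref{correlationDecay}, which is the sole source of the $2(bd)^{r/2}/(1-bd)$ term. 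If you adopt that formalization your argument goes through as written; otherwise you would need an extra lemma bounding the effect of modifying the graph (not just the conditioning) outside a ball of radius $r/2$, which Theorem~\ref{correlationDecay} does not directly supply.
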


\begin{proof}
Let $v_0,...,v_{r'}$ be the shortest path from $v$ to $u$, where $v_0=v$ and $v_{r'}=u$. Next, for each $0\le i$, let $S_i$ be the set of all vertices in $G$ that are exactly $i$ edges away from $v$. Also, let $S_{\ge i}=\cup_{j\ge i} S_j$ and $S_{< i}=\cup_{j< i} S_j$ for all $i$. Now, observe that the subgraph of $G$ induced by $S_{< r/2}$ is a tree, so we can compute the value of $\mathbb{P}\left[X_v=1\middle|X_{S_{\ge r/2}}=x_{S_{\ge r/2}},X_{u}=x_{u}\right]$ by means of belief propagation for any $x\in\{-1,1\}^n$. More precisely, if we set $N'(w)=\{w':\M_{w,w'}\ne 0, d(v,w')>d(v,w)\}$ and

\[p_{w}(x)=
\begin{cases}
x_{w} &\text{ if } w=u \text{ or } d(v,w)\ge r/2\\
\frac{\sum_{x'\in\{-1,1\}} x'e^{x' \theta_{w}}\prod_{w'\in N'(w)} \left[1+x' \tanh(\M_{w,w'})p_{w'}(x)\right]}{\sum_{x'\in\{-1,1\}}e^{x'\theta_{w}}\prod_{w'\in N'(w)} \left[1+x' \tanh(\M_{w,w'})p_{w'}(x)\right]} &\text{ otherwise }
\end{cases}\]

for all $w$ then $\mathbb{P}\left[X_v=1\middle|X_{S_{\ge r/2}}=x_{S_{\ge r/2}},X_{u}=x_{u}\right]=(p_v(x)+1)/2$ for all $x\in\{-1,1\}^n$. Now, let $x,x^\star\in\{-1,1\}^n$ such that $x_{u}=1$, $x^\star_{u}=-1$, and $x_{-u}=x^\star_{-u}$. For all $w$ not on the path between $v$ and $u$, it must be the case that $p_{w}(x)=p_{w}(x^\star)$ because either $d(v,w)\ge r/2$ or none of the elements of $N'(w)$ are on the path from $v$ to $u$ either, so this follows by induction on $\lceil r/2\rceil -d(v,w)$. Now, observe that $|p_{u}(x)-p_{u}(x^\star)|=2$. Next, let $Q_{w,w'}=\tanh(\M_{w,w'})p_{w'}(x)$ and $Q^\star_{w,w'}=\tanh(\M_{w,w'})p_{w'}(x^\star)$ for all $w$ and $w'$. The bounds on the entries of $\M$ imply that $|Q_{w,w'}|,|Q^\star_{w,w'}|\le \tanh(b)$ for all $w$ and $w'$. For any $0\le i<r'$,

\begin{align*}
&|p_{v_i}(x)-p_{v_i}(x^\star)|\\
&=\left|\frac{\sum_{x'\in\{-1,1\}} x'e^{x' \theta_{v_i}}\prod_{w'\in N'(v_i)} \left[1+x'Q_{v_i,w'}\right]}{\sum_{x'\in\{-1,1\}}e^{x'\theta_{v_i}}\prod_{w'\in N'(v_i)} \left[1+x' Q_{v_i,w'}\right]}-\frac{\sum_{x'\in\{-1,1\}} x'e^{x' \theta_{v_i}}\prod_{w'\in N'(v_i)} \left[1+x' Q^\star_{v_i,w'}\right]}{\sum_{x'\in\{-1,1\}}e^{x'\theta_{v_i}}\prod_{w'\in N'(v_i)} \left[1+x' Q^\star_{v_i,w'}\right]}\right|\\
&=\left|\frac{2\sum_{x'\in\{-1,1\}} x'\prod_{w'\in N'(v_i)} \left[1+x' Q_{v_i,w'}\right]\cdot \left[1-x' Q^\star_{v_i,w'}\right]}{\sum_{x'\in\{-1,1\}}e^{x'\theta_{v_i}}\prod_{w'\in N'(v_i)} \left[1+x'Q_{v_i,w'}\right]\cdot\sum_{x'\in\{-1,1\}}e^{x'\theta_{v_i}}\prod_{w'\in N'(v_i)} \left[1+x' Q^\star_{v_i,w'}\right]}\right| \\
&\ge \left|\frac{2\sum_{x'\in\{-1,1\}} x'\prod_{w'\in N'(v_i)} \left[1+x' Q_{v_i,w'}\right]\cdot \left[1-x' Q^\star_{v_i,w'}\right]}{4e^{2b}(1+\tanh(b))^{2d}}\right|\\
&= \left|\frac{2\sum_{x'\in\{-1,1\}} x'\left[1+x' Q_{v_i,v_{i+1}}\right]\cdot \left[1-x' Q^\star_{v_i,v_{i+1}}\right] \prod_{w'\in N'(v_i):w\ne v_{i+1}} \left[1-Q^2_{v_i,w'}\right]}{4e^{2b}(1+\tanh(b))^{2d}}\right|\\
&= \frac{4\left|Q_{v_i,v_{i+1}}-Q^\star_{v_i,v_{i+1}}\right|\prod_{w'\in N'(v_i):w\ne v_{i+1}} \left[1-Q^2_{v_i,w'}\right]}{4e^{2b}(1+\tanh(b))^{2d}}\\
&\ge\frac{\tanh(\delta)\left|p_{v_{i+1}}(x)-p_{v_{i+1}}(x^\star)\right|(1-\tanh^2(b))^d}{e^{2b}(1+\tanh(b))^{2d}}\\
&=\frac{\tanh(\delta)\left|p_{v_{i+1}}(x)-p_{v_{i+1}}(x^\star)\right|(1-\tanh(b))^d}{e^{2b}(1+\tanh(b))^{d}}\\
&=\frac{\tanh(\delta)\left|p_{v_{i+1}}(x)-p_{v_{i+1}}(x^\star)\right|}{e^{2b}e^{2bd}}\\
&\ge \frac{\tanh(\delta)\left|p_{v_{i+1}}(x)-p_{v_{i+1}}(x^\star)\right|}{e^4}\\
\end{align*}

Repeated application of this implies that $|p_v(x)-p_v(x^\star)|\ge 2e^{-4r'}\tanh^{r'}(\delta)$, and thus that 
\[ \left|\mathbb{P}\left[X_v=1\middle|X_{S_{\ge r/2}}=x_{S_{\ge r/2}},X_{u}=1\right]-\mathbb{P}\left[X_v=1\middle|X_{S_{\ge r/2}}=x_{S_{\ge r/2}},X_{u}=-1\right]\right|\ge e^{-4r'}\tanh^{r'}(\delta)\]

Now, recall that
\[\left|\mathbb{P}\left[X_v=1\middle|X_{S_{\ge r/2}}=x_{S_{\ge r/2}},X_{u}=1\right]-\mathbb{P}\left[X_v=1\middle|X_{u}=1\right]\right|\le \frac{(bd)^{r/2}}{1-bd}\]
and
\[\left|\mathbb{P}\left[X_v=1\middle|X_{S_{\ge r/2}}=x_{S_{\ge r/2}},X_{u}=-1\right]-\mathbb{P}\left[X_v=1\middle|X_{u}=-1\right]\right|\le \frac{(bd)^{r/2}}{1-bd}\]
Therefore,
\[\left|\mathbb{P}\left[X_v=1\middle|X_{u}=1\right]-\mathbb{P}\left[X_v=1\middle|X_{u}=-1\right]\right|\ge e^{-4r'}\tanh^{r'}(\delta)-\frac{2(bd)^{r/2}}{1-bd}\]
\end{proof}

In particular, this means that given a vertex $v$ in a high temperature high girth CMRF, we can find all visible vertices that are connected to $v$ by short paths with large edge weights by finding all vertices that are highly correlated with $v$. So, we can generate samples from a probability distribution approximating a given high temperature high girth CMRF by drawing some samples, using them to find all pairs of vertices with sufficiently high correlations, defining functions to estimate the probability that those vertices are $1$ based on the values of the vertices that are highly correlated with them, and then running $\ApproximateMRFMCMC$. More formally, we can use the following algorithm.

\begin{algorithm}

\caption{$\HGL(n, m, S, \rho_0, P_X)$ - High Girth Sampling}

\begin{algorithmic}

\STATE Input: The number of vertices $n$, a positive integer $m$, the set of visible vertices $S$, a positive real number $\rho_0$, and a sampling oracle for the CMRF, $P_X$.

\STATE Output: An attempt at a fresh sample from the CMRF.

\FOR{$0\le i<m$} 
\STATE draw $X^{(i)}\sim P_X$.
\ENDFOR

\FOR{$v,u\in S$} 
\STATE set $\rho_{v,u}$ to the empirical correlation between $X_v$ and $X_{u}$ on these samples.
\ENDFOR

\FOR{$v\in S$}
\STATE $S_v := \{u\in S: u\ne v, |\rho_{v,u}|\ge \rho_0\}$.
\ENDFOR 

\FOR{$v\in S$} 
\STATE define $f_v:\{-1,1\}^{n-1}\rightarrow[0,1]$ so that $f_v(x)=|\{i: X^{(i)}_v=1, X^{(i)}_{S_v}=x_{S_v}\}|/|\{i: X^{(i)}_{S_v}=x_{S_v}\}|$.
\ENDFOR

\RETURN $\ApproximateMRFMCMC(f, \{1\}^n,n\ln(n))$.

\end{algorithmic}
\end{algorithm}

This can learn high temperature high girth CMRFs in the following sense.

\begin{theorem} \label{thm:high_girth}
Let $b$ and $d$ be positive constants such that $bd<1/2$. Next, let $n$ be a positive integer, $r=\omega(1)$, and $\theta\in[-b,b]^n$. Also, let $\M$ be an $n\times n$ symmetric matrix such that $\M_{i,i}=0$ for all $i$, $|\M_{i,j}|\le b$ for all $i$ and $j$, for each $i$ there are at most $d$ values of $j$ for which $A_{i,j}\ne 0$, and every cycle in the weighted graph with adjacency matrix $\M$ has length greater than $r$. Finally, let $S\subseteq [n]$. Then the probability distribution of $\HGL\left(n, n , S, 1/\ln(\ln(n)), \CMRF_{S,(\M,\theta)}\right)$ is within an earthmover distance of $o(n)$ of $\CMRF_{S,(\M,\theta)}$.
\end{theorem}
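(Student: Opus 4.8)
The plan is to verify that the functions $f_v$ produced by $\HGL$ satisfy the hypothesis of Theorem~\ref{robustSampling} for the random vector $X_S\sim\CMRF_{S,(\M,\theta)}$ (dimension $|S|\le n$) with an error parameter that tends to $0$ as $n\to\infty$. The required decay-of-influence bound $\beta=bd/(1-bd)<1$ holds by Corollary~\ref{cor1} (using $bd<1/2$), so it suffices to show that with probability $1-o(1)$ over the $m=n$ samples drawn by $\HGL$ one has, uniformly in $x$,
\[\textstyle\sum_{v\in S}\bigl|f_v(x_{S_v})-\mathbb{P}[X_v=1\mid X_{S\setminus\{v\}}=x_{S\setminus\{v\}}]\bigr|=o(|S|),\]
and then invoke Theorem~\ref{robustSampling} ($\HGL$ runs $\ApproximateMRFMCMC$ for $n\ln n\ge|S|\ln|S|$ steps, and on the $o(1)$-probability complementary event the earthmover distance is at most $|S|\le n$, contributing $o(n)$ to the expectation). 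For each visible $v$ I would bound the summand by $o(1)$, splitting it at $\mathbb{P}[X_v=1\mid X_{S_v}=x_{S_v}]$. Throughout I fix \emph{analysis} parameters $\delta(n)\downarrow 0$ and integers $q(n)\uparrow\infty$ with $q(n)<r(n)/2$, chosen slowly --- one valid choice, writing $g(n):=\min(r(n),\log\log\log n)\to\infty$, is $q(n)=\lfloor g(n)^{1/3}\rfloor$, $\delta(n)=e^{-g(n)^{1/3}}$ --- so that $d\delta(n)+(bd)^{q(n)}\to 0$.

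\smallskip\noindent\emph{Approximation by the small set $S_v$.} Let $S'_v$ be the set from Corollary~\ref{cor2} with parameters $\delta(n),q(n)$: the visible vertices joined to $v$ by a path of length $<q(n)$ all of whose edges have weight of absolute value $\ge\delta(n)$. Corollary~\ref{cor2} gives $\bigl|\mathbb{P}[X_v=1\mid X_{S\setminus\{v\}}=x_{S\setminus\{v\}}]-\mathbb{P}[X_v=1\mid X_{S'_v}=x_{S'_v}]\bigr|\le[d\delta(n)+(bd)^{q(n)}]/(1-bd)$, and the $\min$/$\max$ argument from the proof of Lemma~\ref{lem:2neighborhoods} shows that conditioning on \emph{any} superset of $S'_v$ inside $S\setminus\{v\}$ is within twice this bound of conditioning on all of $S\setminus\{v\}$. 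So I must prove $S'_v\subseteq S_v$ with high probability. A vertex $u\in S'_v$ is joined to $v$ by a path of length $r'<q(n)<r(n)/2$ with heavy edges, which by the girth hypothesis is the shortest $v$--$u$ path, so Lemma~\ref{highGirthCorrelation} applies and yields $\bigl|\mathbb{P}[X_v=1\mid X_u=1]-\mathbb{P}[X_v=1\mid X_u=-1]\bigr|\ge e^{-4q(n)}\tanh^{q(n)}(\delta(n))-2(bd)^{r(n)/2}/(1-bd)$. I would convert this one-coordinate influence into the Pearson correlation actually thresholded by $\HGL$ using the elementary identity $\mathrm{Cov}(X_v,X_u)=4\,\mathbb{P}[X_u=1]\,\mathbb{P}[X_u=-1]\bigl(\mathbb{P}[X_v=1\mid X_u=1]-\mathbb{P}[X_v=1\mid X_u=-1]\bigr)$ together with the uniform bounds $\mathbb{P}[X_w=1]\in[p_0,1-p_0]$ and hence $\mathrm{Var}(X_w)\in[c_0,1]$ for constants $p_0,c_0>0$ depending only on $b,d$ (from $\mathbb{P}[X_w=1\mid X_{-w}=\cdot]\ge e^{-2b(d+1)}/2$, as in the proof of Lemma~\ref{lem:2neighborhoods}); this gives, for $u\in S'_v$, a true-correlation lower bound $|\rho_{v,u}|\ge\gamma(n)$, where $\gamma(n)$ is of order $e^{-4q(n)}\tanh^{q(n)}(\delta(n))$ up to constants and an $o(1)$ error (the error absorbs $(bd)^{r(n)/2}$, which is negligible next to $\gamma(n)$ since $q(n)\ln(1/\delta(n))=o(r(n))$). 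The chosen $q(n),\delta(n)$ make $\gamma(n)=e^{-O(g(n)^{2/3})}$, hence $\gamma(n)\gg\rho_0=1/\ln\ln n=e^{-\log\log\log n}$ because $g(n)\le\log\log\log n$. Since the empirical correlations computed by $\HGL$ lie within $O(\sqrt{\log n/n})$ of the true ones (Hoeffding and a union bound over the $\le n^2$ pairs, with $\mathrm{Var}\ge c_0$ controlling the denominator), every $u\in S'_v$ has empirical correlation exceeding $\rho_0$ and so lies in $S_v$; the second term of the split is therefore at most $2[d\delta(n)+(bd)^{q(n)}]/(1-bd)=o(1)$.

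\smallskip\noindent\emph{Estimation error of $f_v$.} First bound $|S_v|$: any $u\in S_v$ has empirical, hence true, correlation at least $\rho_0/2$, while correlation decay (Theorem~\ref{correlationDecay} with $S=\emptyset$, again via the $\mathrm{Cov}$ identity and $\mathrm{Var}\ge c_0$) gives $|\rho_{v,u}|\le(bd)^{d_G(v,u)}/((1-bd)\sqrt{c_0})$; hence $d_G(v,u)\le L:=O(\log(1/\rho_0))=O(\log\log\log n)$, and since the graph has maximum degree $d$ we get $S_v\subseteq B_v:=\{u:d_G(v,u)\le L\}$ with $|B_v|\le d^{L+1}=(\log\log n)^{O(1)}$. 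Consequently $2^{|S_v|}$ and $p_0^{-|S_v|}$ are $n^{o(1)}$, and because $\mathbb{P}[X_{S_v}=c]\ge p_0^{|S_v|}$ for every configuration $c$, a Chernoff/Hoeffding bound with $m=n$ samples --- union-bounded over the \emph{deterministic} family of triples $(v,T\subseteq B_v\text{ with }|T|\le|B_v|,c\in\{-1,1\}^T)$, of which there are $n^{1+o(1)}$ --- shows that with probability $1-o(1)$ every configuration of $S_v$ is realized by $n^{1-o(1)}$ of the samples; therefore $f_v(x_{S_v})$, an empirical conditional frequency on that many samples, is within $n^{-1/2+o(1)}=o(1)$ of $\mathbb{P}[X_v=1\mid X_{S_v}=x_{S_v}]$, uniformly in $v$ and $x$. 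This controls the first term of the split.

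\smallskip Combining the two terms, the displayed sum is $o(1)\cdot|S|$ on an event of probability $1-o(1)$; Theorem~\ref{robustSampling} then gives earthmover distance $o(|S|)=o(n)$, and $\HGL$ plainly runs in $\mathrm{poly}(n)$ time. The step I expect to be the main obstacle is the middle paragraph: correctly passing from Lemma~\ref{highGirthCorrelation}'s conditional-influence bound to the unconditional correlation that $\HGL$ thresholds, and then threading the fixed cutoff $\rho_0=1/\ln\ln n$ simultaneously past the lower bound $\gamma(n)$ (which forces $q(n)$ and $\ln(1/\delta(n))$ to be small relative to $\log\log\log n$), the girth-error term $(bd)^{r(n)/2}$ (which forces $q(n)\ln(1/\delta(n))=o(r(n))$), and the sampling error $\widetilde O(n^{-1/2})$, all while keeping $|S_v|$ sub-polynomial so the $f_v$ can be estimated from $n$ samples --- i.e., checking that slowly-varying $\delta(n)$ and $q(n)$ adapted to $r(n)$ make every inequality hold at once.
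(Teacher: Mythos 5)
Your proposal is correct and follows essentially the same route as the paper's proof: use Lemma~\ref{highGirthCorrelation} to show every visible vertex joined to $v$ by a short heavy-edged path lands in $S_v$ (with the same parameter choice $q(n),\ln(1/\delta(n))\sim\min(r,\ln\ln\ln n)^{1/3}$ the paper uses), use Corollary~\ref{cor2} to show conditioning on $S_v$ approximates conditioning on all of $S\setminus\{v\}$, bound $|S_v|$ via correlation decay so the empirical conditionals $f_v$ concentrate, and feed the resulting uniform $o(1)$ error into Theorem~\ref{robustSampling}. Your explicit conversion between the conditional-influence bound of Lemma~\ref{highGirthCorrelation} and the Pearson correlation that $\HGL$ actually thresholds is a detail the paper elides, but it is exactly the intended reading.
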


\begin{proof}
First, let $X\sim \CMRF_{S,(\M,\theta)}$, and observe that $e^{-2bd-2b}/2\le \mathbb{P}[X_v=1]\le 1-e^{-2bd-2b}/2$ for any $v\in[n]$. There are $O(n^2)$ pairs of vertices that $\HGL$ attempts to estimate the correlations between, and it has $n$ samples, so with probability $1-o(1)$ all of its estimates are within $1/2\ln(\ln(n))$ of the true values. If this holds then for each $v$, every element of $S_v$ will be within a distance of $O(\log(\log(\log(n))))$ of $v$ by the corollary to lemma \ref{correlationDecay}, which implies that $|S_v|=o(\log(n))$. That in turn implies that with probability $1-o(1)$, for each $v$ there will be at least $\sqrt{n}$ samples in which the vertices in $S_v$ take on each possible value. So, 
\[|f_v(x)-\mathbb{P}[X_v=1|X_{S_v}=x_{S_v}]|\le 1/\ln(n)\]

for all $v$ and $x$ with probability $1-o(1)$. Now, let $r'=\sqrt[3]{\min(r,\ln(\ln(\ln(n))))}$ and $\delta=e^{-r'}$. One can easily check that $e^{-4r'}\tanh^{r'}(\delta)=\omega(1/\ln(\ln(n))+(bd)^{r/2})$, so by lemma \ref{highGirthCorrelation}, $S_v$ contains every vertex that is connected to $v$ by a path of length at most $r'$ in which every edge has a weight of absolute value at least $\delta$ for every $v$ with probability $1-o(1)$. If this holds, then by corollary \ref{cor2} there exists $\epsilon=e^{-\Omega(r')}$ such that
\[\left|\mathbb{P}[X_v=1|X_{S_v}=x_{S_v}]-\mathbb{P}[X_v=1|X_{S\backslash\{v\}}=x_{S\backslash\{v\}}]\right|\le \epsilon\]
for all $v$ and $x$. That means that 
\[|f_v(x)-\mathbb{P}[X_v=1|X_{S\backslash\{v\}}=x_{S\backslash\{v\}}]|\le 1/\ln(n)+\epsilon\]
for all $v$ and $x$ with probability $1-o(1)$. So, the output of $\HGL\left(n, n , S, 1/\ln(\ln(n)), \CMRF_{S,(\M,\theta)}\right)$ is within an earthmover distance of $\left(2[1/\ln(n)+2\epsilon]/\left(1-\frac{bd}{1-bd}\right)+o(1)\right)n=o(n)$ of $\CMRF_{S,(\M,\theta)}$ by theorem \ref{robustSampling} and corollary \ref{cor1}.
\end{proof}

\begin{remark}
This argument would actually show that $\HGL\left(n, n , S, 1/\ln(\ln(n)), \CMRF_{S,(\M,\theta)}\right)$ is within an earthmover distance of 
\[
ne^{-\Omega\left(\sqrt[3]{r}\right)}+ne^{-\Omega\left(\sqrt[3]{\ln(\ln(\ln(n)))}\right)}
\]
of $\CMRF_{S,(\M,\theta)}$ if we were more careful about the exact asymptotics.
\end{remark}

\section{Statistical/Computational gap for learning general CMRFs}
In the main results of this section we show that there is an exponential time algorithm that learns to sample any CMRF from a polynomial number of sample, but doing it in polynomial time is hard assuming the existence of one way functions. 

\subsection{Information-theoretic learning of general CMRFs}
Given unlimited computational resources, we could brute force the task of learning a CMRF from samples by making a list of CMRFs such that every CMRF with $n$ vertices is within total variation distance $o(1)$ of at least one of them and then checking which one best fits the observed samples. More formally, we have the following.
\begin{theorem} \label{thm:exp_time}
There exists an algorithm $A$ running in time $2^{n^{O(1)}}$ such that given any CMRF on $n$ vertices, $A$ takes a polynomial number of samples from the CMRF as input, and returns the parameters of a CMRF that is within total variation distance $O(1/n^2)$ of it with probability $1-o(1)$.
\end{theorem}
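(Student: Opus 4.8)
The plan is to build, in exponential time, a finite "net" of candidate CMRFs so that every CMRF on $n$ vertices is $o(1)$-close in total variation to some member of the net, and then to select from the net the candidate whose distribution on the visible coordinates best matches the empirical distribution of the samples. First I would discretize the parameter space: a CMRF is specified by a subset $S\subseteq[n]$, a symmetric matrix $\M$, and a bias vector $\theta$. Without loss of generality we may assume the entries of $\M$ and $\theta$ are bounded (if some entry is huge, the corresponding conditional is exponentially close to deterministic, so we may truncate it at, say, $\pm\,\mathrm{poly}(n)\log n$ with only $o(1/n^2)$ total variation error, since flipping any single conditional by $o(1/n^3)$ changes the whole distribution by $o(1/n^2)$ via a union bound over the $n$ Glauber-type coordinates — or more simply via a hybrid argument over the $\binom{n}{2}$ edges). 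Then round each truncated entry to a grid of spacing $\eta = 1/\mathrm{poly}(n)$ chosen fine enough that the rounded Ising model is within total variation $O(1/n^2)$ of the original; since the log-density is a polynomial in the parameters with polynomially bounded gradient on the truncated domain, this requires only $\eta = n^{-O(1)}$. The resulting net $\mathcal{N}$ has size $2^{n^{O(1)}}$ (choices of $S$, of the $O(n^2)$ edge parameters, and $n$ biases, each from a grid of size $n^{O(1)}$), and for each member we can compute the exact censored distribution $\CMRF_{S,(\M,\theta)}$ by summing over the $2^n$ configurations.

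Next I would handle the statistical selection step. Draw $m = \mathrm{poly}(n)$ i.i.d. samples from the true CMRF $\CMRF_{S^\star,(\M^\star,\theta^\star)}$. There is a member $Q_0\in\mathcal{N}$ with $\mathrm{TV}(Q_0,\CMRF_{S^\star,(\M^\star,\theta^\star)}) = O(1/n^2)$. Running a tournament-style (Scheffé / Yatracos) estimator over $\mathcal{N}$: for each pair $Q,Q'\in\mathcal{N}$ let $A_{Q,Q'}=\{x: Q(x)>Q'(x)\}$ be the Scheffé set, and pick the $\hat Q\in\mathcal{N}$ minimizing $\max_{Q'}\bigl|\,\hat Q(A_{\hat Q,Q'}) - \hat P_m(A_{\hat Q,Q'})\,\bigr|$, where $\hat P_m$ is the empirical measure. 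The classical guarantee is that $\mathrm{TV}(\hat Q,\CMRF_{S^\star,(\M^\star,\theta^\star)}) \le 9\min_{Q\in\mathcal{N}}\mathrm{TV}(Q,\CMRF_{S^\star,(\M^\star,\theta^\star)}) + O(\text{sampling error})$; taking $m = n^{O(1)}$ large enough and a union bound over the $|\mathcal{N}|^2 = 2^{n^{O(1)}}$ Scheffé sets (each a Chernoff bound with failure probability $2^{-n^{\omega(1)}}$, which dominates $\log|\mathcal{N}|$) drives the sampling error below $1/n^2$ with probability $1-o(1)$. Hence $\hat Q$ is within $O(1/n^2)$ of the truth, and since $\hat Q$ is itself a member of the net it is the distribution of an honest CMRF whose parameters we return. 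Total running time is dominated by evaluating and comparing all pairs of net members, each requiring a $2^n$-term sum, so $2^{n^{O(1)}}\cdot 2^n\cdot\mathrm{poly}(n) = 2^{n^{O(1)}}$.

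The main obstacle I anticipate is the discretization error bound, i.e.\ showing that rounding the parameters to a $\mathrm{poly}(n)$-fine grid (after truncating large entries) changes the censored distribution by only $O(1/n^2)$ in total variation. This is where one must be careful: naively the partition function $Z_{\M,\theta}$ is a sum of $2^n$ exponentials, and perturbing $\M$ by $\eta$ in every entry perturbs each exponent $\tfrac12 x\cdot\M x$ by up to $\eta n^2$, so to keep this $o(1/n^2)$ one needs $\eta = o(n^{-4})$ — still polynomially small, hence fine. One then checks $\mathrm{TV} \le \tfrac12\sum_x |P(x)-Q(x)| = \tfrac12\sum_x P(x)\,|1 - Q(x)/P(x)|$ and bounds the ratio $Q(x)/P(x) = (Z_P/Z_Q)\exp(\text{perturbation of exponent})$ uniformly by $1+O(\eta n^2)$; marginalizing onto $X_S$ only decreases total variation, so the censored distributions are also $O(\eta n^2)$-close, and choosing $\eta$ appropriately finishes the argument. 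The truncation step is handled similarly or, even more cheaply, by noting it is not strictly needed if we simply allow the grid to extend to $\pm\,\mathrm{poly}(n)$: entries beyond that range only arise from near-deterministic conditionals, which we can argue are subsumed. Everything else — the Scheffé tournament and the Chernoff union bound — is routine.
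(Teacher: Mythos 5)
Your selection step (a Scheff\'e tournament over an exponential-size net, with a Chernoff union bound over the $2^{n^{O(1)}}$ Scheff\'e sets) is sound and is a perfectly good substitute for the paper's smoothed maximum-likelihood selection in Lemma~\ref{SamplingLemma}; likewise, \emph{once the parameters are known to lie in a box of polynomial side length}, your grid-rounding bound $\mathrm{TV}=O(\eta n^2)$ and the data-processing step for the censoring are correct. The genuine gap is the truncation step, and you have correctly identified it as the delicate point but the claim you make there is false. An entry of $\M$ or $\theta$ that exceeds $\mathrm{poly}(n)\log n$ in absolute value can \emph{not} in general be clipped with $o(1/n^2)$ total variation error, because several huge parameters can nearly cancel, leaving a residual of constant size that clipping destroys. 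Concretely, take two vertices with $\theta_1=K$, $\theta_2=0$, $\M_{12}=-K+\tfrac12$ for astronomically large $K$: the energy of vertex $1$ is $Kx_1+(-K+\tfrac12)x_1x_2$, which equals $\tfrac12x_1$ when $x_2=1$, so conditioned on $X_2=1$ the spin $X_1$ carries a constant bias $\tanh(\tfrac12)$; after truncating both parameters at $\pm T$ with $T<K-\tfrac12$, that conditional becomes exactly unbiased, a constant total variation change. So a huge entry does not imply a ``near-deterministic conditional,'' and neither your union bound over coordinates nor your hybrid over edges applies; your cheaper fallback (``entries beyond that range only arise from near-deterministic conditionals'') fails for the same reason.

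This is exactly the obstacle the paper flags (``that could run into trouble with CMRFs in which some of the weights are superexponentially large''), and it is why the paper does not build its net in parameter space at all. Its $MRFListConversion$ procedure instead fixes the most likely configuration $x$, discretizes the probability \emph{ratios} $\mathbb{P}[X=x']/\mathbb{P}[X=x]$ to multiples of $\epsilon$, and recovers canonical parameters as least-squares solutions of the resulting linear constraints; Lemma~\ref{lem:compression} then bounds the number of possible outputs by $2^{n^3+n}(2+1/\epsilon)^{n^2}$ with no boundedness assumption on the original weights. To repair your argument you would need to prove that every Ising model on $n$ spins is within $O(1/n^2)$ total variation of one with $\mathrm{poly}(n)$-bounded parameters --- a statement about simultaneously resolving all cancellations, not coordinatewise clipping --- or else replace your parameter net with a net in distribution space as the paper does; with that substitution the rest of your proof goes through.
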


Note that the CMRF this algorithm returns will also be within an earthmover distance of $O(1/n)$ of the original CMRF with high probability. Our first step towards proving this is to show that we can use samples to narrow our list of candidate CMRFs down to one that approximates the target distribution well, which will require the following lemma, 
see. e.g.~\cite{LeCam:12} for similar statements. 

\begin{lemma}\label{SamplingLemma}
Let $k=\omega(1)$, $\epsilon,m>0$, $P$ be a probability distribution on $\{-1,1\}^m$, $P_0,...,P_k$ be probability distributions on $\{-1,1\}^m$ such that $TV(P,P_0)\le\epsilon$ and the probability of drawing $x$ from $P$ or from $P_i$ is at least $e^{-m}$ for all $x\in\{-1,1\}^m$ and all $0\le i\le k$. Now, let $X_1,...,X_T\sim P$ and choose $j$ which maximizes the probability that a series of samples drawn from $P_j$ would be $(X_1,...,X_T)$. Then $TV(P,P_j)\le \sqrt[4]{4 m^2\ln(k)/T}+\sqrt{e^m\epsilon/2}$ with probability $1-o(1)$.
\end{lemma}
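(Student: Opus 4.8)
The plan is to recognize that ``the probability that a series of samples drawn from $P_j$ would be $(X_1,\dots,X_T)$'' is exactly $\prod_{t=1}^T P_j(X_t)$, so that the chosen index $\hat\jmath$ is the maximum-likelihood estimate over the finite family $\{P_0,\dots,P_k\}$, and to exploit only the comparison of $\hat\jmath$ against the index $0$ (which by hypothesis is a good approximation of $P$). Taking logarithms in $\prod_t P_{\hat\jmath}(X_t)\ge \prod_t P_0(X_t)$ gives $\sum_{t=1}^T \log\frac{P_{\hat\jmath}(X_t)}{P_0(X_t)}\ge 0$. I would then argue that the empirical average $\frac1T\sum_t \log\frac{P_j(X_t)}{P_0(X_t)}$ concentrates, uniformly in $j$, around its mean $\mathrm{KL}(P\|P_0)-\mathrm{KL}(P\|P_j)$; specializing the uniform bound to $j=\hat\jmath$ and combining with the displayed inequality forces $\mathrm{KL}(P\|P_{\hat\jmath})$ to exceed $\mathrm{KL}(P\|P_0)$ by only the concentration slack, and Pinsker's inequality then converts this into the desired total variation bound.

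In detail: fix $j$ and set $W^{(j)}_t:=\log\frac{P_j(X_t)}{P_0(X_t)}$, which for $X_1,\dots,X_T\sim P$ are i.i.d.\ with mean $\mathrm{KL}(P\|P_0)-\mathrm{KL}(P\|P_j)$. The hypothesis that every probability under $P$ and under each $P_i$ lies in $[e^{-m},1]$ gives $|W^{(j)}_t|\le m$, so Hoeffding's inequality bounds $\Pr\!\bigl[\frac1T\sum_t W^{(j)}_t > \mathrm{KL}(P\|P_0)-\mathrm{KL}(P\|P_j)+s\bigr]$ by $\exp(-Ts^2/(2m^2))$. Choosing $s$ with $s^2=4m^2\ln(k+1)/T$ and union-bounding over $j\in\{0,\dots,k\}$, the bad event has probability at most $1/(k+1)=o(1)$ since $k=\omega(1)$; on its complement, applying the bound with $j=\hat\jmath$ and combining with $\sum_t\log\frac{P_{\hat\jmath}(X_t)}{P_0(X_t)}\ge 0$ yields $\mathrm{KL}(P\|P_{\hat\jmath})\le \mathrm{KL}(P\|P_0)+s$. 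Next I would bound $\mathrm{KL}(P\|P_0)$ using $\log t\le t-1$, $P(x)\le 1$, and $P_0(x)\ge e^{-m}$: $\mathrm{KL}(P\|P_0)\le \sum_{x:\,P(x)\ge P_0(x)} P(x)\tfrac{P(x)-P_0(x)}{P_0(x)}\le e^m\sum_{x:\,P(x)\ge P_0(x)}(P(x)-P_0(x)) = e^m\,TV(P,P_0)\le e^m\epsilon$. Hence $\mathrm{KL}(P\|P_{\hat\jmath})\le e^m\epsilon+s$, and Pinsker's inequality together with $\sqrt{a+b}\le\sqrt a+\sqrt b$ gives $TV(P,P_{\hat\jmath})\le\sqrt{\tfrac12\mathrm{KL}(P\|P_{\hat\jmath})}\le\sqrt{e^m\epsilon/2}+\sqrt{s/2}$; since $\sqrt{s/2}=\bigl(m^2\ln(k+1)/T\bigr)^{1/4}\le\sqrt[4]{4m^2\ln k/T}$ for $k$ large, this is the claimed estimate.

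I do not expect a serious obstacle; the Hoeffding/union-bound bookkeeping and matching constants in $s$ are routine. The two places where the hypotheses are genuinely used, and hence the points to keep straight, are: the boundedness $|W^{(j)}_t|\le m$, which is exactly what the $e^{-m}$ lower bound on all the probabilities buys and without which the concentration step fails outright; and the fact that $\hat\jmath$ is data-dependent, so one cannot apply concentration directly to $W^{(\hat\jmath)}_t$ and must instead pay a union bound over the whole family of size $k+1$, which is the sole origin of the $\ln k$ in the rate.
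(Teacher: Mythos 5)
Your proposal is correct and follows essentially the same route as the paper's proof: compare the maximum-likelihood winner against $P_0$ via concentration of the log-likelihoods (with the $e^{-m}$ lower bound supplying boundedness and a union bound over the $k+1$ candidates supplying the $\ln k$), bound $\mathrm{KL}(P\|P_0)\le e^m\,TV(P,P_0)\le e^m\epsilon$, and finish with Pinsker. The only cosmetic difference is that you concentrate the log-likelihood \emph{ratios} $\log(P_j/P_0)$ while the paper concentrates each $\frac1T\sum_t\ln p_j(X_t)$ around $H(P)-\mathrm{KL}(P\|P_j)$; the entropy term cancels in the comparison, so the two are equivalent.
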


\begin{proof}
First, for each $0\le j\le k$ and $x\in\{-1,1\}^m$, let $p_j(x)$ be the probability $P_j$ assigns to $x$. We know that $j$ was chosen to maximize $\prod_{t=1}^T p_i(X_t)$. Now, observe that $e^{-m}\le p_j(X_t)\le 1$ and $E[\ln(p_j(X_t))]=H(P)-D_{KL}(P||P_j)$ for all $j$ and $t$. So,
\[\mathbb{P}\left[\left|\frac{1}{T}\sum_{t=1}^T \ln(p_j(X_t))-\left(H(P)-D_{KL}(P||P_j)\right)\right|\ge \epsilon'\right]\le 2e^{-(\epsilon')^2 T/2m^2}\]
for all $0\le j\le k$ and $\epsilon'>0$ by a Chernoff bound. In particular, this means that 
\[\left|\frac{1}{T}\sum_{t=1}^T \ln(p_j(X_t))-\left(H(P)-D_{KL}(P||P_j)\right)\right|\le \sqrt{4m^2 \ln(k)/T}\]
for all $j$ with probability $1-o(1)$. We know that $\sum_{t=1}^T \ln(p_i(X_t))\ge \sum_{t=1}^T \ln(p_0(X_t))$, so with probability $1-o(1)$ it will be the case that 
\[D_{KL}(P||P_i)-D_{KL}(P||P_0)\le \sqrt{16 m^2\ln(k)/T}\]
Next, observe that $D_{KL}(P||P_0)\le e^m\cdot TV(P,P_0)\le e^m\epsilon$ because every possible value of $x$ occurs with probability at least $e^{-m}$ under both distributions. On the flip side, $D_{KL}(P||P_j)\ge 2TV^2(P,P_j)$ for all $j$. So, $TV(P,P_j)\le \sqrt[4]{4 m^2\ln(k)/T}+\sqrt{e^m\epsilon/2}$ with probability $1-o(1)$, as desired.
\end{proof}

This means that if we can find a list of $CMRFs$ of at most exponential length such that at least one of them is within a total variation distance of $o(e^{-n})$ of the desired distribution, then the brute force search will suceed at finding a $CMRF$ that is within $o(1)$ of the desired distribution. The obvious idea would be to round all edge weights and biases to the nearest multiple of $e^{-2n}$, but that could run into trouble with CMRFs in which some of the weights are superexponentially large. Instead, we will argue that the following algorithm converts every CMRF into a similar CMRF that is a member of a manageably sized list.

\begin{algorithm}

\caption{MRF List Conversion}

\begin{algorithmic}

\STATE {\bf Input:} An MRF $I$, $\epsilon>0$ and $n$ the number of vertices in $I$

\STATE {\bf Output:} An MRF that is approximately the same as $I$

\STATE Choose $x\in\{-1,1\}^n$ which maximizes $\mathbb{P}_{X\sim I}[X=x]$.

\FOR{$x'\in\{-1,1\}^n\backslash\{x\}$} 
\STATE let $r_{x'}=\epsilon \left\lfloor \frac{ \mathbb{P}_{X\sim I}[X=x']}{\epsilon \mathbb{P}_{X\sim I}[X=x]} \right\rfloor$.
\ENDFOR

\STATE Let $\overline{I}$ be the MRF such that $r_{x'}\le \frac{ \mathbb{P}_{X\sim \overline{I}}[X=x']}{ P_{X\sim \overline{I}}[X=x]}\le r_{x'}+\epsilon$ for all $x'$ with the lowest possible sum of its squared edge weights and biases.

\RETURN $\overline{I}$.
\end{algorithmic}
\end{algorithm}

We claim that this algorithm will always output an MRF that is a good approximation of its input, and that it has a manageably small number of possible outputs for fixed $\epsilon$ and $n$. A little more formally we have the following:

\begin{lemma} \label{lem:compression}
For any given $\epsilon>0$ and positive integer $n$, there are at most $2^{n^3+n}(2+1/\epsilon)^{n^2}$ possible outputs of $MRFListConversion(I,\epsilon,n)$ and for any $I$ the output of $MRFListConversion(I,\epsilon,n)$  will always be within a total variation distance of $2^n \epsilon$ of $I$.
\end{lemma}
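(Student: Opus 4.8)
The plan has two essentially independent parts: the approximation guarantee, which is a short direct calculation, and the counting bound, which is the real content and requires extracting from the optimization in the algorithm that $\overline I$ is pinned down by only polynomially much combinatorial data. The unifying observation is that, writing $y=(M,\theta)$ for the parameter vector of an MRF (symmetric $M$, zero diagonal, so $y$ ranges over $\mathbb{R}^D$ with $D=\binom n2+n$), one has $\log\!\big(\mathbb{P}_{X\sim I_{(M,\theta)}}[X=x']/\mathbb{P}_{X\sim I_{(M,\theta)}}[X=x]\big)=\langle a_{x'},y\rangle$, where $a_{x'}$ has $\theta$-coordinates $x'_i-x_i$ and edge-coordinate $x'_ix'_j-x_ix_j$ on $\{i,j\}$; in particular every entry of $a_{x'}$ lies in $\{-2,0,2\}$, and these vectors depend only on $x$ and $x'$, not on $\epsilon$ or $I$. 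Thus each constraint ``$r_{x'}\le \mathbb{P}_{\overline I}[x']/\mathbb{P}_{\overline I}[x]\le r_{x'}+\epsilon$'' is a pair of linear inequalities $\log r_{x'}\le\langle a_{x'},y\rangle\le\log(r_{x'}+\epsilon)$ in $y$ (the lower one vacuous when $r_{x'}=0$), so the feasible set $P$ is a polyhedron, and $\overline I$ is the unique minimizer of the strictly convex coercive function $\|y\|^2$ over $P$. Note $P\neq\emptyset$ since $I$ itself satisfies $r_{x'}\le \mathbb{P}_I[x']/\mathbb{P}_I[x]<r_{x'}+\epsilon$ by definition of $r_{x'}$, and since $x$ is the maximizer each ratio is $\le 1$, so $r_{x'}\in\{0,\epsilon,2\epsilon,\dots\}\cap[0,1]$.

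For the approximation bound: set $q_{x'}=\mathbb{P}_I[x']/\mathbb{P}_I[x]$ and $\bar q_{x'}=\mathbb{P}_{\overline I}[x']/\mathbb{P}_{\overline I}[x]$, so $q_x=\bar q_x=1$ and $|q_{x'}-\bar q_{x'}|\le\epsilon$ for every $x'$. With $Q=\sum_y q_y\ge 1$ and $\bar Q=\sum_y\bar q_y\ge 1$ we get $|Q-\bar Q|\le 2^n\epsilon$, and then the routine estimate $|\mathbb{P}_I[x']-\mathbb{P}_{\overline I}[x']|=|q_{x'}/Q-\bar q_{x'}/\bar Q|\le q_{x'}|Q-\bar Q|/(Q\bar Q)+|q_{x'}-\bar q_{x'}|/\bar Q$, summed over $x'$, gives $\sum_{x'}|\mathbb{P}_I[x']-\mathbb{P}_{\overline I}[x']|\le 2^{n+1}\epsilon$, i.e. $\mathrm{TV}(I,\overline I)\le 2^n\epsilon$.

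For the counting bound, the key step is that the norm-minimizer of $P$ only ``sees'' $O(n^2)$ of the exponentially many constraints. By KKT there are multipliers $\lambda_k\ge 0$ with $\overline y=-\tfrac12\sum_k\lambda_k a^{(k)}$, the sum over the $\le 2(2^n-1)$ signed constraint normals $\pm a_{x'}$, and $\lambda_k>0$ forces $\langle a^{(k)},\overline y\rangle=b_k$. Applying Carathéodory to this conic combination, pick $B$ among the constraints with $\lambda_k>0$ with $|B|\le D$, $\{a^{(k)}:k\in B\}$ linearly independent, and $\overline y\in\mathrm{span}\{a^{(k)}:k\in B\}$; since the Gram matrix of an independent set is invertible, $\overline y$ is the unique point of that span with $\langle a^{(k)},\overline y\rangle=b_k$ for $k\in B$, namely $\overline y=A^{\top}(AA^{\top})^{-1}b$. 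Hence $\overline I$ is a function of: the maximizer $x$ ($\le 2^n$ choices); the set $B$ of at most $D\le n^2$ tight constraints, each one of $\le 2^{n+1}$ signed normals; and the right-hand sides $b_k$, each one of at most $2+1/\epsilon$ values since $r_{x'}\in\{0,\epsilon,\dots\}\cap[0,1]$. This yields at most $2^n\cdot(2^{n+1})^{n^2}\cdot(2+1/\epsilon)^{n^2}$ possibilities, which is bounded by the claimed $2^{n^3+n}(2+1/\epsilon)^{n^2}$ after folding the lower-order terms into the exponent (using $D=\tfrac{n^2+n}2\le n^2$).

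The main obstacle is exactly this last reduction: a priori $P$ is defined by exponentially many inequalities whose right-hand sides each range over $\Theta(1/\epsilon)$ values, so the naive count of outputs is doubly exponential, and one must invoke the convex-QP structure (KKT together with the Carathéodory bound $|B|\le D$) to see that $\overline I$ is encoded by $\mathrm{poly}(n)\cdot\mathrm{poly}(\log(1/\epsilon))$ bits. The points that need care are: that the pool of candidate normals $a_{x'}$ depends only on $x$ (immediate from the formula); that the trimmed set $B$ of tight constraints really determines $\overline y$ (uses invertibility of the Gram matrix of an independent set); and the bookkeeping that the number of distinct possible values of each $b_k$ is $\le 2+1/\epsilon$ because $x$ was chosen as the maximizer.
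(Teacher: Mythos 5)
Your proposal is correct and follows essentially the same route as the paper: both arguments observe that the constraints are linear in the $\binom{n}{2}+n$ parameters, that the minimum-norm point of the resulting polyhedron is pinned down by at most $n^2$ tight constraints each drawn from a pool of size $O(2^n(2+1/\epsilon))$, and both derive the $2^n\epsilon$ total-variation bound by an elementary comparison of the ratios $\mathbb{P}[x']/\mathbb{P}[x]$. Your KKT-plus-Carath\'eodory justification (with the Gram-matrix invertibility step) rigorously fills in what the paper only asserts, namely that $\overline{I}$ is the least-squares solution of some subsystem of at most $n^2$ tight equations; the remaining differences (bookkeeping of the constant per constraint, and normalizing by $Q,\bar Q$ rather than using the one-sided form of total variation) are immaterial.
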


\begin{proof}
First, observe that the inequality $r_{x'}\le \frac{ \mathbb{P}_{X\sim \overline{I}}[X=x']}{ \mathbb{P}_{X\sim \overline{I}}[X=x]}\le r_{x'}+\epsilon$ is equivalent to a pair of linear inequalities on the parameters of $\overline{I}$ for all $x,x', r_{x'}$. Also, $I$ will satisfy it. As such, for any given value of $x$ and the $r_{x'}$, there will be a unique set of parameters with minimum sum of squares that satisifes all of these constraints. Furthermore, there will be some set of equations of the form $\frac{ \mathbb{P}_{X\sim \overline{I}}[X=x']}{ \mathbb{P}_{X\sim \overline{I}}[X=x]}=r_{x'}$ or  $\frac{ \mathbb{P}_{X\sim \overline{I}}[X=x']}{ \mathbb{P}_{X\sim \overline{I}}[X=x]}=r_{x'}+\epsilon$ such that the parameters of $\overline{I}$ will be the least square solution to this system of equations.

An MRF on $n$ variables has $n+n(n-1)/2$ parameters, so any such system of equations is equivalent to a system of $n^2$ or fewer equations. $r_n$ will always be a multiple of $\epsilon$ with $0\le r_{x'}\le 1$ and for a fixed value of $x$, there are $(2^n-1)$ possible values of $x'$, so there are $(2^n-1)(2+1/\epsilon)$ possibilities for each equation. That means that there are $2^n$ possible values of $x$ and at most $[2^n(2+1/\epsilon)]^{n^2}$ possible systems of equations for a given value of $x$. That in turn means that there are at most $2^{n^3+n}(2+1/\epsilon)^{n^2}$ possible outputs of $MRFListConversion(I,\epsilon,n)$ as desired.

Now, let $\overline{I}=MRFListConversion(I,\epsilon,n)$ and observe that $\left| \frac{ \mathbb{P}_{X\sim \overline{I}}[X=x']}{ \mathbb{P}_{X\sim \overline{I}}[X=x]}- \frac{ \mathbb{P}_{X\sim I}[X=x']}{ \mathbb{P}_{X\sim I}[X=x]}\right|\le\epsilon$ for all $x'$. If $\mathbb{P}_{X\sim \overline{I}}[X=x]\ge \mathbb{P}_{X\sim I}[X=x]$ then
\begin{align*}
TV(I,\overline{I})&=\sum_{x'\in\{-1,1\}^n} \max\left(\mathbb{P}_{X\sim I}[X=x']-\mathbb{P}_{X\sim \overline{I}}[X=x'],0\right)\\
&\le \sum_{x'\in\{-1,1\}^n} \max\left(\mathbb{P}_{X\sim I}[X=x']-\mathbb{P}_{X\sim \overline{I}}[X=x'],0\right)/\mathbb{P}_{X\sim I}[X=x]\\
&\le \sum_{x'\in\{-1,1\}^n} \max\left(\frac{\mathbb{P}_{X\sim I}[X=x']}{\mathbb{P}_{X\sim I}[X=x]}-\frac{\mathbb{P}_{X\sim \overline{I}}[X=x']}{\mathbb{P}_{X\sim\overline{ I}}[X=x]},0\right)\\
&\le 2^n\epsilon
\end{align*}
Similarly, if  $\mathbb{P}_{X\sim \overline{I}}[X=x]< \mathbb{P}_{X\sim I}[X=x]$ then 
\[TV(I,\overline{I})=\sum_{x'\in\{-1,1\}^n} \max\left(\mathbb{P}_{X\sim \overline{I}}[X=x']-\mathbb{P}_{X\sim I}[X=x'],0\right)/\mathbb{P}_{X\sim \overline{I}}[X=x]\le 2^n\epsilon\]
So, either way the total variation distance between the $I$ and $\overline{I}$ will be at most $2^n\epsilon$ as desired.
\end{proof}

Combining Lemma~\ref{SamplingLemma} with Lemma~\ref{lem:compression} allows us to prove the theorem:

\begin{proof}
Given a value $n$ and the ability to sample from an unknown CMRF on $n$ vertices, $I$ , $A$ will do the following. First, it will find all possible outputs of $MRFListConversion(I',8^{-n},n)$ and then make a list $I_1,...,I_r$ of all possible censorings of these MRFs with the same number of visible vertices as $I$, $m$. This list will have at most $2^{n^3+2n}(8^n+2)^{n^2}\le 2^{4n^3+2n^2+2n}$ elements, and at least one of them will be within a total variation distance of $4^{-n}$ of $I$. 

Now, let $I^\star$ be the probability distribution attained by returning a random element of $\{-1,1\}^m$ with probability $(2/e)^m$ and taking a random sample from $I$ otherwise. Similarly, for each $1\le j\le r$, let $I^\star_j$ be the probability distribution attained by returning a random element of $\{-1,1\}^m$ with probability $(2/e)^m$ and taking a random sample from $I_j$ otherwise. Next, let $T=4m^2(4n^3+2n^2+2n)n^8$ and draw $X_1,...,X_T\sim I^\star$. Then, return $I_j$ where $j$ is chosen to maximize the probability that a series of samples drawn from $I^\star_j$ would be $(X_1,...,X_T)$.

There must exist $k$ such that $TV(I^\star,I^\star_k)\le 4^{-n}$, so by lemma \ref{SamplingLemma} the total variation distance between $I^\star$ and $I^\star_j$ will be at most $\sqrt[4]{1/n^8}+(e/4)^{n/2}$ with probability $1-o(1)$. That in turn means that $TV(I,I_k)=O(1/n^2)$ with probability $1-o(1)$. So, this algorithm suceeds in returning a CMRF that is within total variation distance $O(1/n^2)$ of the target distribution with probability $1-o(1)$, as desired.
\end{proof}

\subsection{The computational hardness of learning a general CMRF}
The algorithm from the previous subsection succeeds in learning an arbitrary CMRF with vanishing error given a polynomial number of samples; however, it has an exponentially large run time. Is there such an algorithm that runs in polynomial time? 
We would like to find an efficient algorithm that learns any CMRF with earthmover distortion $o(n)$. However, this probably does not exist. In order to demonstrate that, we will show that we can construct a CMRF that assigns values to its visible vertices by means of an arbitrary efficient randomized algorithm. Then we prove that this implies it could set the visible vertices pseudorandomly in which case it would be computationally intractible to learn their probability distribution. More formally we prove that: 

\begin{theorem} \label{thm:1way} 
Let $A$ be an algorithm that attempts to learn a CMRF from samples, runs in time polynomial in the total number of vertices in the CMRF, and outputs a new value. If one way functions exist, then there exists a family $I_n$ of CMRFs with $n$ visible vertices such that $W_{I_n}(A,I_n) = (1/2-o(1))n$. 
%when $A$ attempts to learn $M_n$, \/the expectation over the samples $A$ receives of the earthmover distance between the output of $A$ and $M_n$ is $(1/2-o(1))n$.
\end{theorem}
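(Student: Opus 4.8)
The plan is to construct, for each $n$, a CMRF whose visible vertices are set to the output of a pseudorandom generator applied to a small hidden seed, so that no polynomial-time algorithm — in particular $A$ — can produce a sample that is $o(n)$-close in earth-mover distance to a fresh sample. First I would invoke the standard fact that one way functions imply pseudorandom generators (Håstad–Impagliazzo–Levin–Luby), so there is a PRG $G:\{0,1\}^{n^{\alpha}}\to\{0,1\}^n$ (for some small $\alpha<1$) that fools all polynomial-time distinguishers with inverse-polynomial advantage. The bulk of the work is to realize the map ``seed $\mapsto$ $G(\text{seed})$'' as (the visible marginal of) a bounded-weight Ising model with polynomially many censored vertices. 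Since a PRG is computed by a polynomial-size Boolean circuit, and each gate of a circuit can be simulated by an Ising gadget on $O(1)$ extra vertices with large-but-bounded weights that force the correct deterministic relation between input and output bits with probability $1-2^{-\mathrm{poly}(n)}$, one builds the whole circuit as a chain of such gadgets; the $n^{\alpha}$ seed vertices are given bias $0$ so they are uniform, and all internal/seed vertices are censored, leaving exactly the $n$ output vertices visible. Call this CMRF $I_n$; its visible distribution is within $2^{-\mathrm{poly}(n)}$ of $G(U_{n^\alpha})$.

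The next step is to argue $W_{I_n}(A,I_n)=(1/2-o(1))n$. Suppose for contradiction that $A$ runs in time $\mathrm{poly}(n)$ and achieves $W_{I_n}(A,I_n)\le (1/2-\gamma)n$ for some constant $\gamma>0$ (on a subsequence of $n$). Consider the following polynomial-time distinguisher $D$ for the PRG: given a string $y\in\{0,1\}^n$, draw a fresh batch of samples from $I_n$ (which $D$ can do in polynomial time, since $I_n$ is high temperature in the sense that each gadget, though it has large weights on a few edges, has bounded total influence — here one must be slightly careful, see below — or, more robustly, $D$ can simply evaluate $G$ on fresh random seeds to generate samples itself), feed them to $A$, obtain an output $z\sim P(A,\cdot)$, and accept iff the Hamming distance between $y$ and $z$ is at most $(1/2-\gamma/2)n$. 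If $y=G(s)$ for a uniform seed, then $y$ and $z$ are, up to $2^{-\mathrm{poly}(n)}$, an independent draw from the visible distribution and the algorithm's output on independent samples; the earth-mover bound together with Markov's inequality forces $\Pr[\mathrm{ham}(y,z)\le(1/2-\gamma/2)n]\ge \Omega(\gamma)$. If instead $y$ is truly uniform and independent of everything $A$ sees, then $\mathrm{ham}(y,z)$ is a $\mathrm{Binomial}(n,1/2)$ random variable, so $\Pr[\mathrm{ham}(y,z)\le(1/2-\gamma/2)n]\le e^{-\Omega(\gamma^2 n)}=o(1)$. Hence $D$ distinguishes $G(U_{n^\alpha})$ from $U_n$ with advantage $\Omega(\gamma)$ in polynomial time, contradicting the PRG property. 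Finally, the matching lower bound $W_{I_n}(A,I_n)\le (1/2+o(1))n$ is trivial since any two distributions on $\{0,1\}^n$ are within $n/2+o(n)$ earth-mover distance of each other in expectation after optimal coupling coordinatewise with the ``independent'' coupling — actually $\le n$ always, and one only needs the lower bound direction for the statement, so this is not needed; the theorem only asserts the value is $(1/2-o(1))n$, which is exactly what the contradiction argument delivers as a lower bound, combined with the observation that $A$ outputting a fresh PRG sample on its own randomness would achieve $(1/2+o(1))n$, pinning the worst case.

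The main obstacle I anticipate is the gadget construction and, relatedly, reconciling it with the fact that $A$ needs to be able to \emph{sample} from $I_n$ in order for the reduction to go through cleanly — but note $A$ is only assumed to take samples as input, and the reduction provides those samples by having $D$ itself compute $G$ on fresh seeds (no need to run Glauber dynamics on the low-temperature-looking gadget model), so the only real work is: (i) verifying that Boolean AND/OR/NOT/COPY gates can each be implemented by a constant-size Ising gadget with bounded (though not small) edge weights that enforces the gate relation with probability $1-2^{-\Omega(w)}$ where $w$ is the weight, and choosing $w=\mathrm{poly}(n)$ so the whole circuit computes $G$ correctly except with $2^{-\mathrm{poly}(n)}$ probability; (ii) checking that censoring all vertices except the $n$ designated output vertices yields a CMRF whose visible marginal is $2^{-\mathrm{poly}(n)}$-close to the law of $G(U_{n^{\alpha}})$; and (iii) the routine Chernoff/Markov bookkeeping in the distinguisher. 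None of these steps is deep, but (i)–(ii) require care to state the gadget weights and the fan-out handling (copying a bit to feed several gates) precisely.
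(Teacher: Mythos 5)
There is a genuine gap in your reduction, located in the distinguisher. You claim that if $W(P(A,\cdot),I_n)\le(1/2-\gamma)n$ then an \emph{independent} challenge $y\sim I_n$ and the output $z$ of $A$ satisfy $\Pr[\mathrm{ham}(y,z)\le(1/2-\gamma/2)n]\ge\Omega(\gamma)$. Earth-mover distance only controls the Hamming distance under the \emph{optimal coupling}, not under the independent product: if $A$ learned $I_n$ perfectly, so that $W(P(A,\cdot),I_n)=0$, independent draws $y$ and $z$ from $G(U_{n^\alpha})$ would still typically be at Hamming distance $\approx n/2$ (PRG outputs are pairwise pseudo-uniform), so your test rejects in both worlds and distinguishes nothing. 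Worse, the unkeyed-PRG construction is actually \emph{learnable} with zero distortion: since $G$ is a fixed public function, the algorithm that ignores its samples and outputs $G(s')$ for its own fresh uniform seed $s'$ has output distribution exactly $I_n$, so the theorem's conclusion is false for your $I_n$. (Recall the theorem quantifies over all polynomial-time $A$ after the family is fixed only up to a hidden secret.)

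The paper's proof repairs both defects by using a keyed pseudorandom \emph{function} $f(s,\cdot)$ (via GGM) restricted to a domain of size $m=n^{c+1}$, i.e.\ the visible distribution is uniform over the polynomially many strings $f(s,0),\dots,f(s,m-1)$, with $s$ secret. Polynomial support is the crucial point: (i) a small earth-mover distance to a distribution supported on $m$ points forces $A$'s output to land Hamming-close to one of those $m$ specific strings, of which $A$ has seen only $O(n^c)$, a $o(1)$ fraction of the mass, so it must be close to an \emph{unseen} pseudorandom value --- which is information-theoretically impossible for truly random unseen values and hence contradicts pseudorandomness; and (ii) the distinguisher can efficiently certify this by drawing $O(n^{c+1}\log n)$ extra samples to collect essentially the whole support and computing the minimum distance from $z$ to the unseen points, something that is infeasible when the support has size $2^{n^\alpha}$ as in your construction. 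Your circuit-encoding step (items (i)--(ii) of your anticipated obstacles) does match the paper's Lemma on encoding circuits via NAND gadgets, but the heart of the reduction needs to be replaced along the lines above.
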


%\note{The following definition is attributed to Levin and I think it is from "One-way functions and pseudorandom generators" but I cannot get to the text for that so I am not sure.}

For the theorem we will use the following definition of one-way functions., see e.g.~\cite{GoGoMi:86}. 
%Here a one way function is defined as a function that is easy to compute but hard to invert in the following sense.
\begin{definition}
Given a sequence of domains $D_k\subseteq \{0,1\}^k$ and of functions $f_k:D_k\rightarrow D_k$, $f$ is a one-way function if the following criteria hold:
\begin{enumerate}
\item There exists an algorithm that runs in $poly(k)$ time and computes $f_k(x)$ for all $k$ and $x$.

\item Given any polynomial time randomized algorithm $A$, any $1\le i\le k^3$, and a random $x$ in the image of the $i$ times composition of $f_k$, the probability that $f(A(x))=x$ is bounded away from $1$ for all sufficiently large $k$.

\item $\cup D_k$ is samplable.
\end{enumerate}
\end{definition}

In the rest of the section we prove Theorem~\ref{thm:1way}. One important ingredient in the proof is the fact that we can encode an arbitrary efficient computation, i.e.,  an arbitrary circuit in an MRF. This is not a new idea, see e.g.~\cite{BoMoVa:08}, where a similar results was proven for the hard-core model. For completeness we include the proof of the following in the appendix: 

\begin{lemma} \label{lem:circuit_encoding} 
Let $A_n$ be an efficient randomized algorithm that samples from some probability distribution on $\{-1,1\}^n$. Then there exists a series of CMRFs, $M_n$, such that $M_n$ has $n$ visible vertices, a total number of vertices polynomial in $n$, and a probability distribution that is within a total variation distance of $O(e^{-n})$ from the probability distribution of the output of $A_n$.
\end{lemma}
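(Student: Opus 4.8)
The plan is to realize $A_n$ as a polynomial-size Boolean circuit and then encode that circuit into the pairwise interactions of an Ising model, making the ``penalty'' weights polynomially large in $n$ so that each gate acts as a hard constraint up to exponentially small error; the $n$ output wires of the circuit become the visible vertices and everything else is censored. Concretely, first I would replace $A_n$ by a circuit: since $A_n$ runs in time $\mathrm{poly}(n)$ it reads at most $m=\mathrm{poly}(n)$ random bits, and conditioned on its random tape its output is a deterministic function of those bits, so by the standard simulation of time-bounded computation by Boolean circuits there is a circuit $C_n$ over a fixed finite basis of fan-in-$2$ gates, with $\mathrm{poly}(n)$ gates, $m$ input wires and $n$ output wires, whose output on independent uniform input bits has exactly the same law as the output of $A_n$ (this law is supported on multiples of $2^{-m}$, so nothing is lost in this step).

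Next I would build a gadget for each gate. Fix a penalty $w=\Theta(n+|M_n|)$, which will be a polynomial in $n$ (here $|M_n|$ denotes the number of vertices of the Ising model to be built, which is a fixed polynomial in $n$ depending only on the circuit, not on $w$). For each gate $g$ — a Boolean constraint on its $O(1)$ wire-spins — I attach an Ising gadget on those wire-spins together with a constant number of fresh auxiliary spins used only by $g$, with all couplings and biases of magnitude $O(w)$, such that: (G1) the energy-minimizing (``ground'') configurations of the gadget have energy $0$, project onto the wire-spins exactly to the satisfying assignments of $g$, and each satisfying wire-assignment is the projection of exactly one ground configuration; and (G2) every non-ground configuration of the gadget has energy at least $w$. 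Gadgets of this kind are exactly the standard encoding of bounded-arity Boolean constraints by pairwise Ising interactions with a constant number of ancillae (cf.~\cite{BoMoVa:08}); fan-out is unproblematic because a wire-spin may be shared by arbitrarily many gadgets, and each gate sees only $O(1)$ wires so only pairwise terms (plus the constantly many ancilla interactions) are needed. Let $M_n$ be the Ising model obtained by adding all these gadget energies over the shared wire-spins and the private ancillae, declare the $n$ output-wire spins visible, and declare all other spins (the $m$ input spins, the internal wires, and all ancillae) censored; this has $\mathrm{poly}(n)$ vertices.

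It then remains to bound the error. Call a full configuration \emph{valid} if every gadget sits in one of its ground configurations; by (G1) a valid configuration consists of an arbitrary assignment $r\in\{-1,1\}^m$ to the input spins (which determines all internal wires via $C_n$) together with, for each gate, the unique ground assignment of its ancillae, so there are exactly $2^m$ valid configurations, all of energy $0$, and the restriction of a random valid configuration to the visible vertices is distributed exactly as the output of $C_n$ on uniform inputs, i.e.\ as the output of $A_n$. Any configuration that is not valid has some gadget off its ground set and hence, by (G2), energy at least $w$; since there are at most $2^{|M_n|}=2^{\mathrm{poly}(n)}$ configurations in total and the partition function is at least $2^m$ (from the valid ones alone), the probability under $\MRF$ that the configuration is invalid is at most $2^{|M_n|}e^{-w}/2^m$, which is $O(e^{-n})$ for a suitable polynomial choice of $w$. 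Conditioning on validity changes the distribution by at most this amount in total variation, and restricting to the visible coordinates does not increase total variation, so the law of the visible vertices of $M_n$ is within $O(e^{-n})$ of the output law of $A_n$, as claimed.

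The step I expect to be the only genuinely delicate one is the construction of the gate gadgets with properties (G1)–(G2): one needs each satisfying assignment of a gate to extend to the \emph{same} number of ancilla ground states (so that valid configurations remain equiprobable and the visible marginal is exactly, not merely approximately, the circuit's output law) together with a clean energy gap of size $w$ separating ground from non-ground configurations, all using pairwise interactions only. This is routine but fiddly and is where the ``encode a circuit in an MRF'' folklore does the work; by contrast the circuit-simulation step is textbook, and the final counting argument uses nothing beyond (G1), (G2), and the fact that $w$ can be taken polynomial in $n$.
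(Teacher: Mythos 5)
Your proof is correct, but the analysis is organized differently from the paper's. Both arguments reduce $A_n$ to a polynomial-size circuit and encode each gate as a small pairwise Ising gadget whose internal and input wires are censored; the difference is in how the error is controlled. The paper works with a single \emph{soft} NAND gadget $J_\delta$ (three spins, no ancillae) and a fusion lemma: adding one gadget to an existing MRF perturbs the law of the existing vertices by at most $5e^{-4\delta}$ in total variation and creates a new vertex equal to the NAND of its two inputs with probability $1-5e^{-4\delta}$, so the total error is just a sum of $\mathrm{poly}(n)$ per-gate perturbations with $\delta=\Theta(n)$. You instead take the \emph{hard-constraint} view: all gadget ground states are equiprobable and in bijection with the $2^m$ random tapes, and a union bound over the at most $2^{\mathrm{poly}(n)}$ non-ground configurations, each suppressed by a factor $e^{-w}$ with $w=\mathrm{poly}(n)$, bounds the probability of invalidity; conditioning on validity and marginalizing then gives the TV bound. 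Your route buys a cleaner global statement (the valid marginal is \emph{exactly} the circuit's output law) at the price of needing the gadget properties (G1)--(G2), which you flag as the delicate step but do not construct. That gap is genuinely fillable: for a NAND-only basis the paper's own gadget $J_\delta$ already has exactly these properties --- its four satisfying assignments all achieve the maximal exponent $3\delta$ and every violating assignment is at least $4\delta$ lower, with no ancillae, so the ``same number of ancilla ground states'' issue is vacuous. (Had you insisted on a basis containing parity gates, pairwise interactions alone would not suffice and ancillae with the equal-multiplicity property would become a real obligation, so it is worth stating explicitly that you use a NAND basis.) With that one gadget pinned down, your argument is complete and yields the same $O(e^{-n})$ bound.
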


We now prove Theorem~\ref{thm:1way}. 

\begin{proof}
\cite{GoGoMi:86} shows that if one way functions exist then there exists a pseudorandom function family $f_n: \{0,1\}^n\times\{0,1\}^n\rightarrow \{0,1\}^n$ such that $f$ is efficiently computable and there is no efficient algorithm that can distinguish $f(s,\cdot)$ from a true random function for an unknown random $s\in \{0,1\}^n$. 
In other words, the probability that an efficient algorithm can distinguish between the output of $f(s,\cdot)$ and a truly random function, is asymptotically smaller $n^{-C}$ for any $C$. This implies in particular, that for any $m \leq 2^n$, an efficient algorithm cannot distinguish between the distribution of
$( f(s,U_1), f(s,U_2),\ldots,f(s,U_t))$ and $(y_{U_1},\ldots,y_{U_t})$ where $U_i$ are i.i.d. uniform in $0,\ldots,m-1$.  

Let $A = A_n$ be an algorithm (or a sequence of circuits) which uses $O(n^c)$ samples and attempts to learn to sample a CMRF with $n$ observed nodes. 
By lemma~\ref{lem:circuit_encoding}, for any given $n>0$, $s\in \{0,1\}^n$, and $0< m\le 2^n$ there exists a CMRF $I_{(n,s,m)}$ with size polynomial in $n$, with $n$ visible vertices that are almost always assigned to a value corresponding to $f(s,x)$ for some random $0\le x< m$ (this is true since we can efficiently sample 
$0 \leq x \leq m$ and then efficiently compute $f(s,x)$). 

Now, consider randomly selecting $s\in\{0,1\}^n$ and using $A$ to attempt to learn $I_{(n,s,n^{c+1})}$. Also, randomly select $x_1,...,x_{n^{c+1}}\in\{0,1\}^n$ and let $I'$ be the probability distribution that selects one of the $x_i$ uniformly at random. The pseudorandomness of $f$ implies that no efficient algorithm can distinguish between $I_{(n,s,n^{c+1})}$ and $I'$ with nonvanishing advantage. 
If we ran $A$ on $I'$ then its output would have to have a hamming distance of at least $n/2-O(\sqrt{n}\log(n))$ from all the $x_i$ it had not seen with probability $1-o(1)$ simply because any element of $\{0,1\}^n$ is at least that far from the closest of $n^{c+1}$ random values with high probability. Furthermore, given $O(n^{c+1}\log(n))$ additional random samples from $I'$ or $I_{(n,s,n^{c+1})}$ one can determine the distance between the output of $A$ and the closest value of $I'$ or $I_{(n,s,n^{c+1})}$ that was not included in the set of samples $A$ received. So, the fact that one can not efficiently distinguish $I_{(n,s,n^{c+1})}$ from $I'$ implies that when $A$ attempts to learn $I_{(n,s,n^{c+1})}$ its output is at least $n/2-O(\sqrt{n}\log(n))$ away from all values of  $I_{(n,s,n^{c+1})}$ that it has not seen with high probability. The values that it has seen account for $o(1)$ of the probability distribution, so this implies that for a fixed set of samples $A$ could have received, the probability distribution of its output is an earthmover distance of $n/2-o(n)$ away from $I_{(n,s,n^{c+1})}$ with high probability.
\end{proof}

\begin{remark} 
One potential criticism of the result above is that the overwhelming majority of the vertices are censored. It is easy to modify the reduction by giving each visible vertex a large set of additional visible vertices that are equal to it with high probability. In this case, most of the vertices would be visible, and it would still be intractable to find a probability distribution within a nontrivial earthmover distance of it. 
%, but it would be easy to find sets of vertices that are almost always equal.
\end{remark} 

\begin{remark}
This theorem proves that no efficient learning algorithm can find a probability distribution within a nontrivial earthmover distance of an arbitrary CMRF. One could try to find an algorithm that learns a CMRF based on some other criterion, but there are not a lot of obvious options. Slight variations of the argument in the theorem also show that it is impossible for any efficient algorithm to learn to estimate the probability distribution of a vertex conditioned on the values of any large subset of other vertices.
 \end{remark}

\bibliographystyle{plain}
\bibliography{all,my}

\appendix

\section{Glauber Dynamics and Correlation Decay}

\begin{theorem}\label{correlationDecay}
Let $b>0$ and $d$ and $n$ be positive integers such that $bd<1$. Next, let $\theta\in\mathbb{R}^n$ and $\M$ be an $n\times n$ symmetric matrix such that $\M_{i,i}=0$ for all $i$, $|\M_{i,j}|\le b$ for all $i$ and $j$, and for each $i$ there are at most $d$ values of $j$ for which $\M_{i,j}\ne 0$. Next, let $S\subseteq [n]$, $x\in\{-1,1\}^n$, $v, u\not\in S$. Also, let $\N \in\mathbb{R}^{n\times n}$ such that $\N_{i,j}=|\M_{i,j}|$ if $i,j\not\in S$ and $0$ otherwise. Then if $X\sim \MRF_{(\M,\theta)}$, 
\[|\mathbb{P}\left[x_v=1|X_{S}=x_{S}, X_{u}=1\right]-\mathbb{P}\left[x_v=1|X_{S}=x_{S}, X_{u}=-1\right]|\le \sum_{k=0}^\infty {\N}^k_{v,u}\]
where we consider ${\N}^0$ to be the identity matrix.
\end{theorem}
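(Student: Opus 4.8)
The plan is to express the quantity in question as the disagreement probability in a coupling of Glauber dynamics and then bound that probability by a path‑counting (disagreement percolation) argument whose weights are exactly the entries of $\N$.

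First I would dispense with $S$. Conditioning on $X_S=x_S$ turns $\MRF_{(\M,\theta)}$ into an Ising model on $V'=[n]\setminus S$ whose interaction matrix is the restriction of $\M$ to $V'\times V'$ (only the external fields change, and they play no role below); and since $\N$ vanishes on every edge meeting $S$, we have $\N^k_{v,u}=(\N|_{V'\times V'})^k_{v,u}$ for all $v,u\notin S$. So it suffices to prove, for an Ising model on $V'$ with $|\M_{i,j}|\le b$, degrees at most $d$, $bd<1$, and distinct $v,u\in V'$, that $\big|\mathbb{P}[X_v=1\mid X_u=1]-\mathbb{P}[X_v=1\mid X_u=-1]\big|\le\sum_{k\ge1}\N^k_{v,u}$.

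Next I would run the grand coupling of Glauber dynamics on $W=V'\setminus\{u\}$: two copies of single‑site Glauber, chain $Y^+$ with $u$ pinned to $+1$ and chain $Y^-$ with $u$ pinned to $-1$, driven by a common sequence of uniformly random update vertices $w_0,w_1,\dots\in W$ and common uniform thresholds, using the monotone parametrization of the conditional updates so that the two updates at step $t$ disagree with probability equal to the total variation distance between the two conditional laws of $X_{w_t}$. Start both chains from the all‑$+1$ configuration. Since $bd<1$ the effective model satisfies the Dobrushin condition (the hypothesis of Theorem~\ref{robustSampling}), so each pinned model has a unique Gibbs measure and each chain converges to the corresponding conditional law; hence $\big|\mathbb{P}[X_v=1\mid X_u=1]-\mathbb{P}[X_v=1\mid X_u=-1]\big|=\lim_t\big|\mathbb{P}[Y^+_t(v)=1]-\mathbb{P}[Y^-_t(v)=1]\big|\le\sup_t\mathbb{P}[Y^+_t(v)\ne Y^-_t(v)]$.

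It then remains to bound $\mathbb{P}[Y^+_t(v)\ne Y^-_t(v)]$. Let $D_t\subseteq W$ be the set of vertices at which the two chains differ and $\widehat D_t=D_t\cup\{u\}$; then $D_0=\emptyset$, and when $w_t$ is updated the two conditional laws of $X_{w_t}$ differ only because of spins at neighbors of $w_t$ lying in $\widehat D_t$, so by subadditivity of influence together with the elementary Ising bound that a single neighbor $w'$ exerts single‑site influence at most $\tanh(|\M_{w_t,w'}|)\le|\M_{w_t,w'}|$, the probability that $w_t$ enters the disagreement set is at most $\sum_{w'\in\widehat D_t}\tanh(|\M_{w_t,w'}|)$, while every vertex other than $w_t$ keeps its agreement status. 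Hence the limiting disagreement probabilities $q_\infty(w)=\lim_t\mathbb{P}[w\in D_t]$ (which exist since the coupled pair is a finite aperiodic Markov chain) satisfy $q_\infty(w)\le\sum_{w'\in W}\tanh(|\M_{w,w'}|)q_\infty(w')+\tanh(|\M_{w,u}|)$ for all $w\in W$, i.e. $q_\infty\preceq\N|_{W\times W}\,q_\infty+\N e_u$; because $\N$ has every row sum at most $bd<1$ the operator $I-\N|_{W\times W}$ is invertible with nonnegative inverse, so $q_\infty\preceq(I-\N|_{W\times W})^{-1}\N e_u=\sum_{k\ge1}(\N|_{W\times W})^k e_u\preceq\sum_{k\ge1}\N^k e_u$, giving $q_\infty(v)\le\sum_{k\ge1}\N^k_{v,u}$. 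Equivalently, one may trace the disagreement at $v$ back through time to the source $u$ along a walk $u=z_0,z_1,\dots,z_\ell=v$ (with $z_1,\dots,z_\ell\in W$, and $u$ — never resampled — occurring only as the source), and a union bound over such walks with the per‑step factors $\tanh(|\M_{z_{i-1},z_i}|)\le\N_{z_{i-1},z_i}$ yields the same $\sum_{k\ge1}\N^k_{v,u}$. Combined with the previous paragraph this proves the bound when $v\ne u$; the case $v=u$ is immediate since $\sum_{k\ge0}\N^k_{v,v}\ge\N^0_{v,v}=1$.

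The only genuinely delicate point — the step I would spend the most care on — is making this disagreement‑percolation accounting airtight: checking that the per‑update influence really splits subadditively over disagreeing neighbors with coefficient at most $\tanh(|\M_e|)$, that $u$ acts as a permanent unit source that is never itself updated (so the propagation paths are walks into $W$ emanating from $u$, a subset of all walks counted by $\sum_k\N^k_{v,u}$), and that the induced linear iteration on $(q_t)$ genuinely converges to its fixed point, which is exactly where $bd<1$ enters. An essentially equivalent, more ``tree‑flavored'' route, parallel to Lemma~\ref{highGirthCorrelation}, would be to pass to Weitz's self‑avoiding‑walk tree of the conditioned model rooted at $v$ (with all copies of $u$ pinned), run the belief‑propagation recursion to show that each tree edge contracts influence by a factor $\tanh(|\M_e|)\le|\M_e|$, and sum the resulting products over all self‑avoiding $v$–$u$ walks, again bounded by $\sum_k\N^k_{v,u}$; I would present whichever turns out shorter to write out in the appendix.
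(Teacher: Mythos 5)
Your proposal is correct and follows essentially the same route as the paper's proof: a grand coupling of Glauber dynamics on $[n]\setminus(S\cup\{u\})$ with $u$ pinned to opposite values, a per-update disagreement bound with coefficients $\tanh(|\M_{w,w'}|)\le|\M_{w,w'}|=\N_{w,w'}$, and iteration (your fixed-point inversion versus the paper's repeated $\limsup$ substitution) yielding the Neumann series $\sum_{k\ge 1}\N^k_{v,u}$, with the $v=u$ case handled by the $k=0$ term. The only cosmetic difference is that the paper works with $\limsup_t \mathbb{P}[x^{(t)}_w\ne y^{(t)}_w]$ rather than asserting convergence of the joint coupled chain, which sidesteps the one technical point you flag.
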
 

\begin{proof}
In order to prove this, we are going to use $\MRFMCMC$ to draw samples from the appropriate MRF conditioned on both partial assignments and compare them. First, define $x^{(0)}$ and $y^{(0)}$ such that $x^{(0)}_{u}=1$, $y^{(0)}_{u}=-1$, and $x^{(0)}_i=y^{(0)}_i=x_i$ for all $i\ne u$. Next, let $S'=[n]\backslash (S \cup \{u\})$, and randomly select $v^{(t)}\in S'$ and $p^{(t)}\in [0,1]$ for each $t\ge 0$. Now, for each $t> 0$, let $x^{(t)}$ be the value returned by $\MRFMCMC(\M,\theta, x^{(0)}, t, S')$ if the algorithm selects $v^{(t')}$ and $p^{(t')}$ in step $t'$ for each $0\le t'<t$. Likewise, for each $t>0$, let $y^{(t)}$ be the value returned by $\MRFMCMC(\M,\theta, y^{(0)}, t, S')$ if the algorithm selects $v^{(t')}$ and $p^{(t')}$ in step $t'$ for each $0\le t'<t$. This leaves the probability distribution of the algorithm's randomness unchanged, so for any $w$,
\[\lim_{t\to\infty} \mathbb{P}\left[x^{(t)}_{w}=1\right]=\mathbb{P}_{X\sim MRF_{\M,\theta}}[X_{w}=1|X_S=x_S,X_{u}=1]\]
and
\[\lim_{t\to\infty} \mathbb{P}\left[y^{(t)}_{w}=1\right]=\mathbb{P}_{X\sim MRF_{\M,\theta}}[X_{w}=1|X_S=x_S,X_{u}=-1]\]

Now, for each $t$ and $w$, let $P^{(t)}_{w}=\mathbb{P}\left[x^{(t)}_{w}\ne y^{ (t)}_{w}\right]$, and observe that for any $t>0$ and $w\in S'$, 

\[P^{(t)}_{w} \le \frac{|S'|-1}{|S'|} P^{(t-1)}_{w}+\frac{1}{|S'|}\sum_{w'\in [n]} |\M_{w,w'}|\cdot P^{(t-1)}_{w'}\]

So,
\begin{align*}
\limsup_{t\to\infty} P^{(t)}_{w}&\le\limsup_{t\to\infty} \sum_{w'\in [n]} |\M_{w,w'}|\cdot P^{(t)}_{w'}\\
&=\limsup_{t\to\infty} \left[|\M_{w,u}|\cdot P^{(t)}_{u}+\sum_{w'\in S} |\M_{w,w'}|\cdot P^{(t)}_{w'}+\sum_{w'\in S'} |\M_{w,w'}|\cdot P^{(t)}_{w'}\right]\\
&=\limsup_{t\to\infty} \left[|\M_{w,u}|\cdot 1+\sum_{w'\in S} |\M_{w,w'}|\cdot 0+\sum_{w'\in S'} |\M_{w,w'}|\cdot P^{(t)}_{w'}\right]\\
&=|\M_{w,u}|+\limsup_{t\to\infty} \sum_{w'\in S'} |\M_{w,w'}|\cdot P^{(t)}_{w'}\\
&=\N_{w,u}+\limsup_{t\to\infty} \sum_{w'\in S'} \N_{w,w'}\cdot P^{(t)}_{w'}\\
\end{align*}
Applying this repeatedly gives us that
\[\limsup_{t\to\infty} P^{(t)}_{w}\le \sum_{k=1}^\infty {\N}^k_{w,u}\]
In particular, if $v\ne u$ then
\begin{align*}
&|\mathbb{P}\left[x_v=1|X_{S}=x_{S}, X_{u}=1\right]-\mathbb{P}\left[x_v=1|X_{S}=x_{S}, X_{u}=-1\right]|\\
&=\left| \lim_{t\to\infty} \left(\mathbb{P}\left[x^{(t)}_{v}=1\right]-\mathbb{P}\left[y^{(t)}_{v}=1\right]\right)\right|\\
&\le \limsup_{t\to\infty}P^{(t)}_{v}\\
&\le  \sum_{k=1}^\infty {\N}^k_{v,u}
\end{align*}
Also, if $v=u$ then 
\[|\mathbb{P}\left[x_v=1|X_{S}=x_{S}, X_{u}=1\right]-\mathbb{P}\left[x_v=1|X_{S}=x_{S}, X_{u}=-1\right]|=1\le \sum_{k=0}^\infty {\N}^k_{v,v} .\]
\end{proof}

\begin{corollary}
Let $b>0$ and $d$ and $n$ be positive integers such that $bd<1$. Next, let $\theta\in\mathbb{R}^n$ and $\M$ be an $n\times n$ symmetric matrix such that $\M_{i,i}=0$ for all $i$, $|\M_{i,j}|\le b$ for all $i$ and $j$, and for each $i$ there are at most $d$ values of $j$ for which $\M_{i,j}\ne 0$. Next, let $S\subseteq [n]$, $x\in\{-1,1\}^n$, $v, u\not\in S$. Then
\[\sum_{u\in S: u\ne v} \left|\mathbb{P}[X_{u}=1|X_{S\backslash \{v,u\}}=x_{S\backslash \{v,u\}},X_v=1]-\mathbb{P}[X_{u}=1|X_{S\backslash \{v,u\}}=x_{S\backslash \{v,u\}},X_v=-1]\right|\le bd/(1-bd)\]
\end{corollary}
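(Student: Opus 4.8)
The plan is to bound each summand individually by invoking Theorem~\ref{correlationDecay} with a suitable conditioning set, reduce the resulting bound to one involving powers of the single matrix $|\M|$ (the matrix with entries $|\M_{i,j}|$), and then sum over $u$ using the fact that every row sum of $|\M|$ is at most $bd<1$.

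For the first step, fix $v$ together with a vertex $u\ne v$ appearing in the sum. Apply Theorem~\ref{correlationDecay} with conditioning set $S\setminus\{v,u\}$ and with the two distinguished vertices in swapped roles: $u$ is the vertex whose marginal is measured and $v$ is the vertex that is pinned to $\pm1$. Since $u,v\notin S\setminus\{v,u\}$, the hypotheses hold, so
\[
\Bigl|\mathbb{P}[X_{u}=1\mid X_{S\backslash\{v,u\}}=x_{S\backslash\{v,u\}},X_v=1]-\mathbb{P}[X_{u}=1\mid X_{S\backslash\{v,u\}}=x_{S\backslash\{v,u\}},X_v=-1]\Bigr|\le\sum_{k=0}^\infty (\N^{(u)})^k_{u,v},
\]
where $\N^{(u)}_{i,j}=|\M_{i,j}|$ for $i,j\notin S\setminus\{v,u\}$ and $\N^{(u)}_{i,j}=0$ otherwise. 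The $k=0$ term equals $I_{u,v}=0$ because $u\ne v$, and since $\N^{(u)}$ is a nonnegative matrix with all entries bounded by those of $|\M|$, monotonicity of products of nonnegative matrices gives $(\N^{(u)})^k_{u,v}\le(|\M|)^k_{u,v}$ for every $k$. Crucially this estimate is uniform in $x$, matching the form of the statement.

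Summing over $u$ and exchanging the two (nonnegative) sums,
\[
\sum_{\substack{u\in S\\ u\ne v}}\sum_{k=1}^\infty(|\M|)^k_{u,v}\le\sum_{k=1}^\infty\sum_{u\in[n]}(|\M|)^k_{u,v}=\sum_{k=1}^\infty\bigl(\text{$v$-th column sum of }|\M|^k\bigr).
\]
Because $\M$, hence $|\M|$ and each $|\M|^k$, is symmetric, the $v$-th column sum of $|\M|^k$ equals its $v$-th row sum; each row of $|\M|$ has at most $d$ nonzero entries of absolute value at most $b$, so all row sums of $|\M|$ are at most $bd$, and row sums are submultiplicative under matrix multiplication, whence every row sum of $|\M|^k$ is at most $(bd)^k$. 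The total is therefore at most $\sum_{k=1}^\infty(bd)^k=bd/(1-bd)$, which is finite precisely because $bd<1$; this is the claimed bound.

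The only point requiring attention is bookkeeping: the masked matrix $\N^{(u)}$ changes with the term $u$, and one should double-check that swapping the roles of the ``measured'' and ``pinned'' vertex in Theorem~\ref{correlationDecay} is allowed (it is, since only membership of both distinguished vertices outside the conditioning set is needed). Passing at once to the $u$-free domination $\N^{(u)}\le|\M|$ dissolves both concerns, leaving a one-line column-sum computation.
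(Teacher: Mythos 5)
Your proof is correct and follows essentially the same route as the paper: invoke Theorem~\ref{correlationDecay} with the conditioning set $S\setminus\{v,u\}$ and the masked matrix, dominate the masked matrix's powers by powers of $|\M|$, exchange the sums over $u$ and $k$, and bound each row (equivalently, by symmetry, column) sum of $|\M|^k$ by $(bd)^k$ to sum the geometric series. Your explicit handling of the vanishing $k=0$ term and of the role swap between the measured and pinned vertices is a welcome bit of extra care that the paper leaves implicit.
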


\begin{proof}
First of all, let $\N \in\mathbb{R}^{n\times n}$ such that ${\N}_{i,j}=|\M_{i,j}|$ for all $i$ and $j$. Also, for each $v,u\in S$, let $\M^{(v,u)}\in\mathbb{R}^{n\times n}$ such that $\M^{(v,u)}_{i,j}=|A_{i,j}|$ if $i,j\not\in S\backslash\{v,u\}$ and $0$ otherwise.
\begin{align*}
&\sum_{u\in S: u\ne v} \left|\mathbb{P}[X_{u}=1|X_{S\backslash \{v,u\}}=x_{S\backslash \{v,u\}},X_v=1]-\mathbb{P}[X_{u}=1|X_{S\backslash \{v,u\}}=x_{S\backslash \{v,u\}},X_v=-1]\right|\\
& \qquad \qquad \qquad \qquad \le \sum_{u\in S:u\ne v}\sum_{k=0}^\infty (\M^{(v,u)})^k_{v,u}  \le \sum_{u\in S:u\ne v}\sum_{k=0}^\infty {\N}^k_{v,u}\\
& \qquad \qquad \qquad \qquad = \sum_{u\in S:u\ne v}\sum_{k=1}^\infty {\N}^k_{v,u} = \sum_{k=1}^\infty \sum_{u\in S:u\ne v} {\N}^k_{v,u} \le \sum_{k=1}^\infty (bd)^k  \le bd/(1-bd) .
\end{align*}
\end{proof}

%\note{In the next corollary, note that $S'$ may contain vertices that are not connected to $v$ by paths of length less than $r$ in which every edge has a weight of at least $\delta$. I may need to make that more clear.}

\begin{corollary}
Let $b,r,\delta\ge 0$ and $d$ and $n$ be positive integers such that $bd<1$. Next, let $\theta\in\mathbb{R}^n$ and $A$ be an $n\times n$ symmetric matrix such that $A_{i,i}=0$ for all $i$, $|A_{i,j}|\le b$ for all $i$ and $j$, and for each $i$ there are at most $d$ values of $j$ for which $A_{i,j}\ne 0$. Next, let $S\subseteq [n]$, $v\in S$, $x\in \{-1,1\}^n$, and $G$ be the graph with adjacency matrix $A$. Now, let $S'$ be a subset of $S$ which contains as a subset the set of all vertices that are connected to $v$ in $G$ by a path of length less than $r$ in which every edge has a weight with an absolute value of at least $\delta$, except for $v$ itself. Also, let $X\sim \CMRF_{S(\M,\theta)}$. Then
\[\left|\mathbb{P}[X_v=1|X_{S\backslash \{v\}}=x_{S\backslash \{v\}}]-\mathbb{P}[X_v=1|X_{S'}=x_{S'}]\right|\le [d\delta+(bd)^r]/(1-bd)\]
\end{corollary}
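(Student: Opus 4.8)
The plan is to reduce the left-hand side to a sum of single-vertex influences of the ``far'' visible vertices on $v$, and then bound that sum using the random-walk estimate of Theorem~\ref{correlationDecay}, splitting walks according to whether they are long or traverse a low-weight edge.

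First I would set $S'' := (S\setminus\{v\})\setminus S'$ and, for $y\in\{-1,1\}^{S\setminus\{v\}}$, write $\phi(y)=\mathbb{P}[X_v=1\mid X_{S\setminus\{v\}}=y]$. The quantity $\mathbb{P}[X_v=1\mid X_{S'}=x_{S'}]$ is a convex combination of the values $\phi(y)$ over $y$ that agree with $x$ on $S'$, so the left-hand side is at most $\max_{y:\,y_{S'}=x_{S'}}\lvert\phi(x_{S\setminus\{v\}})-\phi(y)\rvert$. Since such a $y$ differs from $x_{S\setminus\{v\}}$ only on coordinates in $S''$, flipping those coordinates one at a time and applying the triangle inequality bounds this by $\sum_{u\in S''}J(u\to v)$, where $J(u\to v)$ is the influence of $u$ on $v$, namely the maximum over $w\in\{-1,1\}^n$ of $\lvert\mathbb{P}[X_v=1\mid X_{S\setminus\{v,u\}}=w_{S\setminus\{v,u\}},X_u=1]-\mathbb{P}[X_v=1\mid X_{S\setminus\{v,u\}}=w_{S\setminus\{v,u\}},X_u=-1]\rvert$.

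Next I would apply Theorem~\ref{correlationDecay} with the conditioned set taken to be $S\setminus\{v,u\}$ (so that $v,u$ lie outside it), obtaining $J(u\to v)\le\sum_{k\ge0}(\N^{(u)})^k_{v,u}$, where $\N^{(u)}_{i,j}=\lvert\M_{i,j}\rvert$ for $i,j\in S^c\cup\{v,u\}$ and is $0$ otherwise; each term is the total weight of length-$k$ walks from $v$ to $u$ in the weighted graph, and $\N^{(u)}\le\N$ entrywise for $\N_{i,j}:=\lvert\M_{i,j}\rvert$. Summing over $u\in S''$ and $k\ge0$, I would split the walks into two families. Walks of length at least $r$: for each fixed $k\ge r$ their total weight over all endpoints equals $(\N^k\mathbf 1)_v\le(bd)^k$ because every row of $\N$ has at most $d$ nonzero entries, each at most $b$, so summing over $k\ge r$ gives at most $(bd)^r/(1-bd)$. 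Walks of length less than $r$ ending in $S''$: by the defining property of $S'$ such a walk cannot consist solely of edges of weight at least $\delta$ in absolute value --- otherwise, deleting cycles would yield a path of length $<r$ from $v$ to $u$ with all edges of weight $\ge\delta$, forcing $u\in S'$, a contradiction --- so it traverses at least one edge of weight $<\delta$; cutting the walk at its first such edge into a walk of high-weight edges from $v$, followed by a single edge of weight $<\delta$, followed by an arbitrary walk ending in $S''$, and using that the high-weight part has row sums at most $bd<1$, that the low-weight step contributes a factor at most $d\delta$, and that the tail sums geometrically, bounds this family by $O\!\left(d\delta/(1-bd)\right)$. Adding the two contributions gives a bound of the stated order; a more careful accounting of the constants recovers $[d\delta+(bd)^r]/(1-bd)$.

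The step I expect to be the main obstacle is the last one. The dominating matrices $\N^{(u)}$ depend on $u$ and the walks contributing to them are confined to the censored vertices together with the two endpoints, so one must choose a single dominating matrix carefully while still being able to invoke the ``no short high-weight path to $u$'' property of $S'$; and one must organize the ``first low-weight edge'' decomposition so that it meshes cleanly with the length-$\ge r$ versus length-$<r$ split. By contrast, the first reduction is routine once one notices that conditioning on less information produces a convex average, and the remaining estimates are geometric-series bounds of exactly the kind used in the proof of Corollary~\ref{cor1}.
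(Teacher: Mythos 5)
Your proposal is correct and follows essentially the same route as the paper: reduce to a sum of single-vertex influences via coordinate-by-coordinate flipping and the convex-combination observation, invoke Theorem~\ref{correlationDecay}, dominate by the full matrix $\N$ with $\N_{i,j}=|\M_{i,j}|$, and split walks by length, using that every walk of length $<r$ landing outside $S'$ must traverse a low-weight edge. The only difference is that the paper avoids your ``first low-weight edge'' decomposition entirely by bounding each short walk's weight by $\delta b^{k-1}$ and the number of length-$k$ walks from $v$ by $d^k$, which immediately yields $\sum_{k<r} d^k\delta b^{k-1}\le d\delta/(1-bd)$ with no further constant-chasing.
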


\begin{proof}
Let $\N \in\mathbb{R}^{n\times n}$ such that $\N_{i,j} =|\M_{i,j}|$ for all $i$ and $j$. Next, for each $u\in S$, let $\M^{(u)}\in\mathbb{R}^{n\times n}$ such that $\M^{(u)}_{i,j}$ is $|\M_{i,j}|$ if $i,j\not\in S\backslash\{v,u\}$ and $0$ otherwise. Now, let $x'\in\{-1,1\}^n$ such that $x'_{u}=x_{u}$ for all $u\in S'$. Next, for each $0\le u\le n$, let $x^{(u)}\in\mathbb{R}^n$ such that 
\[x^{(u)}_{w}=
\begin{cases}
x_{w}  &\text{ if } w> u\\
x'_{w} &\text{ if } w\le u
\end{cases}
\]
Now, observe that writing $U = S \backslash (S' \cup \{v\})$, 
\begin{align*}
&\left|\mathbb{P}\left[X_v=1\middle|X_{S\backslash \{v\}}=x_{S\backslash \{v\}}\right]-\mathbb{P}\left[X_v=1\middle|X_{S\backslash \{v\}}=x'_{S\backslash \{v\}}\right]\right|\\
&=\left|\mathbb{P}\left[X_v=1\middle|X_{S\backslash \{v\}}=x^{(0)}_{S\backslash \{v\}}\right]-\mathbb{P}\left[X_v=1\middle|X_{S\backslash \{v\}}=x^{(n)}_{S\backslash \{v\}}\right]\right|\\
&\le \sum_{w=1}^n \left|\mathbb{P}\left[X_v=1\middle|X_{S\backslash \{v\}}=x^{(w-1)}_{S\backslash \{v\}}\right]-\mathbb{P}\left[X_v=1\middle|X_{S\backslash \{v\}}=x^{(w)}_{S\backslash \{v\}}\right]\right|\\
&= \sum_{w\in U} \left|\mathbb{P}\left[X_v=1\middle|X_{S\backslash \{v\}}=x^{(w-1)}_{S\backslash \{v\}}\right]-\mathbb{P}\left[X_v=1\middle|X_{S\backslash \{v\}}=x^{(w)}_{S\backslash \{v\}}\right]\right|
\end{align*}
By Theorem \ref{correlationDecay}, this is at most
\begin{align*}
\sum_{w\in U}\sum_{k=0}^{\infty} \left(\M^{(w)}\right)^k_{v,w} &\le \sum_{w\in U}\sum_{k=0}^{\infty} \left(\N \right)^k_{v,w}\\
&\le \sum_{k=0}^{\infty}\sum_{w\in U} \left(\N \right)^k_{v,w}\\
&= \sum_{k=1}^{r-1}\sum_{w\in U}\left(\N \right)^k_{v,w}+
\sum_{k=r}^{\infty}\sum_{w\in U} \left(\N \right)^k_{v,w}\\
&\le \sum_{k=1}^{r-1}d^k \delta b^{k-1}+\sum_{k=r}^{\infty} (bd)^k\\
&\le \frac{d\delta}{1-bd}+\frac{(bd)^r}{1-bd} =\frac{d\delta+(bd)^r}{1-bd}
\end{align*}

Now, observe that $\mathbb{P}[X_v=1|X_{S'}=x_{S'}]$ is a weighted average of expressions of the form $\mathbb{P}[X_v=1|X_{S'}=x_{S'}, X_U=x'']$. By the previous argument, every such probability must be within $[d\delta+(bd)^r]/(1-bd)$ of $\mathbb{P}[X_v=1|X_{S\backslash \{v\}}=x_{S\backslash \{v\}}]$, so $\mathbb{P}[X_v=1|X_{S'}=x_{S'}]$ is as well.
\end{proof}

\begin{theorem}\label{robustSampling}
Let $0<\epsilon,\beta<1$ be constants. Next, let $n$ be a positive integer and $X\in \{-1,1\}^n$ be a random variable such that for all $v\in [n]$ and $x\in\{-1,1\}^n$,
\[\sum_{u\in [n]\backslash\{v\}} \left|\mathbb{P}[X_{u}=1|X_{-\{v,u\}}=x_{-\{v,u\}},X_v=1]-\mathbb{P}[X_{u}=1|X_{-\{v,u\}}=x_{-\{v,u\}},X_v=-1]\right|\le \beta\]
Next, for each $v\in [n]$, let $f:\{-1,1\}^{n-1}\rightarrow [0,1]$ be a function such that
\[\sum_{v=1}^n \left|f_v(x_{-v})-\mathbb{P}[X_v=1|X_{-v}=x_{-v}]\right|\le \epsilon n\]
for all $x\in \{-1,1\}^{n}$. Then the probability distribution of the output of $\ApproximateMRFMCMC(f, x^{(0)},n\ln(n))$ is within an earthmover distance of $(2\epsilon/(1-\beta)+o(1))n$ of the probability distribution of $X$.
\end{theorem}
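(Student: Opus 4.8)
The plan is to compare the approximate dynamics $\tilde\Gamma$ from the statement with the \emph{exact} Glauber dynamics $\Gamma$ --- Algorithm~\ref{alg:general} run with the true conditionals $g_v(x_{-v}):=\mathbb{P}[X_v=1\mid X_{-v}=x_{-v}]$ in place of $f_v$ --- and to bound the earthmover distance by splitting, via the triangle inequality for $W$, into (a) the mixing error of $\Gamma$ and (b) the discrepancy between $\Gamma$ and $\tilde\Gamma$. Write $T=n\ln n$. The first observation is that $\mathrm{law}(X)$ is stationary for $\Gamma$: resampling coordinate $v$ from its true conditional leaves the joint law of $X$ unchanged, so a copy of $\Gamma$ started from a genuine sample of $X$ is distributed as $X$ at every step. (The hypothesis tacitly requires all these conditionals to be defined, i.e.\ $\mathrm{law}(X)$ to have full support, which is all we need.)

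\emph{(a) Mixing of $\Gamma$.} I would use path coupling. Couple two copies of $\Gamma$ that differ only in coordinate $w$ by choosing the same update vertex $v$ and the same uniform $p\in[0,1]$ at every step. If $v=w$ the copies coalesce; if $v\ne w$ they come to differ at $v$ with probability $|g_v(Y_{-v})-g_v(Z_{-v})|$, and since $Y_{-v},Z_{-v}$ differ only at $w$ this equals $\big|\mathbb{P}[X_v=1\mid X_w=1,X_{-\{v,w\}}=y]-\mathbb{P}[X_v=1\mid X_w=-1,X_{-\{v,w\}}=y]\big|$ with $y=Y_{-\{v,w\}}$. The hypothesis with $x=Y$ says precisely that the sum of these over $v\ne w$ is at most $\beta$, so the expected Hamming distance contracts to $1-(1-\beta)/n$ in one step; Bubley--Dyer path coupling extends this to all pairs of start states, and starting one copy from $x^{(0)}$ and the other from a sample of $X$ gives $W(\mathrm{law}(\Gamma^{(T)}),\mathrm{law}(X))\le n(1-(1-\beta)/n)^{T}\le n^{\beta}=o(n)$.

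\emph{(b) Comparing $\Gamma$ and $\tilde\Gamma$.} Run both from the common start $x^{(0)}$ with the same update vertices and uniforms, and set $D^{(t)}=\mathbb{E}[d_H(\Gamma^{(t)},\tilde\Gamma^{(t)})]$, so $D^{(0)}=0$. If at step $t$ the chains sit at $Y,Z$ with disagreement set $A$ and vertex $v$ is updated, they disagree at $v$ afterwards with probability $|g_v(Y_{-v})-f_v(Z_{-v})|\le|g_v(Y_{-v})-g_v(Z_{-v})|+|g_v(Z_{-v})-f_v(Z_{-v})|$. Summing over $v$: the terms $|g_v(Z_{-v})-f_v(Z_{-v})|$ total at most $\epsilon n$ by the hypothesis at configuration $Z$; and $\sum_v|g_v(Y_{-v})-g_v(Z_{-v})|\le|A|\beta$ by telescoping --- interpolate $Z=W_0,\dots,W_{|A|}=Y$ flipping one coordinate of $A$ at a time, so that each increment $|g_v((W_j)_{-v})-g_v((W_{j-1})_{-v})|$ is a single pairwise influence term, and for each fixed $j$ the sum over $v$ is at most $\beta$ by the hypothesis at $x=W_{j-1}$. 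Combined with the $-|A|$ from the coordinates of $A$ being resampled, $\mathbb{E}[\Delta d_H\mid Y,Z]\le\frac1n\big(-(1-\beta)|A|+\epsilon n\big)$, hence $D^{(t+1)}\le(1-(1-\beta)/n)D^{(t)}+\epsilon$ and so $D^{(t)}\le\epsilon n/(1-\beta)$ for all $t$. Therefore $W(\mathrm{law}(\tilde\Gamma^{(T)}),\mathrm{law}(\Gamma^{(T)}))\le\epsilon n/(1-\beta)$, and combining with (a) gives $W(\mathrm{law}(\tilde\Gamma^{(T)}),\mathrm{law}(X))\le(\epsilon/(1-\beta)+o(1))n$, which is within the stated $(2\epsilon/(1-\beta)+o(1))n$.

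The step I expect to need the most care is the telescoping bound on $\sum_v|g_v(Y_{-v})-g_v(Z_{-v})|$: the hypothesis controls only, for each fixed vertex and each fixed configuration, the \emph{outgoing} influence summed over recipients, so one must flip the disagreeing coordinates one at a time and invoke the hypothesis at the appropriate intermediate configuration at each flip --- one cannot simply bound a worst-case total influence of each coordinate, since the hypothesis does not supply one. The remaining ingredients (that $d_H$ upper-bounds $W$ under any coupling, that the linear recursion in (b) has supremum $\epsilon n/(1-\beta)$, and that $n^{\beta-1}=o(1)$ since $\beta<1$ is a constant) are routine.
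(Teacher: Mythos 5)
Your proposal is correct and follows essentially the same route as the paper's proof: a three-chain coupling (exact dynamics started from stationarity, exact dynamics from the fixed start, approximate dynamics from the fixed start) with the same update vertices and uniforms, contraction of the expected Hamming distance under the Dobrushin-type hypothesis via telescoping over the disagreement set, and a linear recursion absorbing the $\epsilon n$ conditional-probability error. The only differences are cosmetic: you phrase the mixing step as Bubley--Dyer path coupling where the paper couples the two exact chains directly, and you track Hamming distance rather than $\ell_1$ distance, which is why your final constant $\epsilon/(1-\beta)$ is a factor of $2$ sharper than (but consistent with) the stated bound.
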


\begin{proof}
First, let $x^{(0)}=y^{(0)}=\{1\}^n$ and $z^{ (0)}$ be a random sample from the probability distribution of $X$. Also, let $f^{\star}_v(x)=\mathbb{P}[X_v=1|X_{-v}=x]$ for all $v\in [n]$ and $x\in \{-1,1\}^{n-1}$. The probability distribution of the output of  $\ApproximateMRFMCMC(f^\star, z^{(0)},n\ln(n))$ is the same as the probability distribution of $X$ because the probability distribution of $z^{ (0)}$ is the same as the probability distribution of $X$, and picking a random vertex and drawing a new value for it from its probability distribution given the values of the other vertices has no effect on the probability distribution of $x$. We plan to establish a coupling to show that the probability distributions of the outputs of $\ApproximateMRFMCMC(f, x^{(0)},n\ln(n))$, \\$\ApproximateMRFMCMC(f^\star, y^{(0)},n\ln(n))$, and \\$ApproximateMRFMCMC(f^\star, z^{(0)},n\ln(n))$ are nearly the same, thus showing that they are all aproximately equal to the probability distribution of $X$.

In order to do that, we start by randomly selecting $v^{(t)}\in [n]$ and $p^{(t)}\in [0,1]$ for each $t\ge 0$. Next, for each $t>0$, let $x^{(t)}$, $y^{(t)}$, and $z^{(t)}$ be the outputs of $\ApproximateMRFMCMC(f, x^{(0)},t)$, $\ApproximateMRFMCMC(f^\star, y^{ (0)},t)$, and $\ApproximateMRFMCMC(f^\star, z^{(0)},t)$ respectively if the algorithms select $v^{(t')}$ and $p^{(t')}$ in step $t'$ for each $0\le t'<t$. Now, observe that for each $t>0$,
\begin{align*}
&E[||z^{(t)}-y^{ (t)}||_1]\\
&\le \frac{n-1}{n}E[||z^{(t-1)}-y^{ (t-1)}||_1]+\frac{1}{n}\sum_{v=1}^n 2 E\left[\left|\mathbb{P}\left[X_v=1|X_{-v}=z^{(t-1)}_{-v}\right]-\mathbb{P}\left[X_v=1|X_{-v}=y^{ (t-1)}_{-v}\right]\right|\right]\\
&\le \frac{n-1}{n}E[||z^{(t-1)}-y^{ (t-1)}||_1]+\frac{\beta}{n}E[||z^{(t-1)}-y^{ (t-1)}||_1]\\
\end{align*}
So, $E[||z^{ (t)}-y^{ (t)}||_1]\le e^{-(1-\beta)t/n}\cdot E[||z^{ (0)}-y^{ (0)}||_1]\le 2n e^{-(1-\beta)t/n}$ for all $t$.
Similarly, for all $t>0$,

\begin{align*}
&E[||x^{(t)}-y^{(t)}||_1]\\
&\le \frac{n-1}{n}E[||x^{(t-1)}-y^{ (t-1)}||_1]+\frac{1}{n}\sum_{v=1}^n 2 E\left[\left|f_v\left (x^{(t-1)}_{-v}\right)-\mathbb{P}\left[X_v=1|X_{-v}=y^{ (t-1)}_{-v}\right]\right|\right]\\
&\le \frac{n+\beta-1}{n}E[||x^{(t-1)}-y^{ (t-1)}||_1]+ \frac{1}{n}\sum_{v=1}^n 2 E\left[\left|f_v\left (x^{(t-1)}_{-v}\right)-\mathbb{P}\left[X_v=1|X_{-v}=x^{(t-1)}_{-v}\right]\right|\right]\\
&\le \frac{n+\beta-1}{n}E[||x^{(t-1)}-y^{ (t-1)}||_1]+ 2\epsilon\\
\end{align*}

We already know that $||x^{(0)}-y^{ (0)}||_1=0$, so by induction on $t$, it must be the case that $E[||x^{(t)}-y^{ (t)}||_1]\le 2\epsilon n/(1-\beta)$ for all $t$. In particular, if we set $t=\lceil n\ln(n)\rceil$ then we have that
\begin{align*}
&E[||x^{(t)}-z^{ (t)}||_1]\\
&\le E[||x^{(t)}-y^{ (t)}||_1]+E[||y^{(t)}-z^{ (t)}||_1]\\
&\le 2\epsilon n/(1-\beta)+ 2n^{\beta}\\
&=(2\epsilon/(1-\beta)+o(1))n
\end{align*}
as desired. 
\end{proof}

In particular, both standard high temperature MRFs and high temperature CMRFs satisfy the conditions for this to apply. So, given $X$ drawn from one of these models, if we can learn to estimate the probability distributions of most of the vertices given the values of the other vertices, then we can get a reasonable approximation of the overall probability distribution of $X$.

\section{Encoding Circuits with Ising Models}
For completeness we recall the statement of Lemma~\ref{lem:circuit_encoding}.
\begin{lemma} 
Let $A_n$ be an efficient randomized algorithm that samples from some probability distribution on $\{-1,1\}^n$. Then there exists a series of CMRFs, $M_n$, such that $M_n$ has $n$ visible vertices, a total number of vertices polynomial in $n$, and a probability distribution that is within a total variation distance of $O(e^{-n})$ from the probability distribution of the output of $A_n$.
\end{lemma}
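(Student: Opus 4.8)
The plan is to realize $A_n$ as an Ising model via the standard ``circuits as Ising models'' construction, taking all interaction strengths polynomially large so that the Gibbs measure concentrates on configurations encoding legitimate computation transcripts. First I would replace the randomized algorithm $A_n$ by a Boolean circuit: since $A_n$ runs in time $\mathrm{poly}(n)$, there is a circuit $C_n$ of size $p(n)=\mathrm{poly}(n)$ built from NAND gates and fan-out (copy) nodes, with $q(n)=\mathrm{poly}(n)$ unbiased input wires and $n$ output wires, such that $C_n$ applied to uniform input bits has exactly the output distribution of $A_n$. The target is then a CMRF whose visible vertices are the output wires and whose distribution approximates the law of $C_n(U_{q(n)})$.

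Next I would introduce one $\pm1$ vertex per wire of $C_n$, give each input wire bias $0$, and attach to each NAND gate and each copy node a constant-size gadget made of pairwise $\M$-interactions together with $O(1)$ auxiliary vertices, all scaled by a parameter $w=w(n)>0$ to be fixed later. Using the universality of pairwise Ising interactions I can choose these gadgets so that: (i) after a global energy shift, every assignment of the gadget's wires and auxiliary vertices that is consistent with the gate's truth table has energy $0$; (ii) every inconsistent assignment has energy at least $w$; and (iii) the gadgets are \emph{balanced}, meaning that for each fixed consistent assignment of a gadget's boundary (input/output) wires, the partition function over that gadget's auxiliary vertices is one and the same number, independent of which consistent boundary assignment was chosen (arranged by picking symmetric gadgets, or by adjoining a mirror copy of each auxiliary vertex). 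Summing all gadget matrices yields $\M$ and $\theta$; let $S$ be the set of output-wire vertices and censor everything else. The total number of vertices is $\mathrm{poly}(n)$, so $M_n := \CMRF_{S,(\M,\theta)}$ is a CMRF of the required size.

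Then I would carry out the error analysis. Call a full assignment (to all wire and auxiliary vertices) \emph{valid} if every gadget is consistent. Valid assignments are in bijection with a choice of input bits together with a choice of auxiliary values at each gadget; by balancedness, for each setting of the input bits the total Boltzmann weight summed over auxiliary values is the same, so $\MRF_{(\M,\theta)}$ conditioned on validity has output-wire marginal exactly equal to the law of $C_n(U_{q(n)})$. It remains to bound $\mathbb{P}[\text{invalid}]$. Normalizing so that valid assignments have Boltzmann weight $1$, the partition function is at least the total weight of the valid assignments, which is at least $2^{q(n)}$, while each invalid assignment violates some gadget and hence has weight at most $e^{-w}$; since there are at most $2^V$ assignments in all, with $V=\mathrm{poly}(n)$ the number of vertices, $\mathbb{P}[\text{invalid}]\le 2^{V}e^{-w}$. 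Choosing $w=V+2n$ (still polynomial, which is permitted since a general CMRF places no bound on its weights) makes this at most $e^{-2n}$. Since conditioning on validity moves at most $\mathbb{P}[\text{invalid}]$ of the mass, $TV\big(M_n,\ \text{law of }A_n\big)\le 2\,\mathbb{P}[\text{invalid}]=O(e^{-n})$, as claimed.

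The step I expect to require the most care is arranging the balancedness in (iii): without it the conditional output marginal would be a reweighting of the circuit's distribution rather than the distribution itself, so one must either exhibit explicitly symmetric NAND and copy gadgets (reducing each gate constraint to pairwise terms plus $O(1)$ auxiliary spins whose auxiliary partition function is the same for every consistent boundary) or, alternatively, track the reweighting and absorb it into the $O(e^{-n})$ error term; the former is cleaner. Everything else — the $\mathrm{poly}(n)$-size circuit simulation of $A_n$, the assembly of $\M$ and $\theta$, and the partition-function union bound — is routine, and is essentially the argument of~\cite{BoMoVa:08} adapted from the hard-core model to the Ising model.
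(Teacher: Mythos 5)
Your proposal is correct and reaches the same destination by the same basic route (compile $A_n$ to a NAND circuit, realize each gate by a strongly-weighted pairwise Ising gadget, censor everything except the output wires), but the error analysis is genuinely different from the paper's. The paper proves a \emph{fusion lemma}: attaching one three-vertex NAND gadget $J_\delta$ to an existing MRF at two of its vertices perturbs the law of the old vertices by at most $5e^{-4\delta}$ in total variation and makes the new vertex compute the NAND correctly with probability $1-5e^{-4\delta}$; the final bound comes from applying this $\mathrm{poly}(n)$ times with $\delta=2n$ and summing the errors. The multiplicative partition-function bookkeeping there rests on the gadget's two-input marginal being only \emph{approximately} uniform, so no exact balancedness is needed. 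You instead run a global ground-state argument: condition on the event that every gadget is consistent, observe that by balancedness the conditional output law is \emph{exactly} $C_n(U_{q(n)})$, and kill the invalid mass with an energy-gap union bound over all $2^{V}$ configurations. Your approach is cleaner in that it yields the exact circuit distribution up to a single conditioning step, at the price of needing property (iii); the paper's approach tolerates unbalanced gadgets at the price of tracking accumulated TV error gate by gate.

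Regarding the one step you flagged as delicate: it is fine, and in fact the paper's own gadget already supplies it. The three-vertex graph $J_\delta$ (edge $-\delta$ between $v_1,v_2$, edges $-2\delta$ to $v_3$, biases $\delta,\delta,2\delta$) assigns Boltzmann weight exactly $e^{3\delta}$ to each of the four NAND-consistent triples and weight at most $e^{-\delta}$ to each inconsistent one, so after a global energy shift all consistent assignments sit at energy $0$ and all inconsistent ones at energy at least $4\delta$ --- with \emph{no} auxiliary vertices, so balancedness is automatic. Fan-out likewise needs no copy gadget (reuse the wire vertex as input to several gates), or, if you prefer explicit copy nodes, a single ferromagnetic edge of weight $w$ is already balanced. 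With that substitution your argument closes with nothing left to check.
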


 Our first step is to construct a NAND gadget, such as the following.

\begin{definition}
For any $\delta>0$, let $J_\delta$ be the weighted graph defined as follows. $J_\delta$ has $3$ vertices, $v_1$, $v_2$, and $v_3$. There is an edge of weight $-\delta$ between $v_1$ and $v_2$ and edges of weight $-2\delta$ between the other pairs of edges. When using this structure in an MRF, we will also give $v_1$ and $v_2$ biases of $\delta$ and $v_3$ a bias of $2\delta$.
\end{definition}
This acts as a NAND gadget in the following sense.

\begin{lemma}
Let $\delta>0$ and $X$ be drawn from the MRF corresponding to $J_\delta$. Then $(X_1,X_2)$ takes on each possible pair of values with a probability in $[1/4-e^{-4\delta},1/4+e^{-4\delta}]$ and $\mathbb{P}\left[X_3\ne \left(X_1 \text{ AND } X_2\right)\right]\ge 1-e^{-4\delta}$.
\end{lemma}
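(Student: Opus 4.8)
The plan is to compute the relevant marginals of the $3$-vertex Ising model directly. Write $\delta>0$, and recall that $J_\delta$ has edge weights $\M_{1,2}=-\delta$, $\M_{1,3}=\M_{2,3}=-2\delta$, and biases $\theta_1=\theta_2=\delta$, $\theta_3=2\delta$. For each $x\in\{-1,1\}^3$ the unnormalized weight is $\exp(\theta\cdot x+\tfrac12 x\cdot \M x)=\exp(\delta x_1+\delta x_2+2\delta x_3-\delta x_1x_2-2\delta x_1x_3-2\delta x_2x_3)$. First I would simply tabulate these eight numbers. The key observation is that the exponent is designed so that the configuration $x_1=x_2=1$, $x_3=-1$ is the unique minimizer of the energy (all of $\delta x_1,\delta x_2,2\delta x_3$ and $-\delta x_1x_2,-2\delta x_1x_3,-2\delta x_2x_3$ line up), giving unnormalized weight $e^{7\delta}$, while every other configuration has exponent bounded by a small constant times $\delta$ — in fact one checks each of the other seven configurations has exponent equal to $-\delta$ or $3\delta$ (the $x_3=1$-versions) or similar, all $O(\delta)$. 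So the partition function is $Z=e^{7\delta}+O(e^{3\delta})$, dominated by the one ``forbidden'' corner.

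From this tabulation both claims follow. For the statement $\mathbb{P}[X_3\ne (X_1\text{ AND }X_2)]\ge 1-e^{-4\delta}$: the only configuration on which $X_3=(X_1\text{ AND }X_2)$ fails to hold is... wait — actually it is the reverse: $X_3\ne(X_1\text{ AND }X_2)$ fails exactly on the configurations where $X_3=(X_1\text{ AND }X_2)$, and since $X_1\text{ AND }X_2$ (in $\pm1$ encoding, with $1$ meaning ``true'') equals $1$ iff $x_1=x_2=1$, the ``bad'' event $\{X_3=(X_1\text{ AND }X_2)\}$ consists of the three configurations $(1,1,1)$, $(1,-1,-1)$, $(-1,1,-1)$, $(-1,-1,-1)$ — i.e. NAND is violated when $x_3$ agrees with the AND. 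Each of these has unnormalized weight $e^{c\delta}$ with $c\le 3$, so their total probability is at most $4e^{3\delta}/Z \le 4e^{3\delta}/e^{7\delta}=4e^{-4\delta}$. I would absorb the constant $4$ into the exponent (valid for $\delta$ not too small, and trivially true otherwise since probabilities are at most $1$), or more cleanly re-examine the constants so the bound comes out as stated; the cleanest route is to note $Z\ge e^{7\delta}$ and the bad weight is at most (number of bad configs)$\cdot e^{3\delta}$, then verify $e^{-4\delta}$ times a small constant is what the lemma wants and adjust the write-up of the constants accordingly.

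For the marginal of $(X_1,X_2)$: summing over $x_3\in\{-1,1\}$, the unnormalized weight of $(x_1,x_2)$ is $e^{\delta x_1+\delta x_2-\delta x_1x_2}(e^{2\delta-2\delta x_1-2\delta x_2}+e^{-2\delta+2\delta x_1+2\delta x_2})=2e^{\delta x_1+\delta x_2-\delta x_1 x_2}\cosh(2\delta(1-x_1-x_2))$. Evaluating at the four values of $(x_1,x_2)$ and dividing by $Z$ (their sum), I would show each of the four probabilities lies in $[1/4-e^{-4\delta},\,1/4+e^{-4\delta}]$. Concretely, three of the four corners give the same value and the corner $(1,1)$ is slightly larger; a short computation with $\cosh$ bounds (using $\cosh t = 1+O(t^2)$ and the fact that all exponents differ from a common value by $O(\delta)$) shows the spread from $1/4$ is $O(e^{-4\delta})$. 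I would again reconcile the constant: the honest bound from the computation will be $C e^{-4\delta}$ for an explicit small $C$, and either I tighten the estimate or restate with the constant.

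The main obstacle is purely bookkeeping: making sure the eight exponents are tabulated correctly (sign errors here are easy) and then checking that the constants that fall out of ``$Z\ge e^{7\delta}$, bad weight $\le 4e^{3\delta}$'' and out of the $\cosh$ expansion genuinely collapse to the clean ``$e^{-4\delta}$'' in the statement — i.e. that the lemma's constants are the ones the gadget actually delivers, rather than something slightly worse that needs a constant factor. There is no conceptual difficulty; the only care needed is in the arithmetic of the partition function and in not losing a factor when passing from ``four configurations each of weight $\le e^{3\delta}$'' to the stated bound.
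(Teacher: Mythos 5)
Your overall plan --- tabulate the eight unnormalized weights $\exp(\delta x_1+\delta x_2+2\delta x_3-\delta x_1x_2-2\delta x_1x_3-2\delta x_2x_3)$ and read off both claims --- is exactly the paper's proof. But the tabulation you sketch is wrong in a way that is not mere bookkeeping. The configuration $(1,1,-1)$ has exponent $\delta+\delta-2\delta-\delta+2\delta+2\delta=3\delta$, not $7\delta$ (the terms do not all ``line up'': $2\delta x_3=-2\delta$ and $-\delta x_1x_2=-\delta$ are negative there), and it is not the unique maximizer. The correct table is: weight $e^{3\delta}$ on the four configurations $(1,1,-1),(1,-1,1),(-1,1,1),(-1,-1,1)$ --- precisely those with $x_3=\mathrm{NAND}(x_1,x_2)$ --- weight $e^{-\delta}$ on $(1,1,1),(1,-1,-1),(-1,1,-1)$, and weight $e^{-9\delta}$ on $(-1,-1,-1)$, so $Z=4e^{3\delta}+3e^{-\delta}+e^{-9\delta}$. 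Your picture of a single dominant corner at $e^{7\delta}$ is internally inconsistent with the very lemma you are proving: if $(1,1,-1)$ carried almost all the mass, the marginal of $(X_1,X_2)$ would concentrate on $(1,1)$ rather than being within $e^{-4\delta}$ of uniform. The whole point of the gadget is that there is a four-fold tie among the NAND-satisfying configurations, one for each value of $(x_1,x_2)$.

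The constants also do not need to be ``absorbed'' once the table is right, and your proposed fallback does not rescue them. The bad configurations have total weight $3e^{-\delta}+e^{-9\delta}\le 4e^{-\delta}$ (not $4e^{3\delta}$), and $Z\ge 4e^{3\delta}$, so
\[
\mathbb{P}\left[X_3=\left(X_1\text{ AND }X_2\right)\right]\le \frac{4e^{-\delta}}{4e^{3\delta}}=e^{-4\delta}
\]
exactly as stated, for every $\delta>0$. Your bound $4e^{-4\delta}$ exceeds $1$ for all $\delta\le(\ln 4)/4$ and is never $\le e^{-4\delta}$, so neither ``absorbing the $4$'' nor ``trivially true for small $\delta$'' closes the gap. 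Similarly, for the marginals the odd corner out is $(-1,-1)$ (weight $e^{3\delta}+e^{-9\delta}$, slightly below $Z/4$), not $(1,1)$; the three others equal $e^{3\delta}+e^{-\delta}$ and exceed $Z/4$ by $(e^{-\delta}-e^{-9\delta})/4\le e^{-\delta}$, which against $Z\ge 4e^{3\delta}$ gives a deviation of at most $e^{-4\delta}$ directly, with no $\cosh$ expansion (whose argument $2\delta$ or $6\delta$ is not small anyway). In short: right method, but the energy landscape you computed is qualitatively wrong, and the correct one is what makes both halves of the lemma come out with the clean constant.
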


\begin{proof}
Let $Z=4e^{3\delta}+3e^{-\delta}+e^{-9\delta}$. One can easily check that
\begin{align*}
\mathbb{P}[X=(-1,-1,-1)]&=e^{-9\delta}/Z\\
\mathbb{P}[X=(1,-1,-1)]&=e^{-\delta}/Z\\
\mathbb{P}[X=(-1,1,-1)]&=e^{-\delta}/Z\\
\mathbb{P}[X=(1,1,-1)]&=e^{3\delta}/Z\\
\mathbb{P}[X=(-1,-1,1)]&=e^{3\delta}/Z\\
\mathbb{P}[X=(1,-1,1)]&=e^{3\delta}/Z\\
\mathbb{P}[X=(-1,1,1)]&=e^{3\delta}/Z\\
\mathbb{P}[X=(1,1,1)]&=e^{-\delta}/Z
\end{align*}
The desired conclusion follows immediately.
\end{proof}

In particular, we can fuse $J_\delta$ with an existing MRF in order to add a vertex that takes on the value corresponding to the NAND of two other vertices with high probability. More formally, we have the following.

\begin{lemma}
Let $\delta>1$, $M$ be an MRF and $v$ and $u$ be two of its vertices. Next, let $M'$ be the MRF formed by taking $M$, and making the following changes. The biases of $v$ and $u$ are increased by $\delta$. If there was no edge between $v$ and $u$ in $M$ then there is an edge of weight $-\delta$ between them in $M'$, while if there was an edge between them it has a weight that is $\delta$ smaller in $M'$ than in $M$. Finally, $M'$ has a new vertex $v^\star$ which has a bias of $2\delta$, is connected to $v$ and $u$ by edges of weight $-2\delta$, and has no other edges. Now, let $X\sim M$ and $X'\sim M'$. Then the total variation distance between the probability distribution of $X$ and the probability distribution of the restriction of $X'$ to the vertices that $M$ has is at most $5e^{-4\delta}$ and $\mathbb{P}\left[X'_{v^\star}\ne \left(X'_v \text{ AND } X'_{u}\right)\right]\ge 1-5e^{-4\delta}$
\end{lemma}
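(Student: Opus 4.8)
The plan is to recognize $M'$ as the result of ``fusing'' the NAND gadget $J_\delta$ into $M$: identify the vertices $v_1,v_2,v_3$ of $J_\delta$ with $v$, $u$, and a fresh vertex $v^\star$, and add $J_\delta$'s edge weights and biases to those already present in $M$. Under this identification the bias of $v$ (resp.\ $u$) increases by the $+\delta$ bias $J_\delta$ places on $v_1$ (resp.\ $v_2$), the $v$--$u$ interaction decreases by the $-\delta$ edge of $J_\delta$ (creating such an edge if none was present), and $v^\star=v_3$ acquires bias $2\delta$ together with weight-$(-2\delta)$ edges to $v$ and $u$ and no others --- exactly the description of $M'$. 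Hence every configuration $(x,x^\star)$ of the vertices of $M'$ has unnormalized Gibbs weight $w(x)\cdot g(x,x^\star)$, where $w(x)$ is the unnormalized weight of $x$ in $M$ and $g(x,x^\star)=\exp\!\big(\delta x_v+\delta x_u-\delta x_v x_u+2\delta x^\star(1-x_v-x_u)\big)$ collects the gadget's contribution.

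First I would control the marginal of $X'$ on the vertices of $M$. Summing $g$ over $x^\star\in\{-1,1\}$ gives
\[
h(x):=\sum_{x^\star\in\{-1,1\}}g(x,x^\star)=e^{\delta(x_v+x_u-x_v x_u)}\cdot 2\cosh\!\big(2\delta(1-x_v-x_u)\big),
\]
and a quick check of the four values of $(x_v,x_u)$ shows $h(x)=e^{3\delta}+e^{-\delta}$ unless $x_v=x_u=-1$, in which case $h(x)=e^{3\delta}+e^{-9\delta}$; either way $h(x)\in[e^{3\delta},\,e^{3\delta}(1+e^{-4\delta})]$. Writing $r(x)=h(x)e^{-3\delta}\in[1,1+e^{-4\delta}]$, the partition functions satisfy $Z_{M'}=e^{3\delta}Z_{M}\,\mathbb{E}_{M}[r(X)]$, so that
\[
\mathbb{P}_{M'}\big[X'_{\mathrm{old}}=x\big]=\mathbb{P}_{M}[X=x]\cdot\frac{r(x)}{\mathbb{E}_{M}[r(X)]} .
\]
Since $\mathbb{E}_{M}[r(X)]$ also lies in $[1,1+e^{-4\delta}]$, the multiplicative factor is within $e^{-4\delta}$ of $1$, hence $\sum_x\big|\mathbb{P}_{M'}[X'_{\mathrm{old}}=x]-\mathbb{P}_{M}[X=x]\big|\le e^{-4\delta}$, so the total variation distance is at most $e^{-4\delta}/2$, comfortably below $5e^{-4\delta}$.

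Next I would verify the NAND property of $v^\star$. Conditioned on $X'_v=x_v$ and $X'_u=x_u$, the vertex $v^\star$ is independent of every other vertex of $M'$ and $\mathbb{P}[X'_{v^\star}=x^\star\mid x_v,x_u]\propto e^{2\delta x^\star a}$ with $a=1-x_v-x_u$. Since $a=-1$ when $x_v=x_u=1$ and $a\in\{1,3\}$ in all other cases, we have $|a|\ge 1$ always, the most likely value of $X'_{v^\star}$ is $\sgn(a)$, and in each case $\sgn(a)$ equals the negation of $(x_v \text{ AND } x_u)$. Its probability is $\tfrac{e^{2\delta|a|}}{e^{2\delta|a|}+e^{-2\delta|a|}}\ge\tfrac{1}{1+e^{-4\delta}}\ge 1-e^{-4\delta}$. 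Thus $\mathbb{P}[X'_{v^\star}\ne(X'_v \text{ AND } X'_u)\mid X'_v,X'_u]\ge 1-e^{-4\delta}$, and averaging over $(X'_v,X'_u)$ gives the second claim.

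The only mildly delicate point is bookkeeping --- keeping the sign patterns straight in the two case analyses (for $h(x)$ and for the conditional law of $v^\star$) and remembering which Boolean value $\pm1$ represents ``true''. Everything else is elementary, and the stated constant $5$ (and the hypothesis $\delta>1$, which serves only to make $5e^{-4\delta}<1$) leaves ample slack; indeed the argument delivers total variation at most $e^{-4\delta}/2$ and failure probability at most $e^{-4\delta}$.
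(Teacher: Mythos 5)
Your proof is correct and follows essentially the same route as the paper's: both exploit the fact that the Gibbs weights of $M'$ factor as the weight in $M$ times the contribution of the $J_\delta$ gadget on $(v,u,v^\star)$, then marginalize out $v^\star$ for the total variation bound and use the conditional law of $v^\star$ given $(X'_v,X'_u)$ for the NAND property. Your execution is in fact slightly sharper --- by computing $h(x)$ and the conditional distribution of $v^\star$ explicitly rather than passing through the partition-function ratio $Z'/(ZZ'')$ and the lower bound of $1/5$ on the gadget's pair marginals, you obtain $e^{-4\delta}/2$ and $e^{-4\delta}$ in place of the paper's $5e^{-4\delta}$, which comfortably implies the stated bounds.
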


%\note{Also maybe consider different letters instead of $v,u,v^{\ast}$ etc.}

\begin{proof}
First, observe that the biases and edge weights of $M'$ are the sum of the biases and edge weights in $M$ with the biases and edge weights of a copy of $J_\delta$ defined on the vertices $(v,u,v^\star)$. Now, let $X''$ be drawn from that copy of $J_\delta$. Next, let $Z$, $Z'$ and $Z''$ be the partition functions of $M$, $M'$, and $J_\delta$ respectively. It must be the case that for any possible value of $X'$, $x'$, 
\[Z'\cdot \mathbb{P}[X'=x']=\left (Z\cdot \mathbb{P}[X=x'_{-v^\star}]\right)\cdot \left(Z''\cdot \mathbb{P}[X''=x'_{(v,u,v^\star)}]\right)\]
That implies that for any $x$ it will be the case that
\begin{align*}
\mathbb{P}[X'_{-v^\star}=x]&=\frac{Z\cdot Z''}{Z'} \mathbb{P}[X=x]\cdot \left( \mathbb{P}[X''=(x_v,x_{u},1)]+\mathbb{P}[X''=(x_v,x_{u},-1)]\right)\\
&=\frac{Z\cdot Z''}{Z'} \mathbb{P}[X=x]\cdot \mathbb{P}[X''_{\{v,u\}}=x_{\{v,u\}}]\\
\end{align*}
and the fact that these probabilities must add up to $1$ implies that
\[\frac{Z'}{Z\cdot Z''}=\sum_{x'\in\{-1,1\}^2} \mathbb{P}[X_{\{v,u\}}=x']\cdot \mathbb{P}[X''_{\{v,u\}}=x']\]
Also, we know that $1/4-e^{-4\delta}\le  \mathbb{P}[X''_{\{v,u\}}=x_{\{v,u\}}]\le 1/4+e^{-4\delta}$ for all $x$, so $|\frac{Z'}{Z\cdot Z''}-1/4|\le e^{-4\delta}$ as well. That means that the total variation distance between the probability distributions of $X$ and $X'_{-v^\star}$ is
\begin{align*}
&\frac{1}{2}\sum_{x'\in\{-1,1\}^2} \left|\mathbb{P}[X_{\{v,u\}}=x']-\mathbb{P}[X'_{\{v,u\}}=x']\right|\\
&=\frac{1}{2}\sum_{x'\in\{-1,1\}^2} \mathbb{P}[X_{\{v,u\}}=x']\cdot \left|1-\frac{Z\cdot Z''}{Z'}\cdot \mathbb{P}[X''_{\{v,u\}}=x']\right|\\
&\le \frac{1}{2}\sum_{x'\in\{-1,1\}^2} \mathbb{P}[X_{\{v,u\}}=x']\cdot (10e^{-4\delta})\\
&\le 5e^{-4\delta}
\end{align*}
Furthermore, for any $x\in\{-1,1\}^2$, it must be the case that $\mathbb{P}[X'_{v^\star}=1 |X'_{\{v,u\}}=x]=\mathbb{P}[X''_{v^\star}=1|X''_{\{v,u\}}=x]$. We know that $\mathbb{P}\left[X''_{v^\star}= \left( X''_v \text{ AND }X''_{u}\right)\right]\le e^{-4\delta}$, and $X''_{\{v,u\}}$ takes on each possible value with probability at least $1/5$, so $\mathbb{P}[X'_{v^\star}\ne \left( X'_v \text{ AND } X'_{u}\right)]\ge 1-5e^{-4\delta}$ as desired.
\end{proof}

%\note{I am not sure what to cite here. "A set of five independent postulates for Boolean algebras, with application to logical constants" establishes an axiomization of boolean algebras using NAND but it would take additional work to argue that we can efficiently transform arbitrary circuits into circuits using NANDs. "From NAND to Tetris: Building a Modern Computer from First Principles" explicitly states that one can build any logic gate out of NANDs but it still does not comment on the efficiency of building circuits out of NANDs. Boaz Barak's {\em Introduction to Theoretical Computer Science} does state in section 13.6 that any efficient computation can be performed by a polynomial sized circuit made of NANDs, although it is a work in progress, and might change as a result.}

 It is well known that any polynomial sized circuit performing an arbitrary efficient computation can be implemented by a polynomial sized circuit consisting of NAND gates only, which implies the following.

\begin{lemma}
Let $A_n$ be an efficient randomized algorithm that samples from some probability distribution on $\{-1,1\}^n$. Then there exists a series of CMRFs, $M_n$, such that $M_n$ has $n$ visible vertices, a total number of vertices polynomial in $n$, and a probability distribution that is within a total variation distance of $O(e^{-n})$ from the probability distribution of the output of $A_n$.
\end{lemma}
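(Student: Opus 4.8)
The plan is to realize $A_n$ as a polynomial-size Boolean circuit of NAND gates driven by uniform random bits, replace each gate by the fusion gadget from the lemma above with a sufficiently large weight parameter $\delta$, and then censor every vertex except the $n$ output wires.

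First I would invoke the standard fact that an efficient randomized algorithm sampling a distribution on $\{-1,1\}^n$ can be implemented by a deterministic circuit $C_n$ built entirely of NAND gates, of size $G=G(n)=\mathrm{poly}(n)$, taking $R=\mathrm{poly}(n)$ input bits; feeding $C_n$ uniformly random inputs reproduces the output law of $A_n$ (exactly, up to the usual conventions about the randomized model, or at worst up to total variation $O(e^{-n})$ absorbed into the final bound). Identify Boolean $0,1$ with $-1,1$. Now build $M_n$ wire by wire. Start from $R$ isolated vertices of bias $0$, which are i.i.d.\ uniform and model the coins. Process the gates in topological order: given the MRF built so far, and a gate computing the NAND of two already-present wire-vertices $v,u$, apply the fusion construction of the preceding lemma — raise the biases of $v$ and $u$ by $\delta$, decrease (or create) the $v$–$u$ edge by $\delta$, and add a fresh vertex of bias $2\delta$ joined to $v$ and $u$ by edges of weight $-2\delta$. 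After all $G$ gates, censor every vertex except the $n$ output wires (Definition~\ref{CMRF}). The result has $R+G=\mathrm{poly}(n)$ vertices, $n$ of them visible, as required; note that a wire feeding several gates simply has its bias bumped several times and several incident edges merged, which is harmless.

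For the error bound I would run a hybrid/union-bound argument. By the fusion lemma, each gate insertion perturbs the joint law of the vertices already present by at most $5e^{-4\delta}$ in total variation, and makes the new vertex equal the NAND of its two inputs with probability at least $1-5e^{-4\delta}$. Pushing a given gate's guarantee through the remaining insertions (triangle inequality over the at most $G$ steps), in the final MRF the law of the input vertices is within $5Ge^{-4\delta}$ of uniform, and for each gate the event that its output vertex does not hold the correct NAND of its inputs has probability at most $5Ge^{-4\delta}$. A union bound over the $G$ gates gives probability at least $1-5G^2e^{-4\delta}$ that every gate vertex is correct, in which case the output vertices are exactly the deterministic circuit function of the input vertices. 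Hence the visible marginal of $M_n$ is within $5G^2e^{-4\delta}+5Ge^{-4\delta}\le 10G^2e^{-4\delta}$ of the law of $C_n$ on uniform coins, i.e.\ of the output of $A_n$. Choosing $\delta=\tfrac14\bigl(n+\ln(10G^2)\bigr)=\mathrm{poly}(n)$ makes this $O(e^{-n})$, completing the proof.

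The main obstacle is the bookkeeping of composed errors: each gadget fusion alters the distribution of the entire partially-built MRF, so one must carefully argue that the accumulated perturbation is only $\mathrm{poly}(n)\cdot e^{-4\delta}$ rather than, say, exponential in $G$, and only then fix $\delta$ large enough (a low-temperature choice, which is permissible here since this lemma imposes no temperature restriction) to overwhelm the polynomial blow-up. Everything else — the NAND-circuit simulation, the identification of coins with bias-$0$ vertices, and the censoring — is routine.
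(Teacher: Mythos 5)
Your proposal matches the paper's proof: both compile $A_n$ into a polynomial-size NAND circuit on uniform coins, model the coins as isolated bias-$0$ vertices, apply the NAND-fusion gadget once per gate with $\delta=\mathrm{poly}(n)$ large enough that the $\mathrm{poly}(n)$-many perturbations of size $O(e^{-4\delta})$ sum to $O(e^{-n})$, and finally censor everything but the output wires. Your hybrid/union-bound bookkeeping of the accumulated error is just a more explicit version of the paper's one-line accounting, so the argument is correct and essentially identical.
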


\begin{proof}
First, observe that there must be a polynomial sized circuit made of NANDs that takes random bits as inputs and computes the output of $A_n$ when it receives those random values. So, we can start with an MRF that has the appropriate number of independent random elements, and then apply the modification given by the previous lemma with $\delta=-2n$ for every NAND in the circuit. That requires a polynomial number of applications of the modification procedure, and each application distorts the probability distribution of the previous vertices by $O(e^{-2n})$ and adds a new vertex that computes the appropriate NAND correctly with probability $1-O(e^{-2n})$, so the probability distribution of the resulting MRF is within a total variation distance of $O(e^{-n})$ of the probability distribution of the set of values taken by the gates and inputs of said circuit. So, if we censor all vertices in this MRF except the ones corresponding to the outputs, we get a CMRF with a probability distribution that is within $O(e^{-n})$ of that produced by the algorithm.
\end{proof}

\begin{remark}
We could have given some intermediate vertices very large biases in order to essentially force them to take on certain values. If we did so, then this would prove that we can create a CMRF that essentially generates a random certificate for an NP problem instance of our choice and then performs an efficient computation of our choice on it.

Among other things, that would allow us to create a CMRF that finds the algorithm that encodes to a given value with a given encryption scheme and public key, and then uses that algorithm to generate a sample. Assuming standard cryptographic assumptions are true, that means that knowing the parameters of such a CMRF tells us essentially nothing about how it behaves.
\end{remark}

\end{document}